\def\1{\bm{1}}
\DeclareMathAlphabet{\mathsfit}{\encodingdefault}{\sfdefault}{m}{sl}
\SetMathAlphabet{\mathsfit}{bold}{\encodingdefault}{\sfdefault}{bx}{n}
\def\gA{{\mathcal{A}}}
\def\gB{{\mathcal{B}}}
\def\gC{{\mathcal{C}}}
\def\gL{{\mathcal{L}}}
\def\gV{{\mathcal{V}}}
\newcommand{\softmax}{\mathrm{softmax}}
\DeclareMathOperator*{\argmax}{arg\,max}
\DeclareMathOperator*{\argmin}{arg\,min}
\DeclareMathOperator*{\sign}{sign}
\DeclareMathOperator*{\Exp}{\mathop \mathbb{E}}
\newcommand{\ind}{\mathds{1}}  
\newcommand*{\one}{{\bm 1}}
\def\abs#1{\left| #1 \right|}
\newcommand{\inner}[2]{\left\langle #1,#2 \right\rangle}
\newcommand{\eps}{\epsilon}
\newcommand{\loss}{\ell}
\newcommand{\Floating}{\mathbb{F}}
\renewcommand{\P}{\mathsf{P}}
\newcommand{\round}{\mathsf{round}}
\newcommand{\relu}{\mathsf{relu}}
\newcommand{\transformer}{\mathsf{TF}}
\newcommand{\posencoding}{\mathsf{PE}}
\newcommand{\tokenembedding}{\mathsf{TE}}
\newcommand{\transoutput}{\mathsf{OUTPUT}}
\newcommand{\Attn}{\mathsf{ATTN}}
\newcommand{\FF}{\mathsf{FF}}
\newcommand{\NOT}{\mathsf{NOT}}
\newcommand{\AND}{\mathsf{AND}}
\newcommand{\OR}{\mathsf{OR}}
\newcommand{\MAJORITY}{\mathsf{MAJORITY}}
\newcommand{\bin}{\mathsf{bin}}
\newcommand{\sbin}{\mathsf{sbin}}
\newcommand{\Adam}{\mathsf{Adam}}
\newcommand{\base}{{\sf{base}}}
\newcommand{\hint}{{\sf{hint}}}
\renewcommand{\cot}{{\sf{cot}}}
\NewDocumentCommand{\T}{ooo}{%
    \mathsf{T}\IfNoValueF{#1}{
    	[#1%
  			\IfNoValueF{#2}{ ,#2}%
    		\IfNoValueF{#3}{ ,#3}
    	]}%
}
\NewDocumentCommand{\Cot}{oooo}{%
    \mathsf{CoT}\IfNoValueF{#1}{
    	[#1%
  			\IfNoValueF{#2}{ ,#2}%
    		\IfNoValueF{#3}{ ,#3}
    		\IfNoValueF{#4}{ ,#4}
    	]}%
}
\newcommand{\True}{\mathsf{TRUE}}
\newcommand{\False}{\mathsf{FALSE}}
\newcommand{\rds}[1]{\left[#1\right]_s}
\newcommand{\rdes}[1]{\left[#1\right]_{e,s}}
\newcommand{\sums}{\mathsf{sum}_s}
\newcommand{\sumes}{\mathsf{sum}_{e,s}}
\newcommand{\significand}{\mathrm{significand}}
\newcommand{\exponent}{\mathrm{exponent}}
\theoremstyle{definition}
\newtheorem{theorem}{Theorem}
\numberwithin{theorem}{section} 
\newtheorem{lemma}[theorem]{Lemma}
\newtheorem{definition}{Definition}
\numberwithin{definition}{section} 
\newtheorem{corollary}[theorem]{Corollary}
\renewcommand{\paragraph}[1]{\textbf{#1}}
\title{Chain of Thought Empowers Transformers to Solve Inherently Serial Problems}
\renewcommand\AB@affilsepx{, \protect\Affilfont}
\author[1,2]{Zhiyuan Li}
\author[1]{Hong Liu}
\author[3]{Denny Zhou}
\author[1]{Tengyu Ma}
\date{}
\affil[1]{Stanford University}
\affil[2]{Toyota Technological Institute at Chicago}
\affil[3]{Google}
\begin{document}
\maketitle

\begin{abstract}
Instructing the model to generate a sequence of intermediate steps, \emph{a.k.a.}, a chain of thought (CoT), is a highly effective method to improve the accuracy of large language models (LLMs) on arithmetics and symbolic reasoning tasks. However, the mechanism behind CoT remains unclear. 
This work provides a theoretical understanding of the power of CoT for decoder-only transformers through the lens of expressiveness. Conceptually, CoT empowers the model with the ability to perform inherently serial computation, which is otherwise lacking in transformers, especially when depth is low. Given input length $n$, previous works have shown that constant-depth transformers with finite precision $\mathsf{poly}(n)$ embedding size can only solve problems in $\mathsf{TC}^0$ without CoT. We first show an even tighter expressiveness upper bound for constant-depth transformers with constant-bit precision, which can only solve problems in $\mathsf{AC}^0$, a proper subset of $ \mathsf{TC}^0$. However, with $T$ steps of CoT, constant-depth transformers using constant-bit precision and $O(\log n)$ embedding size can solve any problem solvable by boolean circuits of size $T$. Empirically, enabling CoT dramatically improves the accuracy for tasks that are hard for parallel computation, including the composition of permutation groups, iterated squaring, and circuit value problems, especially for low-depth transformers.

\end{abstract}

 
\section{Introduction}\label{sec:intro}

Large Language Models (LLMs) exhibit exceptional capabilities in complex reasoning tasks such as mathematical problem-solving and code generation \citep{Chowdhery2023, anil2023palm, achiam2023gpt, FunSearch2023, trinh2024solving}, far  surpassing standard supervised machine learning techniques. The key to unlocking these advanced reasoning abilities lies in enabling LLMs to generate intermediate steps, or a chain of thought (CoT), before finalizing the final answer. This can be achieved through various methods, including training or instruction tuning a model with examples enriched with intermediate steps \citep{ling2017program, cobbe2021training, nye2021show, chung2022scaling}, or through few-shot CoT prompting \citep{reynolds2021prompt, nye2021show, wei2022chain}.

A natural explanation is that the intermediate steps provide extra information 
about the tasks and efficient approaches to solving, so 
that a model can imitate. 
However, intriguingly, the efficacy of generating thought steps extends to zero-shot CoT prompting~\citep{kojima2022large}, where LLMs are only instructed with the prompt ``let's think step by step'', and to even using incorrect reasoning steps in the few-shot examples~\citep{wang2022towards, madaan2022text}. 
These observations suggest that
the form of CoT prompting is as important as (if not more important than) its content, because merely instructing LLMs to generate the intermediate steps helps. 

This paper aims to study why the form of CoT improves the reasoning capability of LLMs. Our hypothesis is that CoT allows for performing more serial computations that a vanilla transformer cannot do without CoT. We formulate and analyze this hypothesis through the lens of expressiveness with and without CoT. We adopt the language of circuit complexity to discuss the capability of transformers. Previous works~\citep{liu2022towards,merrill2023parallelism} have shown standard decoder-only transformers (that output answers directly) are efficient parallel computers and can only express functions computable in an $O(1)$-parallel run-time with threshold circuits, $\TC^0$, a computational model that allows the $\AND$, $\OR$, $\NOT$ and $\MAJORITY$ function with multiple inputs to be computed efficiently in parallel. 
We first show a tighter upper bound (\Cref{thm:AC0_upper_bound}) for expressiveness of constant-precision transformer -- it can only express a proper subset class of $\TC^0$, $\AC^0$, where $\MAJORITY$ gates are not allowed. Our upper bound is also more realistic because it handles the rounding issue or iterative addition of floating point numbers, while most previous results essentially only work for fixed-point number addition.

We then show that transformers equipped with CoT---allowing the transformer to auto-regressively generate a sequence of intermediate tokens before answering the questions---can solve complex problems that inherently require serial computations (assuming well-known conjectures in complexity theory). Intuitively, without CoT, the number of serial computations conducted by the transformer is bounded by the depth (which is considered as a fixed constant for this work), whereas with $T$ intermediate steps, the number of serial computations possible is boosted to $T$. Note that $T$ can easily increase as the sequence length increases where the depth is a fixed number that depends on the architecture. 

Concretely, we prove that a constant-precision transformer with $T$ intermediate steps and embedding dimension logarithmic in the sequence length can express any functions computable by a circuit of size $T$ in \Cref{thm:cot_dlogn_main}. Taking $T$ to be polynomial in the sequence length, the result suggests that transformers with polynomially many intermediate steps are capable of computing all circuits in with polynomial size, $\Ppoly$, a superclass of P. \Cref{thm:cot_dlogn_main} also implies that transformers with linearly many intermediate steps can compute all regular languages, including composition of non-solvable groups, like permutation group over five elements, $S_5$, which does not belong to $\AC^0$ and is also widely conjectured to be out of $\TC^0$. As such, polynomially many CoT steps makes transformers with bounded depth and precision strictly more powerful. We define the problem class that transformers can solve with a certain amount of CoT steps formally in \Cref{defi:complexity_cot} and summarize our theoretical results in \Cref{fig:cot_diagram_constant}. Interestingly, we also show that logarithmically many CoT steps do not allow the transformer to compute functions beyond $\AC^0$. (\Cref{thm:AC0_upper_bound})

\begin{figure}[t]
\begin{center}
\vspace{-1.5cm}
\includegraphics[width=1.1\textwidth]{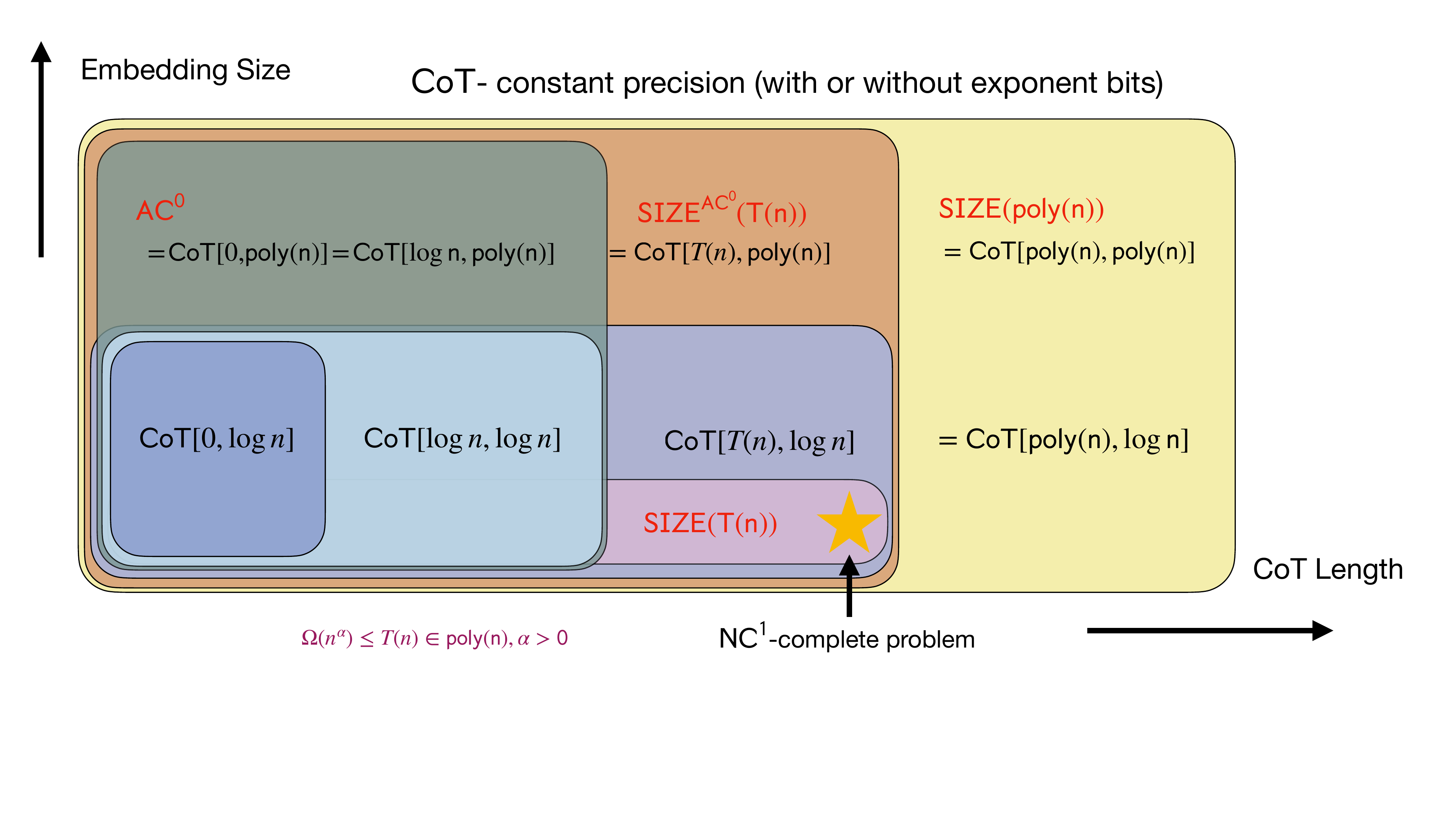}
\vspace{-2cm}
\caption{Relationship diagram between \cot complexity class with different embedding sizes $d(n)$ and CoT lengths $T(n)$. We fix the precision to be constant (the above diagram holds with or without constantly many exponent bits) and omit them in the notation for simplicity. The diagram for log precision is similar (with $\AC^0$ replaced by $\TC^0$), and is thus deferred to the appendix,~\Cref{fig:cot_diagram_log}. }\label{fig:cot_diagram_constant}
\end{center}
\end{figure}


To corroborate our theoretical analysis, we empirically evaluate the capability of transformers in solving four core problems: modular addition, permutation composition, iterated squaring, and circuit value problem. 
We learn transformers to solve these tasks with a large amount of synthetic data, with and without CoT, or with additional hint but not CoT.  The modular addition belongs to $\TC^0$, meaning it can be easily solved in parallel. \citet{liu2022transformers} shows it is solvable by constant-depth transformers with log-precision and, indeed empirically depth 1 is sufficient for the parity problem (Modulo 2 addition). The other three tasks are all conjectured to require inherently serial computations. As expected, the vanilla transformer either requires a huge depth to solve these tasks (because the depth is the upper bound on the number of serial computation by transformers), or cannot solve the tasks at all. On the other hand, CoT can solve these tasks as long as the depth exceeds a small threshold. These experiments demonstrate CoT can provide more serial computations to solve complex reasoning tasks.

\section{Notations and Preliminaries}\label{sec:prelim}

 We use $\mathbb{N}$ and $\mathbb{R}$ to denote the set of natural numbers and real numbers respectively. For any $n\in\mathbb{N}^+$, we define $[n]\triangleq \{1,2,\ldots,n\}$.  We define $\relu(x)\triangleq \max(x,0)$. For vector $x$, we use $x_{a:b}$ to denote the vector containing coordinates of $x$ from position $a$ to position $b$. For matrix $M$, we define $M_{a_1:b_1,a_2:b_2}$ to denote the submatrix by selecting rows from $a_1$ to $b_1$, columns from $a_2$ to $b_2$. We also use $a_1:$ to denote the subset of indices from $a_1$ to the end, $:b_1$ to denote the subset of indices from the beginning (1) to $b_1$ and $:$ to denote all indices.
 Given two non-negative functions $f,g$, we say $f(n)= O(g(n))$ (resp. $f(n)= \Omega(g(n))$) iff there exists $C>0$, such that for all $n\ge 0$, $f(n)\le Cg(n)$ (resp. $f(n) \ge Cg(n)$).  We use $\poly(n)\triangleq \{T:\mathbb{N}\to\mathbb{N}\mid \exists k>0, T(n)= O(n^k)\}$ to denote the set of functions with at most polynomial growth rate.
 
 We use $\phi(x) = \sum_{i=1}^{|x|} 2^{|x|-i}x_i$ to denote the value of binary number represented by binary string $x$.
 We use $\bin_k(x)$ to denote the usual binary encoding of natural number $x$ using $k$ binary bits in the sense that $\phi(\bin_k(x)) = x$ and $\sbin_k(x)$ to denote the signed binary encoding, which is $2\bin_k(x)-(1,\ldots,1)$. For any $n\in\mathbb{N}^+$, we define $\softmax:\mathbb{R}^n\to\mathbb{R}^n$ as $(\softmax(x))_i = \exp(x_i)/\sum_{i=1}^n \exp(x_i)$ for any $x\in\mathbb{R}^n$ and $i\in [n]$. We use $\odot$ to denote the element-wise product of two vectors. We use $a\|b$ or $(a,b)$ to denote the concatenation of two vectors $a$ and $b$. 

\subsection{Decoder-only Transformers}
Given a vocabulary $\gV$ , a \emph{decoder-only} transformer with parameter $\theta$ and maximal input length $n_{\max}$ maps a sequence of input tokens $(x_1,\ldots,x_n)\in\gV^n$ to a probability distribution over $ \gV$ for all $n\le n_{\max}$, denoted by $p_\theta(\cdot\mid x_1,\ldots, x_n)$. We also define function $\transformer_\theta(x)$ by the token in $\gV$ that maximizes $p_\theta(\cdot\mid x_1,\ldots, x_n)$, that is, $\transformer_\theta(x_1,\ldots,x_n)\triangleq \argmax_{y\in\gV} p_\theta(y\mid x_1,\ldots, x_n)$.

\paragraph{Next-token Generator:}  Given a vocabulary $\gV$, a next-token generator with parameter $\theta$ and maximal input length $n_{\max}$ is a mapping from $\cup_{n=1}^{n_{\max}} \gV^n$ to $\gV$. The main next-token generator we are interested in this work is decoder-only transformers, $\transformer_\theta(x_1,\ldots,x_n)$ where $x_i\in \gV$ for all $i\in[ n]$.  We also recursively define $\transformer^{i}_\theta(x_1,\ldots,x_n)\triangleq \transformer^{i-1}_\theta(x_1,\ldots,x_n, \transformer_\theta(x_1,\ldots,x_n))$, for every positive integer $i$ and $n$ satisfying that $i+n\le n_{\max}-1$ with the base case that $\transformer^{1}_\theta(x_1,\ldots,x_n)\triangleq\transformer_\theta(x_1,\ldots,x_n)$. In other words, for all $0\le i\le n_{\max} - n-1$, the output with $i$ steps of CoT is
$x_{n+i+1}  = \transformer^{i+1}_\theta(x_1,\ldots,x_n) = \transformer_\theta(x_1,\ldots,x_n,x_{n+1},\ldots,x_{n+i})$.

\paragraph{Transformer Architecture Overview:}
The decoder-only transformer model we consider in this paper is very similar to GPT style architectures~\citep{radford2019language} and consists of four parts:  a token embedding layer ($\tokenembedding$), a position encoding layer ($\posencoding$), an output linear layer ($\transoutput$), and a stack of $L$ identical layers serving as the ``decoder'' where $L$ is also called the depth of the model. Each decoder layer has two sub-layers: a multi-head self-attention layer ($\Attn$) and a position-wise fully-connected feed-forward network ($\FF$). Each layer mentioned above has its own trainable parameters and is indexed by the layer name and the depth for attention and feedforward layers.
 \footnote{We ignore the LayerNorm~\citep{ba2016layer} in the usual transformer architecture for simplicity. Our expressiveness analysis can extend to the transformers with LayerNorm with more careful treatment. See \Cref{appsec:layernorm} for discussion.}
That is we can split the model parameter $\theta$ in the following way: $\theta = (\theta_\posencoding,\theta_\tokenembedding,\theta_\transoutput,\{\theta^{(l)}_\Attn,\theta^{(l)}_\FF\}_{l=0}^{L-1} )$, which are all trainable. (See formal definition in \Cref{alg:defi_transformer}). Throughout this paper, we use $d$ to denote the embedding size of a transformer.

\paragraph{Self-Attention Mechanism:} Given attention parameter $\theta_\Attn = (W_Q,W_K,W_V,W_O)\in \mathbb{R}^{d\times d}\times \mathbb{R}^{d\times d}\times\mathbb{R}^{d\times d}\times \mathbb{R}^{d\times d}$, we define the Attention layer with mask for decoder-only transformer in \Cref{alg:defi_attn}. Note allowing multi-head attention will not change the class of problems solvable by constant layer decoder-only transformers as we can simulate 1 multi-head attention layer with any constantly many heads with multiple single-head attention layers. Thus for simplicity of presentation, we do not include multi-head attention in the definition below.

\begin{algorithm}
\caption{Causal Self-Attention, $\Attn$}\label{alg:defi_attn}
\begin{algorithmic}[1]
\Require  Parameter $\theta_\Attn = (W_Q,W_K,W_V,W_O)$, Input embedding $h= (h_1,\ldots, h_n)\in\mathbb{R}^{nd}$.
\Ensure Output embedding $h'= (h'_1,\ldots, h'_n) \triangleq \Attn_{\theta_\Attn}(h_1,\ldots, h_n)$.
\State $q_i \triangleq W_Q  h_i, k_i \triangleq W_K h_i, v_i \triangleq W_V h_i, \forall i\in[n]$
\State $s_i \triangleq \softmax(\inner{q_i}{k_1},\ldots,\inner{q_i}{k_i}) \| (0,\ldots, 0) $. 
\State $h'_i \triangleq W_O \sum_{j=1}^n (s_i)_j v_j$.
\end{algorithmic}
\end{algorithm}

\paragraph{Feed-Forward Network:} Given the parameter of fully-connected feedforward network layer $\theta_\FF = (W_1,b_1,W_2,b_2)\in \mathbb{R}^{d\times d}\times \mathbb{R}^d \times \mathbb{R}^{d\times d}\times \mathbb{R}^d$, we define the fully-connected feedforward layer $\FF_{\theta_\FF}:\mathbb{R}^d\to \mathbb{R}^d$ as $\FF_{\theta_{\FF}}(h) \triangleq W_2\relu(W_1 h + b_1) + b_2$.

\paragraph{Token Embedding:} Given the parameter of token embedding layer $\theta_\tokenembedding \in \mathbb{R}^{d \times|\gV|}$, we define the token embedding layer by viewing $\theta_\tokenembedding$ as a mapping from $\gV $ to $ \mathbb{R}^d$, that is, for all $x\in \gV$, the token embedding is $\theta_\tokenembedding(x)$. 

\paragraph{Position Encoding:} Given the parameter of position encoding layer $\theta_\posencoding \in \mathbb{R}^{d \times n_{\max}}$, we define the token embedding layer by viewing $\theta_\posencoding$ as a mapping from $[n_{\max}] $ to $ \mathbb{R}^d$ that is, for all $n\in [n_{\max}]$, the position embedding is as $\theta_\posencoding(n)$. 

\paragraph{Output Layer:} Given the parameter of output layer $\theta_\transoutput  \in \mathbb{R}^{|\gV| \times  d}$, we define the output layer $\transoutput_{\theta_\transoutput}:\mathbb{R}^d \to \gV$ as $\transoutput_{\theta_\transoutput}(h)\triangleq \softmax(\theta_{\transoutput}h)$ for all $h\in \mathbb{R}^d$.

\begin{algorithm}
\caption{Decoder-only Transformer, $\protect\transformer_\theta$ and $p_\theta$}\label{alg:defi_transformer}
\begin{algorithmic}[1]

\Require   Transformer parameter $\theta = (\theta_\posencoding,\theta_\tokenembedding,\theta_\transoutput,\{\theta^{(l)}_\Attn,\theta^{(l)}_\FF\}_{l=0}^{L-1} )$ and input tokens $x= (x_1,\ldots, x_n)\in\gV^n$.
\Ensure Output distribution $p_\theta(\cdot \mid x_1,\ldots, x_i)$ for all $i\in[n]$ and output token $\transformer_\theta(x)$.
\State $h^{(0)}_i \gets \theta_\tokenembedding(x_i) + \theta_\posencoding(i), \forall i \in [n]$
\For{$l = 0,\ldots,L-1$}
\State	$( h^{(l+0.5)}_1,\ldots, h^{(l+0.5)}_n) \gets (h^{(l)}_1,\ldots, h^{(l)}_n) +\Attn_{\theta^{(l)}_\Attn}(h^{(l)}_1,\ldots, h^{(l)}_n) $
\State  $ h^{(l+1)}_i \gets  h^{(l+0.5)}_i + \FF_{\theta^{(l)}_\FF}(h^{(l+0.5)}_i)$, $\forall i \in [n]$
\EndFor
\State $p_\theta(\cdot\mid x_1,\ldots,x_i) \gets  \transoutput_{\theta_\transoutput}(h^{(L)}_i), \forall i \in [n]$
\State $\transformer_\theta(x)\gets \argmax_{y} p_\theta(y\mid x_1,\ldots,x_n)$.
\end{algorithmic}
\end{algorithm}

\subsection{Circuit Complexity}\label{subsec:circuit}
 
\paragraph{Problem.} In this paper we consider the following notion of problems: given a sequence of input tokens, output a token as the answer. Mathematically, given a vocabulary $\gV$, we call a mapping $\gL:\cup_{k\in\mathbb{N}^+}\gV^k\to\gV$ a \emph{problem}. If the correct answer is always $0$ or $1$, we call $\gL$ a decision problem. In circuit complexity, such $\gL$ is also called a \emph{language}.

Though the standard definition of circuit complexity only deals with binary strings, given any finite vocabulary $\gV$, we can always replace each token in $\gV$ by its binary representation, and the length of the input only blows up by a constant factor. Therefore we can extend existing complexity classes listed to arbitrary finite vocabulary naturally. 

\paragraph{$\P$.} The class $\P$ contains all problems solvable by a deterministic Turing machine in polynomial time. 

\paragraph{Boolean Circuit.} A Boolean circuit over $n$ variables is a directed acyclic graph where nodes are $\AND$, $\OR$, or $\NOT$ gates. The gates with in-degree 0 are the inputs, which are assigned one of the $n$ boolean variables. Given the inputs, the circuit computes the value of each non-input gate based on the value of the incoming gates and outputs a number at the output gate.

\paragraph{$\SIZE[T(n)]$.} Given any function $T$, $\SIZE[T(n)]$ denotes the class of problems that can be solved by boolean circuits with $O(T(n))$ gates when the input length is $n$.  Formally, a problem $\gL$ is in $\SIZE[T(n)]$ if and only if there exists a sequence of circuits $\{C_n\}$ such that each circuit $C_n$ has $n$ inputs and 1 output, the size of each circuit $C_n$ is at most $O(T(n))$, and for all strings $x$, $x$ is in $L$ if and only if $C_{|x|}(x) = 1$.

\paragraph{$\Ppoly$.} We define the class $\Ppoly$ as the set of problems that can be solved by a family of polynomial-size circuits, that is, $\Ppoly\triangleq \cup_{k\in\mathbb{N}^+} \SIZE[n^k]$. Since any Turing Machine with time bound $T(n)$ can be simulated by a circuit of size $T(n)\log T(n)$~\citep{pippenger1979relations}, we know that $\P\subseteq \Ppoly$. 


\paragraph{$\NC,\AC$, and $\TC$.} The class $\NC$ contains all problems that can be solved in a small \textbf{parallel} runtime---polylogarithmic in input length---and with a polynomial number of processors. Formally, for a positive integer $k$,  a problem $\gL$ is in $\NC^k$ if and only if there exists a polynomial $p(n)$ and a  family of circuits $\{C_n\}$ such that each circuit $C_n$ has $n$ inputs and 1 output, the fan-in of the gates is at most $2$, the size of each circuit $C_n$ is at most $p(n)$, the depth of each circuit $C_n$ is $O((\log n)^k)$, and for all strings $x$, $x$ is in $\L$ if and only if $C_{|x|}(x) = 1$. Finally we define $\NC=\cup_{k\in\mathbb{N}} \NC^k$.
The class $\AC^k$ is defined almost the same as $\NC^k$ for each $k\in\mathbb{N}^+$, except the $\AND$ and $\OR$ gates in $\AC^k$ allow unbounded fan-in. 
The class $\TC^k$ allows a more powerful type of gate, $\MAJORITY$, compared to $\AC^k$. $\MAJORITY$ gate can have unbounded fan-in and is defined as 
	$\MAJORITY\left(x_{1},\dots ,x_{n}\right)=\lfloor {\frac {1}{2}}+{\frac {\left(\sum _{i=1}^{n}x_{i}\right)-1/2}{n}}\rfloor $.

It holds that $\NC^i\subseteq \AC^i\subseteq \TC^i\subseteq \NC^{i+1}$ for all natural number $i$. Therefore $\NC=\AC=\TC$, which all stands for the problem class that can be solved in polylogarithmic time with polynomial parallel processors.

\section{Expressiveness Theory for Transformers with Chain of Thought(CoT)}\label{sec:main_result}

In this section, we study the expressiveness of transformers with CoT from a theoretical perspective. 

\subsection{Finite Precision Modeling}\label{subsec:finite_precision}

In practice, training and inference of transformers are typically done with 16- or 32-bit floating point numbers. Thus in this paper, we mainly focus on the computation model of \emph{constant-precision} transformers, where the output of each arithmetic operation is rounded to the closest  floating point number representable by a fixed number of digits following IEEE 754 standard (\Cref{defi:rounding}), thus avoiding the unrealistic infinite precision assumption made by prior works~\citep{perez2019turing, dehghani2018universal}.

Below we give a formal definition of the \emph{floating-point number} and \emph{rounding} operation. Recall  $\phi(a) = \sum_{i=1}^k 2^{k-i}a_i$ denote the value of binary number represented by $a\in \{0,1\}^k$ for any $k\in\mathbb{N}^+$.

\begin{definition}[Floating-point Representation]
Let $e$ be the number of bits for exponents and $s$ be the number of bits for significand.  A $(e+2s+1)$-bit binary string $a = (a_1, a_2, \ldots a_{e+2s+1})\in \{0,1\}^{e+2s+1}$ is a \emph{floating-point} binary representation of number $\phi_{e,s}(a)\triangleq \sign(a)\cdot 2^{\exponent(a)} \cdot \significand(a)$ with $e$-bit exponent and $2s$-precision, where the sign is $\sign(a)\triangleq 2a_1-1$, the significand is $\significand(a)\triangleq 2^{-s}\phi(a_{2:2s+1})$, and the exponent is $\exponent(a)\triangleq \phi(a_{2s+2:2s+e+1})-2^{\max(0,e-1)}$. We further use $\Floating_{e,s}$ to denote all the floating numbers representable using $e$-bit exponent and $2s$-bit precision (significand), that is, $\Floating_{e,s}\triangleq \{ S\cdot 2^{-s +E} \mid -2^{2s}+1\le S\le 2^{2s}-1, -2^{\max(0,e-1)}\le E\le 2^e -1 -2^{\max(0,e-1)}, E,S\in\mathbb{N} \}$. We define $B_{e,s}\triangleq \max \Floating_{e,s}$.
\end{definition}

We also use $\psi_{e,s}:\Floating_{e,s}\to \{0,1\}^{e+2s+1}$ to denote the inverse of $\phi_{e,s}$. We note that when the number of exponent bits is larger than $0$, there are multiple ways to represent a number in $\Floating_{e,s}$ by a binary string and we assign $\psi_{e,s}(x)$ as the string $a\in\{0,1\}^{e+2s+1}$ with the smallest $\abs{\exponent(a)}$, which is unique for all non-zero numbers. For $0$ we additionally set $\sign(\psi_{e,s}(0))=1$. 

\begin{definition}[Correct Rounding]\label{defi:rounding} 
	 For any $x\in \mathbb{R}$ and any closed subset of $\mathbb{R}$ containing $0$, $\Floating$, we define \emph{correct rounding} $\round(x,\Floating)$ as the closest number to $x$ in $\Floating$. We break the tie by picking the one with a smaller absolute value. 
 
 In particular, we denote the rounding operation with $e$-bit exponent, $2s$-bit precision by $\round_{e,s}(\cdot)\triangleq \round(\cdot,\Floating_{e,s})$, which is also denoted by $\rdes{\cdot}$ for convenience. We extend the definition of $\round$ and $\round_{e,s}$ to vector inputs by rounding coordinate-wisely. 
\end{definition} 

Our notion of floating-point number simplifies the IEEE 754 Standard for Floating-point Arithmetic~\citep{ieee754_2008} by removing $\infty$ and $-\infty$. When overflow happens, we always round the output to the (negative) largest representable number in $\Floating_{e,s}$. For unary functions like $\exp(\cdot)$ and binary functions including addition, subtraction, multiplication, and division, we simply define their rounded version by rounding their outputs. Whenever division by $0$ happens, we treat it as the model outputs the wrong result.

 Next, we define finite-precision summation over more two numbers by decomposing it as a chain of rounded binary addition in a fixed order. \footnote{Technically speaking, instead of a chain, the summation could also proceed like a tree. This is a more complicated case and we leave it for future work. }

\begin{definition}[Summation with Iterative Rounding] For any $s,n\in\mathbb{N}^+$ and vector $x\in\mathbb{R}^n$, we define \emph{summation with iterative rounding to $e$ bit exponent and $2s$-bit precision} as $\sumes:\cup_{n\in\mathbb{N}^+}(\Floating_{e,s})^n\to \Floating_{e,s}$\ , where for any $n\in\mathbb{N}^+$ and $x\in\mathbb{R}^n$, $$\sumes(x)\triangleq \rdes{\rdes{\rdes{\rdes{x_1+x_2}+x_3}+\cdots x_{n-1}}+x_n}.$$

We further define the following operations:
\begin{itemize}
	\item Finite-precision inner product: $\inner{x}{y}_{e,s}\triangleq \sumes(x\odot y)$;
	\item Finite-precision matrix product: $(A\times_{e,s} B)_{i,j} \triangleq \inner{(A_{i,:})^\top}{B_{:,j}}_{e,s}$\ ;
	\item Finite-precision softmax: $\softmax_{e,s}(x)\triangleq \rdes{\rdes{\exp(x)}/\sumes(\rdes{\exp(x)})}$\ .
\end{itemize}
\end{definition}


Finally,  a finite-precision transformer can be defined by replacing all the infinite-precision operations by their finite-precision counterparts listed above. (See details in \Cref{alg:defi_transformer_finite_precision}). We postpone the details of the finite-precision version of individual transformer layers into \Cref{appsec:finite_precision}.

\subsection{$\Cot$: Complexity Class for Constant-depth Transformers with CoT}

In this subsection, we define the complexity class consisting of all the problems that can be solved by some decoder-only transformers with $\Cot$ with finite precision.  

\begin{definition}[$\Cot$]\label{defi:complexity_cot}
	Given a finite vocabulary $\gV$ and four functions $T(n),d(n),s(n),e(n)$, informally, $\Cot[T(n)][d(n)][s(n)][e(n)]$ is the family of problems solvable by a transformer with a constant depth, $s(n)$ bits of precision, $e(n)$ bits of exponent, embedding size $d(n)$ and $T(n)$ steps of CoT. Formally, we say a problem  $\gL:\cup_{n\in\mathbb{N}^+}\gV^n\to \{0,1\}$ is in $\Cot[T(n)][d(n)][s(n)][e(n)]$ iff there is an integer $L$ and three functions $T'(n) = O(T(n)), d'(n) = O(d(n)), s'(n) = O(s(n))$, $e'(n) = O(e(n))$, such that for every positive integer $n$, there is a $L$-layer decoder-only transformer, denoted by $\transformer_{\theta_n}$ with embedding size $d'(n)$, $2s'(n)$ bits of precision, and $e'(n)$ bits of exponent, that can output $\gL(x)$ given any input $x$ in $\gV^n$, using $T'(n)$ steps of chain of thought. Mathematically, it means 
	\begin{align}
	\transformer_{\theta_n}^{1+ T'(n)}(x) = \gL(x)	, \quad \forall x \in \gV^n.
	\end{align}
We also extend the definition of $\Cot$ to a class of function instead of a single function. For example, $\Cot[T(n)][\poly(n)][s(n)][e(n)]\triangleq \cup_{k\in\mathbb{N}^+} \Cot[T(n)][n^k][s(n)][e(n)]$.
\end{definition}

\begin{definition}[$\T$] We define $\T[d(n)][s(n)][e(n)]\triangleq \Cot[0][d(n)][s(n)][e(n)]$ as the problems that a constant-depth, constant-precision decoder-only transformer can solve with $O(s(n))$ bits of precision, $O(e(n))$ bits of exponent, $O(d(n))$ embedding size and without CoT (or with only $0$ step of CoT).
\end{definition}

By definition, $\Cot[T(n)][d(n)][s(n)][e(n)]$ is monotone in all $T(n), d(n), s(n), e(n)$, \emph{e.g.}, $\Cot[T'(n)][d(n)][s(n)][e(n)]\subseteq \Cot[T(n)][d(n)][s(n)][e(n)]$ if $T'(n)\le T(n)$ for all $n\in \mathbb{N}$. In particular, we have $\T[d(n)][s(n)][e(n)] \triangleq \Cot[0][d(n)][s(n)][e(n)] \subseteq \Cot[T(n)][d(n)][s(n)][e(n)]$.

Note the above-defined complexity class $\Cot$ is non-uniform, that is, it allows a different program for every input size. This is in contrast to previous works~\citep{perez2019turing, perez2021attention,yao2021self,weiss2021thinking,chiang2023tighter,hao2022formal,merrill2023logic,merrill2022saturated} which focus on the uniform transformer classes. Please refer to \Cref{appsec:non_uniformity} for a discussion.

\subsection{Tighter Upper Bounds on Transformer Expressiveness}\label{sec:improved_upper_bounds}
Existing works~\citep{merrill2023parallelism,liu2022transformers} have shown that constant depth, polynomial width, and log precision transformers can be simulated in a small parallel time, \emph{i.e.}, using $\TC^0$ circuits. These results are built on the fact that multiplication and division of $n$-bits binary numbers~\citep{hesse2001division}, as well as the iterated addition over $n$ different $n$-bit binary integers are in $\TC^0$. 

However, such $\TC^0$ expressiveness upper bounds may be unrealistic for transformers operating with floating point numbers. \citep{merrill2023parallelism,liu2022transformers} implicitly assumes when adding more than one floating-point number, the algorithm first computes the exact answer without rounding using arbitrarily more precision and only performs rounding in the end. However, in practice rounding happens after each addition between two numbers and it is open if such $\TC^0$ upper bounds still holds. Immediate rounding makes iterated addition over floating point numbers no longer associative~\citep{goldberg1991every}, for example,  $\round(a+\round(b+c))\neq \round(\round(a+b)+c))$. The associativity of integer addition plays a crucial role in the fact that the iterated addition over $n$ different $n$-bit binary integers is in $\TC^0$.

In this section, we present two novel expressiveness upper bounds for transformers which round the immediate result after each step of the arithmetic operation. First, we show a strictly tighter upper bound than $\TC^0$, which is $\AC^0$, for constant-depth transformers with both constant bits of precision and exponents. (\Cref{thm:AC0_upper_bound}) This suggests when input length is sufficiently long, constant-precision transformers cannot count eventually, even in the sense of modular. For example, it is well known that no $\AC^0$ circuits can decide the parity of a binary string.

\begin{theorem}\label{thm:AC0_upper_bound}
$\T[\poly(n)][1][1] \subseteq \Cot[\log n][\poly(n)][1][1] \subseteq \AC^0$.	
\end{theorem}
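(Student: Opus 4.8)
The statement chains two inclusions. The first, $\T[\poly(n)][1][1] \subseteq \Cot[\log n][\poly(n)][1][1]$, is essentially immediate from the monotonicity of $\Cot$ in its first argument: padding a no-CoT transformer with up to $\log n$ wasted CoT steps (each step simply re-emitting a dummy token and ignoring it) does not change the function computed, so the claimed containment holds trivially. The real content is the second inclusion, $\Cot[\log n][\poly(n)][1][1] \subseteq \AC^0$, so I will focus the plan there.

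The plan is to simulate a constant-depth, constant-precision (constant significand bits, constant exponent bits), polynomial-embedding-size decoder-only transformer running for $O(\log n)$ CoT steps by a family of polynomial-size, constant-depth, unbounded-fan-in $\AND$/$\OR$/$\NOT$ circuits. First I would reduce the CoT computation to a single forward pass: running $T(n) = O(\log n)$ steps of CoT on an input of length $n$ produces at most $n + O(\log n)$ tokens, so it suffices to show that \emph{one} forward pass of the transformer on an input of length $N = n + O(\log n) = \poly(n)$ (hence still polynomial, and $\log N = \Theta(\log n)$) can be computed in $\AC^0$, and then compose $T(n)$ such passes; since $T(n)$ is only logarithmic and $\AC^0$ is closed under composition of a \emph{constant}... wait --- here is the first subtlety: composing $\log n$ copies of a constant-depth circuit naively gives depth $\Theta(\log n)$, which is $\NC^1$-ish, not $\AC^0$. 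So the argument cannot be "simulate one step in $\AC^0$, then iterate." Instead the key structural observation must be exploited: each CoT token $x_{n+i+1}$ is a single symbol from the finite vocabulary $\gV$, so it carries only $O(1)$ bits of information, and it is a function of the original input $x_{1:n}$ together with the previously generated tokens $x_{n+1}, \ldots, x_{n+i}$, i.e.\ of $x_{1:n}$ plus $O(\log n)$ bits. The plan is therefore to argue that the \emph{entire} CoT transcript $(x_{n+1}, \ldots, x_{n+T})$, which is only $O(\log n)$ bits total, can be computed in $\AC^0$ directly: guess all $2^{O(\log n)} = \poly(n)$ possible transcripts in parallel, for each candidate transcript verify in $\AC^0$ that it is consistent (each token equals the transformer's argmax on the appropriate prefix), and output the unique consistent one. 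This reduces everything to: the single-forward-pass map $x_{1:N}\mapsto \transformer_\theta(x_{1:N})$ is computable in $\AC^0$ for $N = \poly(n)$.

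For the single-forward-pass claim I would build on the finite-precision machinery of \Cref{subsec:finite_precision}. Since precision and exponent bits are $O(1)$, every intermediate scalar (entry of $h^{(l)}_i$, attention score, softmax weight, etc.) lies in the fixed finite set $\Floating_{e,s}$ of $2^{O(1)}$ floating-point values, so each such scalar is encoded by $O(1)$ bits and each embedding vector by $O(d) = \poly(n)$ bits. The key steps are: (i) token and position embeddings are $\AC^0$ lookups (selection among $\le |\gV| + n_{\max}$ precomputed constant vectors, done by an $\AC^0$ selector); (ii) each $\FF$ layer is position-wise and involves a fixed-size-alphabet matrix-vector product $W_1 h + b_1$ with iterative rounding --- each output coordinate is $\sumes$ of $d$ products of pairs drawn from a finite set, and I claim this iterated finite-precision sum is in $\AC^0$: this is exactly the point where the paper emphasizes that immediate rounding \emph{destroys} associativity and therefore, morally, destroys the ability to do genuine counting; concretely, because each partial sum lives in the finite set $\Floating_{e,s}$ and the exponent range is bounded, one shows (the heart of the proof) that $\sumes(y_1,\ldots,y_d)$ can be computed by an $\AC^0$ circuit --- intuitively, the running partial sum saturates or stabilizes after the contributions whose exponents are within the $O(1)$-width "mantissa window" of the current magnitude, so the final value depends only on an $\AC^0$-computable summary (e.g.\ the maximum exponent present, realized via threshold-free unbounded OR, and bounded-count information about how many addends fall in each of the $O(1)$ relevant exponent bands near it, where "bounded count" means counting up to a constant, which \emph{is} in $\AC^0$). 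I would isolate this as a lemma: \emph{iterated finite-precision (constant-exponent, constant-significand) summation of polynomially many terms is in $\AC^0$}; (iii) the $\Attn$ layer: the scores $\inner{q_i}{k_j}_{e,s}$ are finite-precision inner products (covered by the same lemma), $\exp$ is an $O(1)$-size lookup after rounding, the normalization is one more finite-precision sum and a rounded division (again a finite lookup table on $O(1)$-bit inputs), and the output $W_O\sum_j (s_i)_j v_j$ is another finite-precision matrix-vector product handled by the lemma; (iv) the output layer and argmax: argmax over $|\gV| = O(1)$ coordinates of $\poly(n)$-bit-encoded (but $\Floating_{e,s}$-valued, hence $O(1)$-bit each) logits is trivially $\AC^0$. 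Since $L$ is a constant, composing $L$ layers keeps the depth constant, so one forward pass is $\AC^0$; combined with the transcript-guessing argument above, we get $\Cot[\log n][\poly(n)][1][1]\subseteq\AC^0$.

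The main obstacle is step (ii)/(iii)'s lemma on iterated finite-precision summation: I must show that the \emph{sequential, non-associative} chain $\rdes{\cdots\rdes{\rdes{y_1+y_2}+y_3}\cdots + y_d}$ is $\AC^0$-computable even though it looks like an inherently serial recurrence of length $d = \poly(n)$. The resolution I would pursue is that with a constant number of exponent bits the magnitude of any partial sum is confined to $2^{O(1)}$ possible values (the exponent ranges over an $O(1)$-size set because $e = O(1)$!), so the state of the recurrence at every step is one of $2^{O(1)}$ symbols; then the whole chain is the iterated product in a \emph{constant-size monoid} (the transition monoid on the finite state set $\Floating_{e,s}$, where the $k$-th letter is the map $z \mapsto \rdes{z + y_k}$), and iterated product over a fixed finite monoid of polynomially many elements is in $\AC^0$ when the monoid is aperiodic --- and here it \emph{is} aperiodic/"counter-free" precisely because, intuitively, once you add a small number many times to a large accumulator the result stops changing (rounding absorbs it), i.e.\ there is no hidden modular counter, which is exactly the informal claim "constant-precision transformers cannot count" that the paragraph preceding the theorem advertises. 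Making this aperiodicity rigorous --- ruling out that some clever choice of $\Floating_{e,s}$ and addends embeds a nontrivial cyclic group in the transition monoid --- is the delicate part, and I would prove it by the explicit monotonicity/absorption structure of floating-point addition (the partial sums, restricted to addends of a fixed sign, are monotone and bounded, hence eventually constant, and the interleaving of signs only produces bounded oscillation), from which aperiodicity of the transition monoid follows.
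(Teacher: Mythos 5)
Your proposal follows essentially the same route as the paper: the first inclusion by monotonicity, the second by enumerating all $\poly(n)$ candidate CoT transcripts in parallel and verifying consistency (exactly the paper's argument for why $O(\log n)$ CoT steps stay in $\AC^0$), and then reducing a single forward pass to the key lemma that iterated rounded summation $\sumes$ over $\Floating_{e,s}$ with constant $e,s$ is in $\AC^0$, proved by viewing the running partial sum as a constant-size automaton/transformation monoid, establishing aperiodicity (counter-freeness), and invoking the known $\AC^0$ simulation of counter-free automata via Krohn--Rhodes --- this is precisely the paper's \Cref{thm:iterated_addition_in_AC0,thm:ordered_automata_counter_free,thm:floating_point_is_ordered_automaton}. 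The one place your sketch should be tightened is the aperiodicity step: your parenthetical argument (partial sums of fixed-sign addends are monotone and bounded, and mixed signs give ``bounded oscillation'') does not by itself rule out a nontrivial permutation when a \emph{mixed-sign} block $w$ is iterated, which is exactly what aperiodicity requires. The clean fix, which is the paper's ``ordered automaton'' observation, is monotonicity in the \emph{state} rather than in the addends: since rounding is order-preserving, each transition map $z\mapsto\rdes{z+y}$ is monotone on the totally ordered finite set $\Floating_{e,s}$, hence so is $\pi_w$ for every word $w$; consequently the iterates $(\pi_w)^k(q)$ form a monotone (nondecreasing or nonincreasing, depending on whether $\pi_w(q)\gtrless q$) sequence in a finite chain and stabilize, giving $(\pi_w)^k=(\pi_w)^{k+1}$ and aperiodicity. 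With that substitution your plan matches the paper's proof.
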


Our second result, \Cref{thm:TC0_upper_bound}, shows that when the number of bits for the exponent is $0$ (\emph{i.e.} fixed-point numbers), $\TC^0$ upper bounds for the expressiveness of constant-depth, log-precision transformers still holds, even with the correct rounding defined in \Cref{defi:rounding}.

\begin{theorem}\label{thm:TC0_upper_bound}
	$\T[\poly(n)][\log(n)][0] \subseteq \Cot[\log n][\poly(n)][\log(n)][0] \subseteq \TC^0$.
\end{theorem}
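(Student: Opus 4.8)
\textbf{Proof proposal for \Cref{thm:TC0_upper_bound}.}

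The plan is to mirror the structure of the proof of \Cref{thm:AC0_upper_bound} but replace the $\AC^0$-gadgets with $\TC^0$-gadgets wherever counting or iterated addition is needed, exploiting that in the fixed-point case ($e(n)=0$) the representable numbers $\Floating_{0,s}$ with $s = O(\log n)$ form a set of integers (after rescaling by $2^s$) of magnitude $2^{O(\log n)} = \poly(n)$, so each number has only $O(\log n)$ bits. The first inclusion $\T[\poly(n)][\log n][0]\subseteq \Cot[\log n][\poly(n)][\log n][0]$ is the trivial monotonicity of $\Cot$ in the CoT-length argument, already noted in the excerpt; and the collapse $\Cot[\log n][\cdots]\subseteq \T[\cdots]$-type reduction should follow by the same argument used for \Cref{thm:AC0_upper_bound}, namely that $\log n$ extra autoregressive steps can be absorbed into an enlarged (still polynomial) context without increasing the depth, since each CoT token is itself computed by the same constant-depth transformer and $\log n \le \poly(n)$. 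So the real content is the final inclusion: a constant-depth, $\poly(n)$-width, $O(\log n)$-precision, $0$-exponent transformer computes a function in $\TC^0$.

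First I would fix the input length $n$ and unfold the transformer (\Cref{alg:defi_transformer_finite_precision}) into its constantly many layers, each layer being: one causal self-attention sublayer and one position-wise feed-forward sublayer, with a residual add. The key observation is that with $s = O(\log n)$ precision bits and $0$ exponent bits, every intermediate scalar in the computation is an element of $\Floating_{0,s}$, i.e.\ an integer multiple of $2^{-s}$ bounded in absolute value by $B_{0,s} = O(\poly(n))$; hence it is representable by $O(\log n)$ bits, and the whole embedding vector at each position is $O(d(n)\log n) = O(\poly(n)\log n)$ bits. I would then show each primitive operation is in $\TC^0$ \emph{uniformly across all $\poly(n)$ positions in parallel}: (i) rounded binary $+,-,\times,\div$ of two $O(\log n)$-bit fixed-point numbers is in $\TC^0$ (indeed even $\AC^0$ for add/subtract, and $\TC^0$ for multiply/divide by \citet{hesse2001division}, with rounding an $\AC^0$ post-processing); (ii) the iteratively-rounded sum $\sums$ over $\poly(n)$ terms — the subtle non-associative operation flagged in the excerpt — is computed as a depth-$\poly(n)$ \emph{chain} of binary rounded additions, but since each term and each partial sum is only $O(\log n)$ bits and bounded by $\poly(n)$, I would argue this chain can itself be evaluated by a $\TC^0$ circuit: the natural route is to note that the chain of rounded additions is computed by the transformer via the inner products inside attention, and to bound how much the iteratively-rounded sum can deviate; alternatively, build a $\TC^0$ circuit that simulates the recurrence by a parallel-prefix / carry-lookahead style computation exploiting that only $O(\log n)$-bit quantities are involved (so a lookup table of size $2^{O(\log n)} = \poly(n)$ for each binary add-and-round step is allowed), and $\MAJORITY$ suffices to implement iterated addition of the rescaled integers before the final rescaling; (iii) $\softmax_{e,s}$ reduces to $\exp$ (a fixed unary map on $O(\log n)$-bit inputs, hence a $\poly(n)$-size lookup table), one $\sums$, and one rounded division, all now in $\TC^0$; (iv) $\relu$ and the $\argmax$ output layer over $|\gV|$ tokens are trivially in $\AC^0$. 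Composing a constant number of such layers keeps us in $\TC^0$ since $\TC^0$ is closed under constant-depth composition, and wiring together the $\poly(n)$ positions costs only polynomial size.

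The main obstacle, as the excerpt itself warns, is step (ii): showing that the \emph{non-associative} iteratively-rounded summation $\sums$ over $\poly(n)$ floating-point numbers, which is \emph{a priori} an inherently serial depth-$\poly(n)$ recurrence, nonetheless lands in $\TC^0$. The reason this is tractable in the fixed-point regime (and why the theorem restricts to $e(n)=0$) is precisely that there is no exponent, so rounding to $2s$ significand bits on a number of magnitude $\le B_{0,s}$ is just truncation to the nearest multiple of $2^{-s}$ with $s=O(\log n)$ — in other words, after rescaling by $2^s$ we are doing integer addition followed by a fixed clamp/round that is \emph{identity when no overflow occurs}, and the only way associativity fails is through overflow-clamping. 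I would therefore split into the no-overflow case (where $\sums$ coincides with ordinary iterated integer addition, manifestly in $\TC^0$ via $\MAJORITY$) and show that overflow can be detected and handled by a $\TC^0$ circuit (one can, e.g., compute all prefix sums of the rescaled integers — iterated addition is in $\TC^0$, hence so is each prefix sum, and there are only $\poly(n)$ prefixes, but one must be careful that clamping at an intermediate step changes later partial sums; the cleanest fix is to observe that once a clamp happens the subsequent behavior is again a shifted iterated-addition problem, and since there can be at most $\poly(n)$ clamp events one can guess-and-verify the sorted list of clamp positions in $\TC^0$, or more simply invoke that the exact real sum has magnitude $\le \poly(n)\cdot B_{0,s}$ so is an $O(\log n)$-bit quantity and the final clamped result is a fixed $\TC^0$-computable function of that exact sum and the data — this last reduction is the crux and is exactly where the $0$-exponent hypothesis is used). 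Once (ii) is in hand, the rest is a routine ``each layer is $\TC^0$, constant depth, $\poly(n)$ width'' bookkeeping identical in spirit to the $\AC^0$ proof.
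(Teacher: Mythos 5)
You correctly isolate the crux — that the only genuinely non-parallel ingredient is the saturating, non-associative iterated sum $\mathsf{sum}_{0,s}$ over $\poly(n)$ fixed-point numbers — and your peripheral reductions (rescaling $\Floating_{0,s}$ to $O(\log n)$-bit integers, lookup tables for $\exp$, $\TC^0$ multiplication/division, constant-depth composition) match the paper's outline. But the step you yourself flag as the crux is not actually carried out, and both fixes you sketch fail. The ``guess-and-verify the sorted list of clamp positions'' idea is not available in $\TC^0$: the set of positions at which clamping occurs can be an arbitrary subset of $[n]$ (there may be $\Theta(n)$ clamp events), so there are exponentially many candidates and a deterministic constant-depth circuit cannot enumerate them; and deciding \emph{locally} whether a clamp occurs at position $j$ requires the clamped running sum before $j$, which is exactly the serial quantity you are trying to compute — the circularity is not broken. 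The ``more simply'' route is also not sound: the clamped result is \emph{not} a function of the exact sum (with bound $B$, the inputs $(B,B,-B,-B)$ and $(B,-B,B,-B)$ both have exact sum $0$ but clamped results $-B$ and $0$ respectively), and adding ``and the data'' makes the claim vacuous, since the whole question is whether that function of the data is $\TC^0$-computable.

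What the paper does instead (\Cref{thm:iterated_addition_in_TC0}, \Cref{alg:gridworld}, \Cref{lem:TC0_gridworld_algorithm_correct}) is to express the final saturated value purely in terms of the \emph{exact, unclamped} prefix sums $S_i$, which are computable in $\TC^0$: after a normalization that forces an initial full-range excursion, one computes all $S_i$, the suffix maxima $U_i$ and minima $L_i$, takes $i^*$ to be the last index with $U_{i^*}-L_{i^*}\ge 2B_{s(n)}$, and $k^*$ the last index after $i^*$ at which $S_{k^*}$ attains the relevant extremum; the correctness lemma then shows the clamped running sum at $k^*$ equals exactly $\pm B_{s(n)}$ and that no overflow occurs afterwards, so the answer is $\pm B_{s(n)} + S_n - S_{k^*}$. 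Every line (prefix sums, suffix max/min, comparisons) is iterated addition or sorting of $\poly(n)$-bit integers, hence $\TC^0$. This characterization of the post-last-saturation state via unclamped prefix sums is the missing idea in your proposal. A secondary, smaller gap: your justification for absorbing the $O(\log n)$ CoT steps (``enlarged context without increasing depth'') does not explain why serially composing the per-token circuit $\log n$ times keeps the depth constant; the paper's argument enumerates all $2^{O(\log n)}=\poly(n)$ candidate CoT strings in parallel and selects the consistent branch with an $\OR$ of disjoint subcircuits.
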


We note that the fact that a single forward pass of the transformer can be simulated by an $\AC^0$ circuit immediately implies that transformer output with $O(\log n)$ steps of CoT can also be simulated by $\AC^0$. This is because in general one can the transformer output with $T$ steps of CoT as an $\OR$ of $2^{T}$ disjoint subcircuits, where each of them enumerates all possible values of $T$ CoT tokens and output the value of the token in the branch where all the intermediate token values are consistent. This enumeration can be done in parallel and thus only takes constant depth. When $T=O(\log n)$, this only leads $\poly(n)$ factor of explosion in circuit size and thus still in $\AC^0$. The same argument holds for $\TC^0$ as well.

The main technical difficulties in above two results are showing $\sumes:(\Floating_{e,s})^n\to \Floating_{e,s}$ has $\AC^0$ (resp. $\TC^0$) circuits when $e,s$ are both constants (resp. $e=0$, $s=O(\log(n))$). We view iterated addition with rounding over $\Floating_{e,s}$ as an automaton with both state space and vocabulary being $\Floating_{e,s}$. 
The first result are due to a novel application of classical Krhon-Rhodes decomposition theorem for automata (\Cref{thm:krohn_rhodes}), where we use the property of rounded addition that for all $x,x'\in\Floating_{e,s}, y\in\Floating_{e,s}$, $x\ge x'\implies \rdes{x+y}\ge \rdes{x'+y}$. We formalize this property in \Cref{defi:ordered_automaton} as \emph{ordered} automata and show all ordered automata are counter-free~\Cref{thm:ordered_automata_counter_free} and thus can be simulated by $\AC^0$ circuits~\citep{mcnaughton1971counter}.

The proof technique for \Cref{thm:AC0_upper_bound} does not generalize to \Cref{thm:TC0_upper_bound} because the depth of $\AC^0$ circuits constructed before depends on the number of the states of the automaton and thus is not constant. Our proof for \Cref{thm:TC0_upper_bound} is motivated by Algorithm 1 in \citet{liu2022transformers} for the automaton named `GridWorld'.

However, it remains open whether constant-depth, log-precision transformers with log bits for exponents $\T[\poly(n)][\log(n)][\log(n)]$ or even constant bits for exponents $\T[\poly(n)][\log(n)][1]$ have $\TC^0$ circuits.

\subsection{CoT Makes Transformers More Expressive}\label{sec:lower_bounds}
Now we are ready to present our main theoretical results~(\Cref{thm:cot_dlogn_main}) which characterize the expressiveness of constant-depth, constant-precision transformers with CoT and $O(\log(n))$ embedding size. $\log(n)$ embedding sizes are necessary to ensure that the position embeddings for $n$ inputs are different.  All the lower bounds for transformer expressiveness (with or without CoT) are proved for fixed-point numbers, \emph{i.e.}, without using any exponent bits. Allowing exponent bits will only make transformers more expressive. For convenience, we define $\Cot[T(n)][d(n)][s(n)]\triangleq \Cot[T(n)][d(n)][s(n)][0]$. The omitted proofs in this section can be found in \Cref{appsec:proofs}.

\begin{figure}[t]
\begin{center}
\vspace{-2cm}
    \begin{subfigure}[t]{0.4\textwidth}
\includegraphics[width=\textwidth]{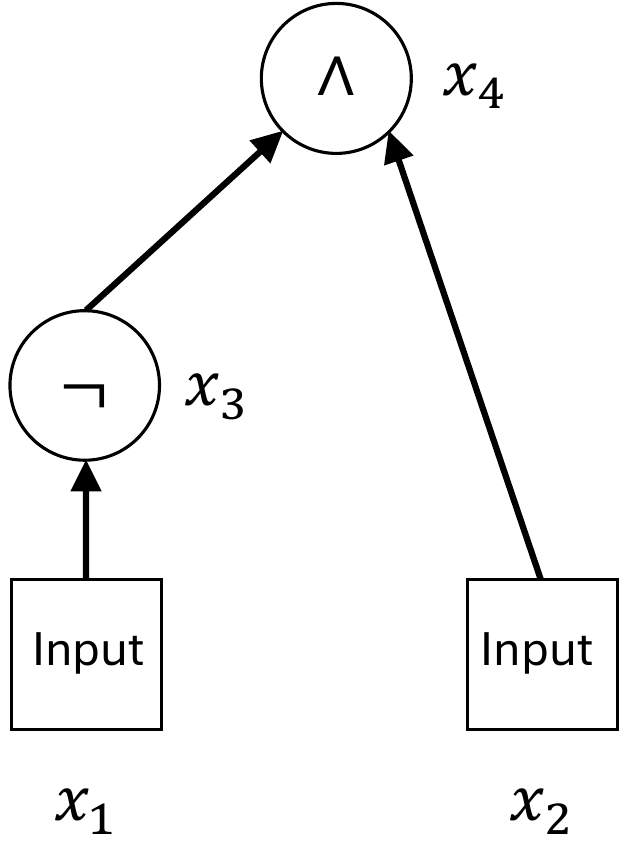}
        \caption{Original Circuit}
        \label{subfig:original_circuit}
    \end{subfigure}
\begin{subfigure}[t]{0.54\textwidth}
\includegraphics[width=\textwidth]{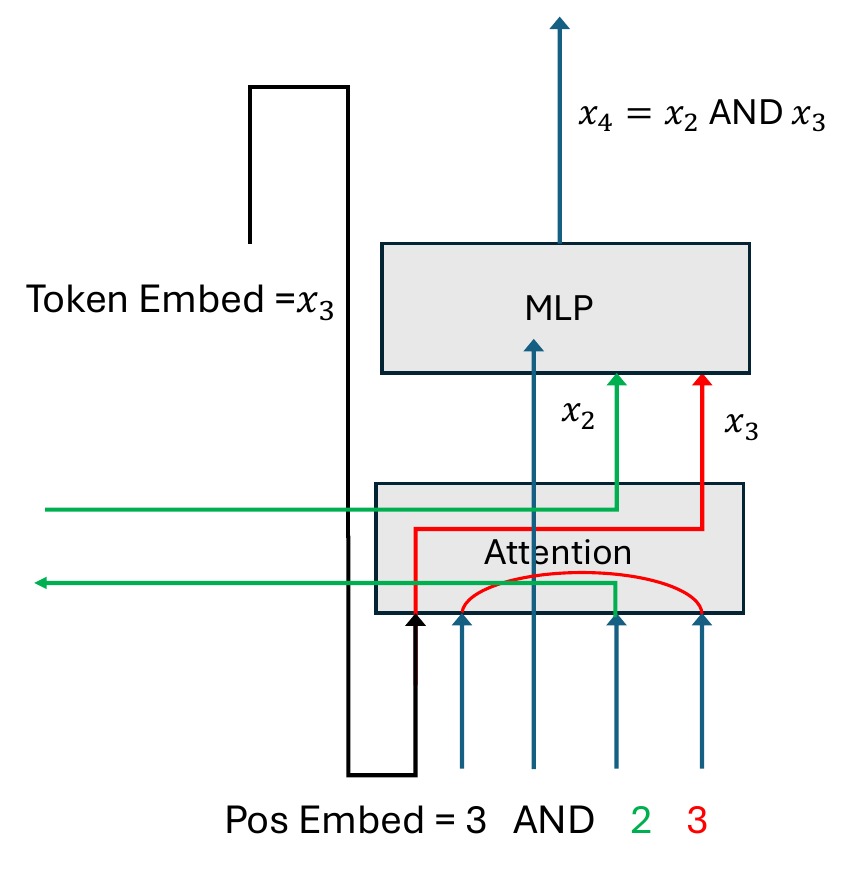}
        \caption{Forward pass of the transformer with CoT at position 3,  computing $x_4$  in \Cref{subfig:original_circuit}. The position embedding comes from the third row of the right  serialization in  \Cref{subfig:serialized_circuit}.}
        \label{subfig:cot_simulation}
    \end{subfigure}
\begin{subfigure}[b]{\textwidth}
\includegraphics[width=\textwidth]{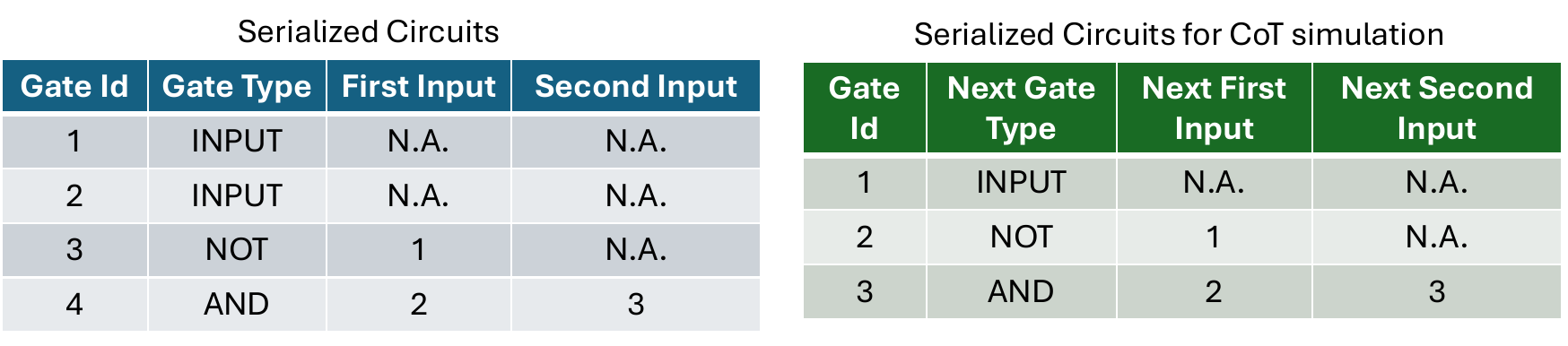}
        \caption{Two ways to serialize circuits. The left (blue) one is the most natural one and the right (green) one is used to construct the position embedding so the transformer with CoT simulates the original circuit in \Cref{subfig:original_circuit}. }
        \label{subfig:serialized_circuit}
    \end{subfigure}
\caption{Illustration of \Cref{thm:cot_dlogn_main} on a 2-gate and 2-input circuit.}\label{fig:cot_simulating_circuit}
\end{center}
\vspace{-0.5cm}
\end{figure}

\begin{theorem}\label{thm:cot_dlogn_main}
For any  polynomial function $T:\mathbb{N}^+\to \mathbb{N}^+$, $\SIZE[T(n)] \subseteq \Cot[T(n)][ \log n][1]$.	In particular, $\Ppoly = \Cot[\poly(n)][ \log n][1]$.
\end{theorem}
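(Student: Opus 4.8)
The plan is to prove $\SIZE[T(n)] \subseteq \Cot[T(n)][\log n][1]$ by showing that a constant-depth, constant-precision transformer with $O(\log n)$ embedding size can simulate the evaluation of a size-$T(n)$ Boolean circuit gate-by-gate, using one CoT step per gate. The containment $\Ppoly = \Cot[\poly(n)][\log n][1]$ then follows by taking $T(n)$ over all polynomials (one direction), combined with the upper bound argument (the other direction): since each forward pass is simulable by a bounded circuit and there are $T(n) = \poly(n)$ of them, the overall computation is in $\Ppoly$.

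For the main inclusion, first I would fix a convenient serialization of the circuit $C_n$ of size $T(n)$: topologically sort the gates $g_1, \dots, g_{T(n)}$ so that each gate's inputs are either original input bits $x_1, \dots, x_n$ or earlier gates $g_j$ with $j < i$. The key encoding decision is to let the CoT token at step $i$ be the computed value of gate $g_i$, and to use the position encoding $\posencoding$ to hardwire the circuit's wiring — specifically, the position embedding at position $n+i$ should encode (in $O(\log n)$ bits, since gate indices are at most $T(n) = \poly(n)$) the gate type of $g_i$ (one of $\AND/\OR/\NOT$) and the indices of its $\le 2$ input wires. This is the role played by the ``green'' serialization in \Cref{subfig:serialized_circuit}: it is chosen precisely so that the positional information needed to compute $g_i$ lives at position $n+i-1$. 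Then one attention layer, using the stored wire-indices as part of the query, can retrieve the values at the two referenced positions (either among the first $n$ input positions or among the CoT positions already generated); a hard-attention-style construction suffices since we only need to select one position at a time, and with two attention heads (or two single-head layers, as the paper notes) we retrieve both operands. A feed-forward layer with $\relu$ activations then computes the appropriate Boolean function ($\AND$, $\OR$, or $\NOT$) of the two retrieved bits, gated by the gate-type indicator — this is a bounded piece of arithmetic on $\{0,1\}$-valued inputs, easily done with constant precision and constant depth. The output layer reads off this bit as the next token $x_{n+i+1}$. After $T(n)$ such steps the last CoT token equals $g_{T(n)} = C_n(x)$, which is $\gL(x)$, so $\transformer_{\theta_n}^{1+T(n)}(x) = \gL(x)$.

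Several routine-but-necessary checks round this out: (i) the attention mechanism must robustly select the intended position despite softmax being only approximately hard — standard here is to scale the query/key inner products by a large constant (bounded since positions and indices are in $[\poly(n)]$ and representable with $O(\log n)$-bit or, after the fixed-point reduction, $O(1)$-bit-per-coordinate vectors of dimension $O(\log n)$) so that the softmax concentrates on the correct position up to rounding error that does not flip the retrieved Boolean value; (ii) distinguishing ``this position holds an original input bit'' from ``this position holds a CoT/gate value'' and handling the boundary at position $n$ — handled by a flag coordinate in the embeddings; (iii) verifying that $O(\log n)$ embedding size genuinely suffices to hold position indices, gate indices, wire pointers, the gate-type indicator, the retrieved operand bits, and scratch space simultaneously — each is $O(\log n)$ bits, so a constant number of them fits, and this is exactly why $d(n) = \Theta(\log n)$ rather than $O(1)$. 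Since all lower bounds in this section are stated for fixed-point numbers, I would carry out the construction with $e = 0$ and $s = O(1)$ per coordinate, so the computed quantities (position indices, etc.) are spread across $O(\log n)$ coordinates rather than packed into one high-precision number.

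The main obstacle I expect is item (i) together with the interaction between \emph{iterative} rounding in the attention softmax (as formalized in \Cref{defi:rounding} and the $\sumes$ operation) and the need for exact, discrete gate values: one must argue that even though $\softmax_{e,s}$ with constant precision cannot form a truly sharp distribution, the weighted sum $\sum_j (s_i)_j v_j$ — itself computed with iterative rounding — still rounds to the correct operand bit. Concretely, the danger is that the ``leaked'' attention mass on $\Omega(n)$ wrong positions, each contributing a rounded term, could accumulate and corrupt the answer; the fix is to make the value vectors $v_j$ have tiny magnitude on a dedicated ``answer'' coordinate except at the target position, or equivalently to route the selection through a coordinate where the off-target contributions round to $0$ individually and hence sum to something that rounds to the clean bit. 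Making this quantitatively airtight under the constant-bit-precision model — as opposed to the log-precision model where one has more slack — is the delicate part, and is presumably why the theorem is stated with $s = 1$ and the authors emphasize the realism of their rounding model. The rest (circuit serialization, the feed-forward Boolean computation, bookkeeping of embedding coordinates) is conceptually straightforward.
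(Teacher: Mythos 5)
Your high-level construction is the same as the paper's: topologically serialize the circuit, let the $i$-th CoT token be the value of gate $g_i$, hardwire the gate type and the two input-wire indices into the position encoding at the corresponding position (in $O(\log n)$ coordinates of signed bits), use attention (two heads or two single-head layers; the paper uses two layers) to fetch the two operand bits, and use a constant-size ReLU feed-forward block to evaluate the gate, e.g.\ the paper's $F(a,b,c)=\relu(1-a-c)+\relu(a+b+c-2)$ after reducing to an $\{\AND,\NOT\}$ basis. Your reverse direction for $\Ppoly=\Cot[\poly(n)][\log n][1]$ (each forward pass is a poly-size circuit, composed $\poly(n)$ times) also matches the paper.

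However, there is a genuine gap exactly at the point you flag as ``the delicate part'' and leave unresolved: how retrieval can be made \emph{exact} under constant-precision softmax with iterative rounding. The two fixes you sketch do not work as stated. Scaling the query/key inner products ``by a large constant'' is unavailable here: with $O(1)$-bit fixed-point arithmetic every representable number is bounded by $B_s=O(1)$ and the iteratively rounded inner product of two $O(\log n)$-dimensional vectors saturates at $\pm B_s$, so you cannot create a score gap that grows with $n$ to beat $\Omega(n)$ positions of leaked mass; and making the value vectors ``tiny off-target'' is not possible because the value at position $j$ is local to $j$ and cannot depend on which position the query targets. The paper closes this gap with \Cref{lem:attention_rounding} (together with \Cref{lem:exp_rounding}): queries are the interleaving of $\sbin_k(\cdot)$ with $1_s$ and keys are $B_s$ times the interleaving of $\sbin_k(\cdot)$ with $-1_s$, so that under iterative rounding any single bit mismatch drives the partial sum to $-B_s$, where it saturates, while a full match yields inner product $0$; since $\rds{\exp(-B_s)}=0$ and $\exp(0)=1$, the rounded attention scores are \emph{exactly} one-hot, the off-target contributions are exactly zero rather than merely small, and no quantitative leakage analysis is needed. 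In other words, the finite-precision rounding is not an obstacle to be controlled but the very mechanism that yields hard attention; without this (or an equivalent device) your argument does not go through at $s=O(1)$, which is the whole point of the theorem's constant-precision claim.
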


Compared to \Cref{thm:AC0_upper_bound,thm:TC0_upper_bound}, \Cref{thm:cot_dlogn_main} shows that allowing polynomial steps of CoT strictly makes constant-depth, constant-precision, decoder-only transformer more expressive and log-precision transformers more expressive under a standard hardness assumption that $\TC^0\subsetneq \Ppoly$.\footnote{Indeed  such separation can be shown for \emph{any} polynomial steps of CoT by padding polynomially many tokens to input.}

\begin{proof}[Proof sketch of \Cref{thm:cot_dlogn_main}]
	The high-level proof idea is that we use each step in CoT to simulate one gate operation in the target circuit  and write the gate output as next input. To do that, we  use one position encoding to store the information for each gate, which contains four parts: the current gate id, the next gate type $\{\AND,\OR,\NOT,\True,\False\}$, and the two input gates id of the next gate. Since there are total $\poly(n)$ gates, $d(n)=\Theta(\log n)$ embedding size suffices to store the above information. The CoT here is constructed to be the values of each gate in the increasing order of id.  Therefore, in each step, we can use attention to pull the value (either computed already or it is input) of the two input gates and use a feedforward network to compute the value of the current gate. The proof idea is illustrated in \Cref{fig:cot_simulating_circuit}.
\end{proof}
 
As we can see from the proof sketch, a crucial step for CoT to simulate any depth circuit is to write the output token back to the next input position. This action resets the ``depth'' of the intermediate output in the circuit to $0$. Our theory explains the ablation experiment in \cite{wei2022chain} that when the model is prompted to output a only sequence of dots (. . .) equal to the number of tokens needed to solve the problem, the performance is no better than directly outputting the answer.

Because every regular language can be recognized by a finite state automaton (\Cref{defi:automaton}) and finite state automata can clearly be simulated by linear size circuits. The following holds as a direct corollary of \Cref{thm:cot_dlogn_main}
\begin{corollary}\label{cor:regular}
	Every regular language belongs to $\Cot[n][\log n][1]$.
\end{corollary}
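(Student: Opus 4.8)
The plan is to derive Corollary~\ref{cor:regular} directly from Theorem~\ref{thm:cot_dlogn_main} by showing that every regular language is in $\SIZE[O(n)]$, and then instantiating $T(n) = n$ (which is a polynomial, so the theorem applies). So first I would recall the standard characterization: a language $\gL$ is regular iff it is recognized by a deterministic finite automaton (DFA) $A = (Q, \Sigma, \delta, q_0, F)$ with a fixed finite state set $Q$, transition function $\delta: Q\times\Sigma\to Q$, initial state $q_0$, and accepting set $F\subseteq Q$. On input $x = x_1\cdots x_n$, the automaton computes states $q^{(i)} = \delta(q^{(i-1)}, x_i)$ iteratively starting from $q^{(0)}=q_0$, and accepts iff $q^{(n)}\in F$.

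The next step is to build a boolean circuit family $\{C_n\}$ of size $O(n)$ computing $\gL$. Encode each state of $Q$ by a block of $\lceil \log|Q|\rceil = O(1)$ bits, and each input token of $\Sigma$ (after the constant-factor binary encoding mentioned in the paper) by $O(1)$ bits. Since $Q$ and $\Sigma$ are fixed finite sets, the map $(q, a)\mapsto\delta(q,a)$ is a fixed boolean function on $O(1)$ bits, hence computable by a constant-size subcircuit $D$ (a lookup table realized in DNF uses $O(1)$ gates because the number of inputs is constant). I would then chain $n$ copies of $D$ to produce the encodings of $q^{(1)}, q^{(2)}, \ldots, q^{(n)}$ in sequence, wiring the state-output of the $i$-th copy into the state-input of the $(i+1)$-th copy and feeding $x_{i+1}$'s encoding as the token-input; finally apply one more constant-size subcircuit testing $q^{(n)}\in F$ and output that bit. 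The total gate count is $n\cdot O(1) + O(1) = O(n)$, so $\gL\in\SIZE[n]$. Applying Theorem~\ref{thm:cot_dlogn_main} with the polynomial $T(n)=n$ gives $\gL\in\Cot[n][\log n][1]$, which is exactly the claim.

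There is essentially no hard obstacle here; the one point requiring a little care is the non-uniformity issue. Theorem~\ref{thm:cot_dlogn_main} is stated for non-uniform $\SIZE[T(n)]$, and the circuit family above is in fact uniform (every $C_n$ is the same chain of the fixed gadget $D$), so it certainly lands in the non-uniform class and the implication goes through without subtlety. One should also note that $\Cot$ is monotone in $T$, so if one prefers to state it with any $T(n)\ge n$ the conclusion only gets weaker; taking $T(n)=n$ is tight for this construction. This is why the corollary is flagged as "a direct corollary" — the entire content is the elementary observation that DFA execution is a length-$n$ serial composition of a single constant-size boolean map, hence has linear circuit size.
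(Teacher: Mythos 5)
Your proof is correct and matches the paper's argument: the paper also derives the corollary by noting that a regular language's DFA can be simulated by linear-size circuits and then invoking \Cref{thm:cot_dlogn_main} with $T(n)=n$; you have simply spelled out the standard chained-transition-gadget construction that the paper leaves implicit.
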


Below we give a concrete regular language that constant-depth, poly-embedding-size transformers can solve only with CoT, the wording problem of permutation group over $5$ elements, $S_5$ in \Cref{thm:cot_0_n_separation}, under a standard hardness assumption that $\TC^0\subsetneq \NC^1$~\citep{yao1989circuits}.

\begin{definition}[Wording problem of group $G$]\label{defi:wording_problem}
    Given $n$ elements from $G$, $(g_1,\ldots, g_n)$, we use $\gL_G$ to denote the decision problem of whether $g_1\circ g_2\circ \cdots \circ g_n$ is equal to the identity of $G$.
\end{definition}

For convenience, in this paper, we extend the domain of $\gL_G$ to the sequence of groups encoded by binary strings. The proof of \Cref{thm:cot_0_n_separation} is a direct consequence of \Cref{thm:cot_dlogn_main,thm:TC0_upper_bound,thm:barrington}.

\begin{theorem}\label{thm:cot_0_n_separation}
   Assuming $\TC^0\subsetneq \NC^1$, the wording problem of $S_5$ , $\gL_{S_5}$ is in $\Cot[n][\log n][1]$ but not $\T[\poly(n)][ \log n]$.
\end{theorem}

\begin{theorem}[\citet{barrington1986bounded}]\label{thm:barrington}
  \!\!The wording problem of $S_5$ is $\NC^1$-complete under $\AC^0$ reductions. That is, for any decision problem $\gL$ in $\NC^1$, there is a family of $\AC^0$ circuits $\{C_n\}_{n=1}^\infty$ (constant depth, $\poly(n)$ fan-outs), such that for any $n\in\mathbb{N}^+$ and $x\in\{0,1\}^n$, $\gL(x) = \gL_{S_5}(C_n(x)).$  
\end{theorem}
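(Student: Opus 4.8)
The plan is to prove both halves of the $\NC^1$-completeness claim: that $\gL_{S_5}\in\NC^1$, and that every $\gL\in\NC^1$ reduces to $\gL_{S_5}$ under $\AC^0$ reductions. For membership, the product $g_1\circ\cdots\circ g_n$ in $S_5$ is computed by a balanced binary tree of group multiplications of depth $\lceil\log_2 n\rceil$; each multiplication is a fixed map $S_5\times S_5\to S_5$ on a constant-size domain, hence realized by a constant-size, constant-depth, fan-in-$2$ Boolean circuit, and testing the final element against the identity is likewise constant-size. Stacking these gives an $O(\log n)$-depth, $\poly(n)$-size, fan-in-$2$ circuit, so $\gL_{S_5}\in\NC^1$. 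The bulk of the proof is the hardness direction.

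For hardness I would introduce \emph{permutation programs} over $S_5$: a sequence of instructions $\langle i_j,\sigma_j,\tau_j\rangle_{j=1}^{\ell}$ with $i_j\in[n]$ and $\sigma_j,\tau_j\in S_5$, which on input $x\in\{0,1\}^n$ outputs $P(x)=h_1\circ\cdots\circ h_\ell$, where $h_j=\sigma_j$ if $x_{i_j}=1$ and $h_j=\tau_j$ otherwise. Say $P$ \emph{$\Gamma$-computes} a Boolean function $f$ (for a $5$-cycle $\Gamma$) if $P(x)=\Gamma$ whenever $f(x)=1$ and $P(x)=e$ whenever $f(x)=0$. I would then establish three closure properties. (i) \textbf{Conjugation:} replacing every output $g$ in $P$ by $\rho g\rho^{-1}$ turns a $\Gamma$-computing program for $f$ into a $(\rho\Gamma\rho^{-1})$-computing program for $f$ of the same length; since all $5$-cycles in $S_5$ are conjugate, the target $5$-cycle may be chosen freely. (ii) \textbf{Negation:} prepending the constant instruction with output $\Gamma^{-1}$ converts a $\Gamma$-computing program for $f$ into a $\Gamma^{-1}$-computing program for $\lnot f$ of length $\ell+1$; with (i) this handles $\NOT$. (iii) \textbf{Conjunction:} fix two $5$-cycles $\sigma,\tau\in S_5$ whose commutator $[\sigma,\tau]=\sigma\tau\sigma^{-1}\tau^{-1}$ is again a $5$-cycle — such a pair exists (e.g.\ $\sigma=(1\,2\,3\,4\,5)$, $\tau=(1\,3\,5\,4\,2)$), and this is precisely where the non-solvability of $S_5$ enters. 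Given $P_f$ $\sigma$-computing $f$ and $P_g$ $\tau$-computing $g$, form the concatenation $P_f\,P_g\,\overline{P_f}\,\overline{P_g}$, where $\overline{P}$ denotes $P$ with its instruction list reversed and every output replaced by its inverse (so $\overline{P_f}$ $\sigma^{-1}$-computes $f$). A four-case check on $(f(x),g(x))$ shows the product equals $[\sigma,\tau]$ exactly when $f(x)\land g(x)=1$ and equals $e$ otherwise, so $f\land g$ is $[\sigma,\tau]$-computed by a program of length $2(\ell_f+\ell_g)$; $\OR$ follows from $\AND$ and $\NOT$ via De Morgan.

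With these in hand, given $\gL\in\NC^1$ I would take a fan-in-$2$ circuit of depth $d=O(\log n)$ deciding $\gL$ on $\{0,1\}^n$, unfold it into an equivalent formula of size $2^{O(d)}=\poly(n)$, and induct over the formula: the base case is that the literal $x_i$ is $\Gamma$-computed by the one-instruction program $\langle i,\Gamma,e\rangle$, and each internal gate multiplies the length by at most a constant factor (at most $4$ for $\land,\lor$, up to an additive constant) via (i)--(iii); hence the resulting permutation program has length $\ell=\poly(n)$ and $\Gamma$-computes $\gL$ on $\{0,1\}^n$ for some fixed $5$-cycle $\Gamma$. Finally, the $\AC^0$ reduction is immediate: the instruction list is fixed (independent of $x$), so define $C_n(x)$ to emit the sequence $\bigl(h_1(x),\dots,h_\ell(x),\Gamma^{-1}\bigr)$ of elements of $S_5$, each in the constant-length binary encoding used by $\gL_{S_5}$, where the bits of $h_j(x)$ are either constants or a copy/negation of the single bit $x_{i_j}$ according to whether $x_{i_j}$ selects $\sigma_j$ or $\tau_j$. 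Every output bit of $C_n$ depends on at most one input bit, so $\{C_n\}$ is a constant-depth, $\poly(n)$-size circuit family, and $h_1(x)\circ\cdots\circ h_\ell(x)\circ\Gamma^{-1}=e$ iff $P(x)=\Gamma$ iff $\gL(x)=1$; thus $\gL_{S_5}(C_n(x))=\gL(x)$.

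The main obstacle is the conjunction step (iii): setting up permutation programs and the reversed-inverted program $\overline{P}$ with the correct bookkeeping, carrying out the four-case commutator computation, and — most essentially — verifying that $S_5$ contains two $5$-cycles whose commutator is a $5$-cycle (this fails in any solvable group such as $S_4$, which is exactly why $S_5$ is used). A secondary technical point is the formula-size bound $2^{O(d)}=\poly(n)$, which is what keeps the permutation program, and hence the reduction circuit, of polynomial size.
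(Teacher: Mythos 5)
The paper never proves this statement itself: it is imported as Barrington's theorem with a citation to \citet{barrington1986bounded}, so there is no internal proof to compare against. Your blind reconstruction is, in substance, Barrington's original width-$5$ permutation branching program argument, and it is correct: membership via a balanced multiplication tree is standard; the conjugation, negation, and commutator closures are exactly the right lemmas, your pair $\sigma=(1\,2\,3\,4\,5)$, $\tau=(1\,3\,5\,4\,2)$ does have a $5$-cycle commutator (under either composition convention), which is precisely where non-solvability of $S_5$ enters; and the concluding observation that each output bit of $C_n$ is a constant, $x_{i_j}$, or $\lnot x_{i_j}$ makes the reduction not merely $\AC^0$ but a depth-$1$ projection, stronger than the theorem requires. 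Appending $\Gamma^{-1}$ so that $\gL_{S_5}$ tests $P(x)\Gamma^{-1}=e$ correctly converts ``$P(x)=\Gamma$'' into the wording problem.

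One point of precision rather than a gap: the length bound should be stated per level of the depth-$O(\log n)$ formula, not ``per gate.'' The recursion $\ell_{f\wedge g}\le 2(\ell_f+\ell_g)\le 4\max(\ell_f,\ell_g)$ gives $\ell\le 4^{O(\log n)}=\poly(n)$; read literally, a factor of $4$ for each of the $\poly(n)$ gates would suggest an exponential bound. Your unfolding of the circuit into a formula before the induction (needed because shared gates must be duplicated) together with the $2^{O(d)}$ size bound makes the intended argument clear, so the proof stands.
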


\begin{proof}[Proof of \Cref{thm:cot_0_n_separation}]
First $\gL_{S_5}$ is a regular language, thus belonging to $\Cot[n][\log n][1]$ by \Cref{cor:regular}. Since $\gL_{S_5}$ is $\NC^1$-complete by \Cref{thm:barrington}, assuming $\TC^0\subsetneq \NC^1$,  $\gL_{S_5}$ does not belong to $\TC^0$. This proof is completed by applying \Cref{thm:TC0_upper_bound}, which says $\T[\poly(n), \log(n)]\subseteq \TC^0$. 
\end{proof}
 
\begin{figure}[t]
\begin{center}
\includegraphics[width=0.96\textwidth]{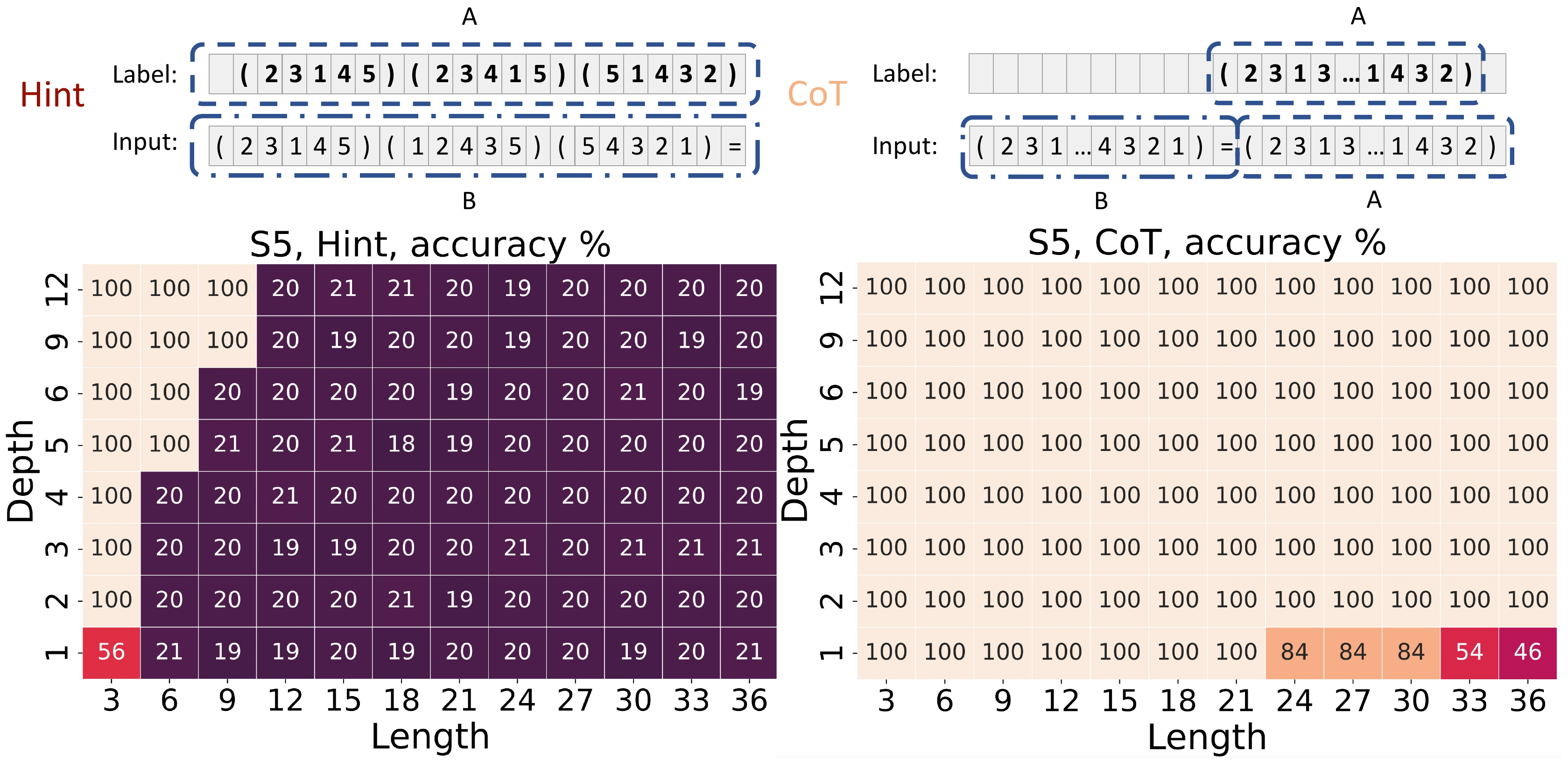}
\caption{Permutation Composition~($S_5$). The label is the composition of all the permutations, where given two permutation $\sigma = (\sigma_1,\ldots,\sigma_5)$, $\pi = (\pi_1,\ldots,\pi_5)$, we define $\sigma\circ \pi \triangleq (\sigma_{\pi_1},\ldots, \sigma_{\pi_5})$. The chain of thoughts and hints are the partial compositions (string A). 
Only CoT can solve this task well, as predicted by our \Cref{thm:cot_0_n_separation}. Note for the most time the accuracy without CoT is $\sim20\%$, which is no better than randomly guessing a number between $1$ and $5$.
}\label{fig:s5}
\end{center}
 
\end{figure}

\paragraph{Results for $\poly(n)$ embedding size:} So far we have been focusing on the expressiveness of transformer with $O(\log n)$ embedding size, 
so it is natural to ask whether transformers can also benefit from having a larger embedding size, say $\poly(n)$? Our~\Cref{thm:cot_dpolyn_slogn_main} answers this question positively by showing that log-precision (resp. constant-precision) constant-depth poly-embedding-size decoder-only transformers with $T(n)$ steps of CoT can simulate any $T(n)$-size circuit with some $\TC^0$ (resp. $\AC^0$) oracle gates with $\poly(n)$ input. 

Formally, given a decision problem $\gL:\cup_{n=1}^\infty \{0,1\}^n\to \{0,1\}$, we use $\gL_n$ to denote the restriction of $\gL$ on $\{0,1\}^n$, which can also be viewed as an single gate with $n$ fan-ins.  We define problems that can be solved by circuits with certain sizes of gates (including oracle gates) by \Cref{defi:circuits_with_oracle}. \footnote{Our definition of complexity class solvable by circuits with oracle is slightly different from that in literature \citep{wilson1985relativized}, where the size of the oracle circuits refers to the number of wires, whereas ours refers to the number of gates.}

\begin{definition}[$\SIZE^\gL$]\label{defi:circuits_with_oracle}
	For any decision problem $\gL$ and $T(n)\subseteq O(\poly(n))$, we define $\SIZE^\gL(T(n))$ as the set of decision problems $\gL'$ such that there exists $p(n)\in \poly(n)$ and circuits $\{C_n\}_{n=1}^\infty$ where $C_n$ contains at most $O(T(n))$ $\AND$, $\OR$, $\NOT$, and $\gL_{p(n)}$ gates.
	 For a complexity class $\gC$, we define $\SIZE^{\gC}(T(n))\triangleq \cup_{\gL\in\gC}\SIZE^{\gL}(T(n))$.
\end{definition}

\begin{theorem}\label{thm:cot_dpolyn_slogn_main}
For any $T(n)\in\poly(n)$, it holds that $\SIZE^{\TC^0}[1+T(n)] = \Cot[T(n)][ \poly(n)][\log n]$.
Specifically, for $T(n) = 0$, we have   $\TC^0 = \SIZE^{\TC^0}[1]  = \Cot[0][ \poly(n)][\log n ] =\T[\poly(n)][\log n]$.
\end{theorem}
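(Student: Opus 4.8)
\emph{Proof plan.} The statement is an equality of problem classes, so the plan is to establish the two inclusions $\SIZE^{\TC^0}[1+T(n)]\subseteq\Cot[T(n)][\poly(n)][\log n]$ and $\Cot[T(n)][\poly(n)][\log n]\subseteq\SIZE^{\TC^0}[1+T(n)]$. It is cleanest to extract the case $T(n)=0$ first: since $\Cot[0][\cdot][\cdot]=\T[\cdot][\cdot]$ by definition, and since a circuit with $O(1)$ gates over $\AND,\OR,\NOT$ and $\TC^0$-oracle gates computes a constant-depth composition of $\TC^0$ functions---whence $\SIZE^{\TC^0}[1]=\TC^0$---the $T(n)=0$ case amounts to the claim $\TC^0=\T[\poly(n)][\log n]$. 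One direction, $\T[\poly(n)][\log n]\subseteq\TC^0$, is \Cref{thm:TC0_upper_bound} taken with zero exponent bits; the converse $\TC^0\subseteq\T[\poly(n)][\log n]$ needs an explicit construction, which I also reuse in the general argument, so I would prove it first.

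To simulate a $\TC^0$ circuit family of depth $\Delta=O(1)$ and size $p(n)=\poly(n)$ by a constant-depth transformer of embedding size $O(p(n))$ and $O(\log n)$ fixed-point precision, the key primitive is a ``gather'' gadget: a feed-forward layer forms at each position $t$ the vector $v_t\,e_t$ (the $\{0,1\}$ value $v_t$ present at that position times the positional indicator $e_t$); one attention layer with all attention scores equal sends to the last position the average $\tfrac1m\sum_{t\le m}v_t e_t$; and a feed-forward layer that multiplies coordinate $t$ by the known length $m$ recovers $v_t$ from $m\cdot\round(1/m)\,v_t$ after rounding, depositing all earlier values in dedicated coordinates---this is the one place where $\Omega(\log n)$ precision is used. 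Once all $n$ input bits sit in one position, the circuit is evaluated purely pointwise, layer by layer: every $\AND,\OR,\NOT,\MAJORITY$ gate has the form $\1[\,\text{integer linear form of the preceding layer's values}\ge\tau\,]$, and for an integer $x$ the identity $\1[x\ge\tau]=\relu(x-\tau+1)-\relu(x-\tau)$ lets a feed-forward $\relu$ network compute it \emph{exactly}, as all partial sums are integers at most $\poly(n)$. With $\Delta$ such layers plus constantly many for the gather, the transformer has depth $O(1)$ and embedding $O(p(n))=\poly(n)$, which gives $\TC^0\subseteq\T[\poly(n)][\log n]$ and hence the $T(n)=0$ case.

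For general $T(n)$, the inclusion $\SIZE^{\TC^0}[1+T(n)]\subseteq\Cot[T(n)][\poly(n)][\log n]$ follows the gate-by-gate chain-of-thought simulation of \Cref{thm:cot_dlogn_main} with the larger $\poly(n)$ embedding. Given a circuit $C$ of size $O(1+T(n))$ over $\AND,\OR,\NOT,\gL_{p(n)}$ gates with $\gL\in\TC^0$ (\Cref{defi:circuits_with_oracle}), the transformer emits its $j$-th intermediate token as the value of the $j$-th gate of $C$ in a fixed topological order; at each step the gather gadget (uniform attention over all earlier positions) brings all already-known wire values into the last position's embedding, a feed-forward layer routes them into the input ports of gate $j+1$ according to the wiring stored in the position encoding, a feed-forward threshold computes an $\AND/\OR/\NOT$ gate or the pointwise part of the $\TC^0$-simulating sub-transformer is inlined as $O(1)$ extra layers for a $\gL_{p(n)}$ gate, and a final feed-forward layer writes the resulting bit as the next token; the output gate is produced by the $(1+T(n))$-th forward pass. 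Conversely, for $\Cot[T(n)][\poly(n)][\log n]\subseteq\SIZE^{\TC^0}[1+T(n)]$, I would invoke the fact---established inside the proof of \Cref{thm:TC0_upper_bound}---that a single forward pass of a constant-depth, $\poly(n)$-width, $\log n$-precision fixed-point decoder-only transformer on a $\poly(n)$-length input is computed by a $\TC^0$ circuit family; packaging this next-token map (one output bit, queried $O(1)$ times per step) as a single problem $\gL\in\TC^0$, the composition $\transformer^{1+T(n)}_{\theta_n}$ is realized by chaining $1+T(n)$ oracle gates $\gL_{p(n)}$---the $i$-th fed the input together with the $i-1$ tokens already produced (a string of length $\le n+T(n)=\poly(n)$)---plus $O(1)$ glue gates decoding the final token into the answer, i.e.\ a circuit of size $O(1+T(n))$; together with $\SIZE^{\TC^0}[1]=\TC^0$ this also establishes the displayed chain of equalities.

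The step I expect to be the main obstacle is the finite-precision accounting throughout these constructions. Under the iterative rounding of \Cref{defi:rounding} with $O(\log n)$ significand bits and no exponent bits, one must verify that $m\cdot\round(1/m)$ rounds back to $1$ so that the gather gadget recovers every selected bit exactly, that every threshold---in particular $\MAJORITY$---gate is computed with zero error because its partial sums are exactly representable integers, and that positions, fan-ins, counts and partial sums never overflow across the $O(1)$ layers. The remaining care---matching $T(n)$ steps of CoT with $1+T(n)$ forward passes and oracle gates, and handling the trivial pieces of the oracle circuits---is routine.
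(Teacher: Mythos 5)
Your proposal is correct and takes essentially the same route as the paper: the inclusion $\Cot[T(n)][\poly(n)][\log n]\subseteq\SIZE^{\TC^0}[1+T(n)]$ is obtained by packaging each forward pass as a $\TC^0$ oracle gate via \Cref{thm:TC0_upper_bound}, and the converse is a transformer that, per CoT step, attention-copies all previously produced token values into the current position and evaluates the constant-depth threshold circuitry exactly with ReLU feed-forward layers at $O(\log n)$ fixed-point precision (the paper's \Cref{lem:FB_simulating_boolean_gates} indicator trick), with the $T(n)=0$ case giving $\TC^0=\T[\poly(n)][\log n]$. Your only deviations are cosmetic: you gather with uniform attention and rescale by $m$, a variant the paper explicitly notes works at $\log n$ precision (its main construction instead saturates the attention logits at $B_{s(n)}$ so each rounded weight is exactly $1$), and you store the oracle-circuit wiring in the position encodings rather than first reducing to per-step circuits with a one-hot step selector as in \Cref{lem:size_Tn_with_TC0_oracle}.
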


\begin{theorem}\label{thm:cot_dpolyn_s1_main}
For any $T(n)\in\poly(n)$, it holds that $\SIZE^{\AC^0}[1+ T(n)] = \Cot[T(n)][ \poly(n)][1]$.
Specifically, for $T(n) = 0$, we have   $\AC^0 = \SIZE^{\AC^0}[1] =\Cot[0][ \poly(n)][1] =\T[\poly(n)][1]$.
\end{theorem}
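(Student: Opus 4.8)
The plan is to establish the two inclusions $\Cot[T(n)][\poly(n)][1]\subseteq \SIZE^{\AC^0}[1+T(n)]$ and $\SIZE^{\AC^0}[1+T(n)]\subseteq \Cot[T(n)][\poly(n)][1]$ separately; the identities at $T(n)=0$ then follow, with the base inclusion $\AC^0\subseteq \T[\poly(n)][1]$ carrying most of the weight.

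For $\Cot[T(n)][\poly(n)][1]\subseteq \SIZE^{\AC^0}[1+T(n)]$, I would first observe that the proof of \Cref{thm:AC0_upper_bound} gives more than its statement: a single forward pass of a constant-depth, constant-precision, $\poly(n)$-embedding transformer is $\AC^0$-computable gate by gate — the one non-trivial ingredient being that $\sumes$ over $\Floating_{e,s}$ with constant $e,s$ has $\AC^0$ circuits — so each bit of the predicted next token is an $\AC^0$ function of the (zero-padded) input sequence. Collecting these functions over all relevant input lengths and the $O(1)$ output-bit positions, and appending the length and the bit index in binary, yields a single language $\gL^\star\in\AC^0$. One group of $O(\log|\gV|)=O(1)$ oracle gates $\gL^\star_{p(n)}$ maps the tokens produced so far to the next token; composing $O(1+T(n))$ such groups (the last producing the answer) gives a circuit with $O(1+T(n))$ oracle gates and $O(1)$ extra wiring, i.e. a member of $\SIZE^{\gL^\star}[1+T(n)]\subseteq \SIZE^{\AC^0}[1+T(n)]$.

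For the reverse inclusion I would combine a base construction with a chaining argument. \textbf{Base case} $\AC^0\subseteq \T[\poly(n)][1]$: simulate a depth-$O(1)$, size-$\poly(n)$ circuit by a constant-depth transformer, exploiting two features of finite-precision attention. First, $\rdes{\exp(-M)}=0$ for a large enough constant $M$, so giving a position logit $-M$ erases it exactly (yielding masking and exact single-position retrieval). Second, the normalizer $\sumes(\rdes{\exp(\cdot)})$ inside $\softmax_{e,s}$ saturates at the constant $B_{e,s}$, so every position with logit $+M$ keeps $\Theta(1)$ weight no matter how many such positions there are. Hence one attention layer — logit $+M$ exactly on the positions holding the inputs of a gate, $-M$ elsewhere, with value vectors carrying both $v_i$ and $1-v_i$ — produces, after the $\sumes$-aggregation, a coordinate that is $0$ iff all those inputs are $0$ and another that is $0$ iff all are $1$; the position-wise feed-forward then thresholds and, reading the gate's type from the embedding, outputs an unbounded-fan-in $\OR$, $\AND$, or $\NOT$. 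Evaluating the circuit level by level, using the $n$ token positions together with the $\poly(n)$ embedding coordinates as parallel workspace, gives the transformer. \textbf{Chaining} (as in \Cref{thm:cot_dlogn_main}): given $C_n$ with $m(n)=O(1+T(n))$ gates of types $\AND,\OR,\NOT,\gL_{p(n)}$, topologically order them and spend one CoT step per gate, writing each gate's value as the next token; the position encoding of step $j$ stores $g_j$'s type and the list of earlier positions feeding it — a $\poly(n)$-bit list that fits in the $\poly(n)$ embedding — so attention pulls those values and the feed-forward applies the operation, wide $\AND/\OR/\NOT$ and $\AC^0$-oracle evaluations being handled exactly as in the base case. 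Since $m(n)=O(T(n))$ steps suffice, the problem lies in $\Cot[T(n)][\poly(n)][1]$.

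The main obstacle is the base case under the constant-precision model. Two points need care. (a) The attention gadgets must be \emph{exactly} correct: since logits are themselves rounded and clamped to $O(1)$ and $\sumes$ overflows at $B_{e,s}$, one must verify that matched and unmatched weights, and all aggregated partial sums, stay separated after rounding, and that the feed-forward thresholds tolerate the $\Theta(1)$ slack coming from the saturated normalizer. (b) The transformer depth must be a constant \emph{independent of the circuit size}: a single $\AC^0$-circuit level can have $\poly(n)\gg n$ gates, which cannot each occupy a distinct token, so one must show that all gates of one level are evaluable within $O(1)$ transformer layers by spreading them over position-coordinate cells (within-cell combinations by feed-forward, across-cell $\OR/\AND$ by attention). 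The same layout question recurs in the chaining step when an oracle gate must gather its $\poly(n)$ scattered inputs; there the $\poly(n)$ embedding again supplies the room, but fitting the sub-transformer for $\gL$ onto the relevant slice of the CoT sequence is where the bookkeeping concentrates.
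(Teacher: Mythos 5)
Your upper-bound direction ($\Cot[T(n)][\poly(n)][1]\subseteq\SIZE^{\AC^0}[1+T(n)]$) is sound and is essentially the paper's argument: invoke \Cref{thm:AC0_upper_bound} once per generated token, so each of the $1+T(n)$ forward passes becomes one $\AC^0$ oracle gate. The gap is in the reverse direction, concentrated in your base case $\AC^0\subseteq\T[\poly(n)][1]$. You propose to realize each unbounded-fan-in gate by an attention gadget (logit $+M$ on the positions feeding the gate, $-M$ elsewhere) and to cope with a level having $\poly(n)\gg n$ gates by ``spreading them over position-coordinate cells, across-cell $\OR$/$\AND$ by attention''. But in the architecture of \Cref{alg:defi_attn}, each position in each attention layer (and each of the constantly many heads) produces exactly \emph{one} softmax-weighted aggregate of value vectors: the query $q_i$ is a single vector determined by position $i$'s embedding, so position $i$ cannot carry a different logit pattern for each of the many gates assigned to it. With constant depth, $O(1)$ heads, and only $n+T(n)$ positions, you get only $O(n+T(n))$ selective aggregations in total, far short of the $\poly(n)$ per-gate aggregations your gadget needs per circuit level. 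Moreover, once a level's values sit as coordinates inside embeddings rather than as separate tokens, attention cannot weight individual coordinates differently per gate (the value map $W_V$ is fixed for the layer). You flag exactly this as ``the main obstacle'', but it is not mere bookkeeping: as stated the base construction does not go through, and the oracle-gate step of your chaining argument inherits the same problem.

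The paper's proof (via \Cref{thm:cot_dpolyn_slogn_main}) avoids this by never using attention to compute gates. Attention is used once, with logits saturated at $B_s$ so that, by \Cref{lem:exp_rounding_up} and the iterative-rounding rules, every prefix position receives weight exactly $1$, to copy all previously available bits into distinct coordinates of the current position's $\poly(n)$-wide embedding. Then \Cref{lem:FB_simulating_boolean_gates} shows each unbounded-fan-in $\AND$/$\OR$ (with $\NOT$ folded in) is computable by $O(1)$ ReLU neurons at constant precision, so a single position-wise feed-forward layer of width $\poly(n)$ evaluates an entire circuit level of $\poly(n)$ gates, and $L$ such layers evaluate the whole $\AC^0$ circuit at one position. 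Weight sharing across CoT steps is handled by \Cref{lem:size_Tn_with_TC0_oracle}, which packages the per-step circuits into one circuit taking the step index $e_i$ (supplied by the position encoding) as an extra input. If you reroute your base case and your oracle-gate evaluation through this copy-then-feed-forward scheme, the remainder of your outline (the chaining and the $T(n)=0$ specializations) goes through.
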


\Cref{thm:cot_dpolyn_s1_main} shows that for $T(n) = \poly(n)$ steps of CoT, using $\poly(n)$ embedding size does not improve expressiveness over using $\log(n)$ embedding size~(\Cref{thm:cot_dlogn_main}), because $\SIZE^{\TC^0}[\poly(n)] = \SIZE^{\AC^0}[\poly(n)] = \SIZE[\poly(n)]$. However,  \Cref{thm:transformer_polyn_stronger_than_logn} shows that for any specific polynomial $T(n)=n^k$ steps of CoT, increasing embedding width from $O(\log (n))$ to $\poly(n)$  make transformers strictly more powerful. 

\begin{theorem}\label{thm:transformer_polyn_stronger_than_logn}
	For any $s(n) = O(\log n)$, $\T[\log n][s(n)]\subsetneq \T[\poly(n)][s(n)]$ and for all $ k\in\mathbb{N}$, $\Cot[n^k][ \log n][s(n)]\subsetneq \Cot[n^k][\poly(n)][s(n)]$. 
\end{theorem}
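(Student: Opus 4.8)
The containments $\subseteq$ are immediate from the monotonicity of $\Cot$ in the embedding size noted earlier, so the entire content is the strictness, and the plan is a non‑uniform counting/diagonalization argument of the flavour used in circuit lower bounds. The key point is an asymmetry in ``information content'': a constant‑depth transformer with $O(\log n)$ embedding size and $O(\log n)$ bits of fixed‑point precision is specified by very few bits, so a fixed such architecture can compute only few functions on $\{0,1\}^n$, whereas the wide‑embedding class already contains all of $\AC^0$, which contains far more functions.

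First I would exhibit a large family of functions living in the wide‑embedding class. By \Cref{thm:cot_dpolyn_s1_main} together with monotonicity of $\Cot$ in the precision, $\AC^0 = \T[\poly(n)][1] \subseteq \T[\poly(n)][s(n)]$, and likewise $\AC^0 = \Cot[0][\poly(n)][1] \subseteq \Cot[n^k][\poly(n)][s(n)]$. For any $m(n) = O(\log n)$, every function depending on at most $m(n)$ input coordinates (an \emph{$m(n)$-junta}) lies in $\AC^0$, because it is computed by a DNF of width $m(n)$ with at most $2^{m(n)} = \poly(n)$ terms; and, fixing the coordinate set $J = \{1,\dots,m(n)\}$, the $2^{2^{m(n)}}$ juntas supported on $J$ are pairwise distinct as functions on $\{0,1\}^n$. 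Thus the wide class contains a family of functions of size doubly exponential in $m(n)$.

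Next I would upper bound the narrow class. Fix the depth $L$ and the big‑$O$ constants $C_d, C_s$ implicit in ``$d'(n) \le C_d\log n$, $s'(n) \le C_s s(n)$''. Since with only $0$ (resp.\ $n^k$) CoT steps the only position encodings ever attended to are those at positions $\le N$ with $N = n$ (resp.\ $N = n + n^k$), a depth‑$L$ decoder‑only transformer over inputs of length $n$ is fully specified by its token/output embeddings and per‑layer attention/feed‑forward matrices, which use $O_{L,C_d,C_s}(\log^2 n)$ fixed‑point entries, plus its $\le N$ relevant position‑encoding columns, which use $O_{C_d}(N\log n)$ fixed‑point entries, each entry being a fixed‑point number with $O(\log n)$ bits (here I rely on the finite‑precision modeling of \Cref{subsec:finite_precision} for both the parameters and the position encodings). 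Hence there are at most $2^{P(n)}$ such transformers with $P(n) = O_{L,C_d,C_s}(N\log^2 n)$, so they realize at most $2^{P(n)}$ distinct functions on $\{0,1\}^n$; choosing $m(n) = \Theta(\log n)$ slightly above $\log_2 N$ makes $2^{2^{m(n)}}$ dwarf $2^{P(n)}$.

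To extract a single separating problem I would diagonalize over all triples $t = (L, C_d, C_s)$. Enumerate them as $t_1, t_2, \dots$, with $t_i$ contributing the bound $P_{t_i}(n) \le c_{t_i} N \log^2 n$; pick a function $g(n)\to\infty$ slowly enough that the total number of length‑$n$ transformers of types $t_1,\dots,t_{g(n)}$ stays below $2^{2^{m(n)}}$, and let $\gL_n$ be the lexicographically first $m(n)$-junta on $\{1,\dots,m(n)\}$ that none of those transformers computes on $\{0,1\}^n$. Then $\gL = \{\gL_n\}_n$ is a $\poly(n)$-size constant‑depth circuit family, hence in $\AC^0$ and therefore in $\T[\poly(n)][s(n)]$ (resp.\ $\Cot[n^k][\poly(n)][s(n)]$); but for every triple $t_i$, once $n$ is large enough that $i \le g(n)$ the problem $\gL_n$ evades all type‑$t_i$ transformers, and any family witnessing membership in $\T[\log n][s(n)]$ (resp.\ $\Cot[n^k][\log n][s(n)]$) is of type $t_i$ for some fixed $i$ at all large $n$, so $\gL$ lies in neither narrow class. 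I expect the main obstacle to be the third step: making the ``$\tilde O(N)$ bits suffice to describe a narrow finite‑precision transformer'' claim airtight (carefully discarding unused positions and accounting for the fixed‑point encoding of every scalar, including softmax normalizers and matrix‑product intermediates that only matter up to their effect on the realized function), and then tuning $g(n)$ and $m(n)$ so the two counts line up while keeping $m(n) = O(\log n)$ so the juntas remain in $\AC^0$.
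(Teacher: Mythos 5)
Your proposal is correct in substance, but it takes a genuinely different route from the paper. The paper's proof factors through fixed-polynomial circuit size: it first shows (\Cref{lem:TC0_not_in_fixed_poly}) that $\AC^0\not\subseteq\SIZE[n^{k'}]$ for any fixed $k'$ via a counting argument at the circuit level (there are more $n^{k'+1}$-clause CNFs than $Cn^{k'}$-size circuits), then observes that any constant-depth transformer with $O(\log n)$ embedding size and $O(\log n)$ precision, run for $O(n^k)$ CoT steps, can be simulated by circuits of size $n^{k'}$ for a fixed $k'$ depending only on $k$ (the degree is independent of the depth and the big-$O$ constants, which are absorbed into the $O(\cdot)$ in $\SIZE$), and finally uses $\AC^0\subseteq\Cot[n^k][\poly(n)][1]$ from \Cref{thm:cot_dpolyn_s1_main,thm:cot_dpolyn_slogn_main} to conclude. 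You instead count the narrow-class transformers directly by description length ($\tilde O((n+T'(n))\log^2 n)$ bits of fixed-point parameters once unused position encodings are discarded), pit them against the doubly-exponential family of $O(\log n)$-juntas inside $\AC^0$, and diagonalize over architecture ``types'' $(L,C_d,C_s)$ to get a single separating language; this avoids the transformer-to-circuit simulation entirely but forces you to handle the big-$O$ constants explicitly via the type enumeration, which the paper sidesteps because constants vanish into $\SIZE[n^{k'}]$. Your argument goes through (the realized function is indeed determined by the truncated parameter list, so the worry you flag about softmax normalizers is moot), with two small touch-ups: the type tuple must also include the constant hidden in $T'(n)=O(n^k)$ (you wrote $N=n+n^k$ rather than $n+C_Tn^k$) and, like the paper, you implicitly use $1=O(s(n))$ to push $\AC^0=\T[\poly(n)][1]$ into the $s(n)$-precision wide class. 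Net comparison: the paper's proof is shorter given the circuit-upper-bound machinery already developed; yours is more self-contained at the transformer level and makes the quantitative counting gap explicit, at the cost of a fussier bookkeeping step.
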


%
%

\begin{figure}[t]
 
\begin{center}
\includegraphics[width=0.96\textwidth]{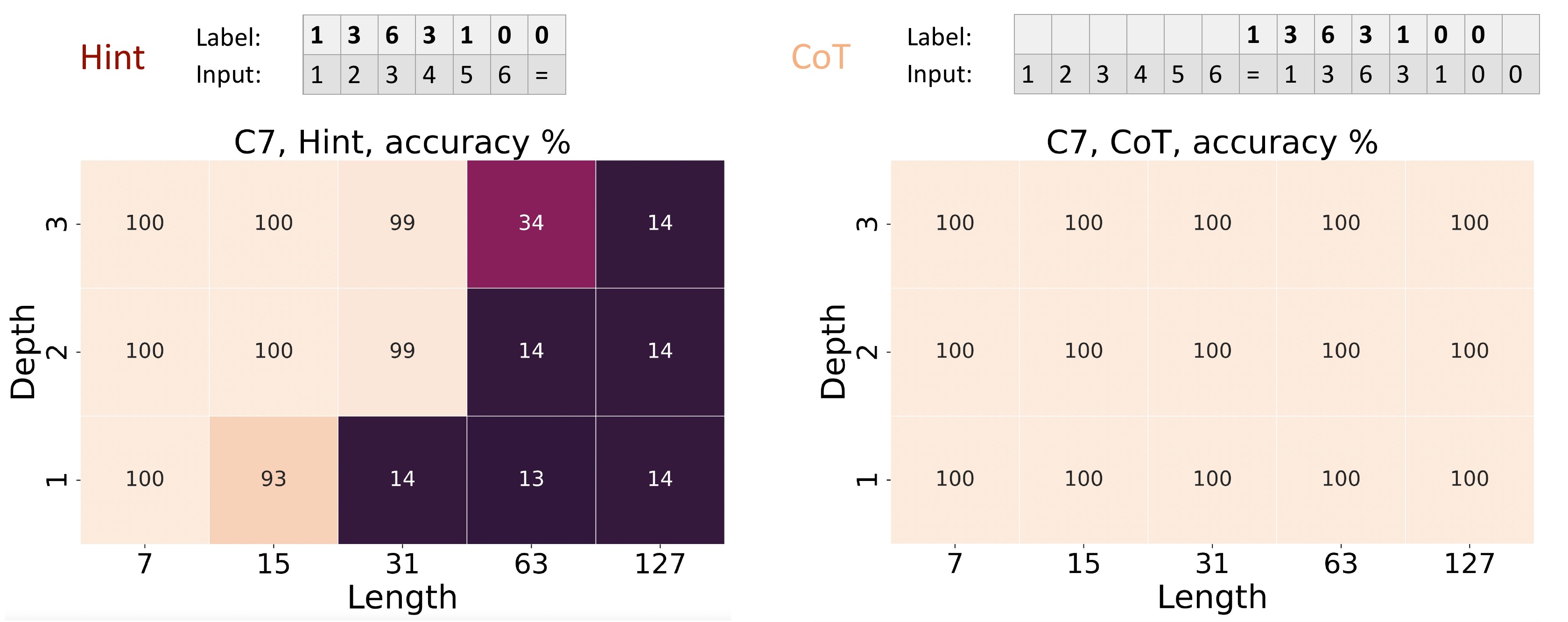}
 
\caption{ Modular Addition($C_7$). The label is the sum of the inputs modulo a positive integer, which is $7$ in this case. The chain of thoughts and hints are the partial modular sum. Low-depth transformers with \hint\ can solve this task well for a reasonable input sequence length, but with \cot\ the performance is much better, especially with a long input sequence, as predicted by our \Cref{thm:cot_dlogn_main}. See experiments for $C_2$ in \Cref{fig:c2}.}  
 \label{fig:modular_addition}
\end{center}
 
\end{figure}

\section{CoT Empirically Improves Expressiveness of Low-Depth Transformers on Inherently Serial Problems}\label{sec:experiments}

This section is an empirical study of the expressiveness of decoder-only transformers with CoT on four different arithmetic problems: modular addition, permutation composition ($S_5$), iterated squaring, and circuit value problem. The first problem is parallelizable and can be solved by constant-depth transformers with log-precision while the latter three are inherently serial under some standard hardness assumptions in computational complexity or cryptography. As a prediction of our theory, we expect to see a huge improvement in accuracy when CoT is turned on.

\paragraph{General Setup.} 
To examine the expressiveness of decode-only transformers with and without CoT on these four types of problems, we train the transformer using $\Adam$~\citep{kingma2014adam} from random initialization in the online supervised setting for each problem and each different sequence length $n$. At each step, we sample a batch of training data from a distribution $p_n(x)$ where  $x= (x_1,\ldots,x_n)$ is training data and $y=f^*(x_1,\ldots,x_n)$ is the label. We always set $x_n$ to be '$=$'. We consider three different settings, \base, \cot, and  \hint:
\begin{itemize}
	\item \base: The optimization objective is simply $\loss_{\base}(\theta) \triangleq \Exp_{x\sim p}-\log p_\theta(f^*(x)\mid x)$.
	\item \cot: We manually design a chain of thought for each instance $x$, which is a string in $\gV$ and we denote by $c(x)$. We ensure the last token of $c(x)$ is always equal to the answer, $f^*(x)$. With $\tilde x\triangleq (x,c(x))$, the concatenation of $x$ and $c(x)$, and $m$ be the length of $c(x)$, the optimization objective is
		$\loss_\cot(\theta)\triangleq \frac{1}{m}\Exp_{x\sim p}\sum\nolimits_{i=n}^{n+m-1} -\ln p_\theta(\tilde{x}_{i+1}\mid \tilde{x}_{1},\dots, \tilde{x}_{i})).$

	\item \hint: Even if the transformer has better performance in \cot\ setting than \base\ setting, one may argue that besides the difference expressiveness, \cot\ setting also has a statistical advantage over \base,  as \cot\ provides more labels and thus more information about the groundtruth $f^*$ to the model. This motivates us to design the following loss which provides the chain of thought $c(x)$ as the labels.
		Here for simplicity, we assume the length of $c(x)$ is equal to $n$. \footnote{Note such alignment is in general  impossible because the CoT $c(x)$ can be even longer than $x$ itself. However, in our four settings, there exist meaningful ways to set CoT $c(x)$ as hints for earlier tokens in $x$. } Formally we define  
		$\loss_\hint(\theta)\triangleq \frac{1}{n}\Exp_{x\sim p}\sum\nolimits_{i=1}^{n}-\ln p_\theta(c_i(x)\mid {x}_{1},\dots, {x}_{i})).$	
\end{itemize}

 \begin{figure}[t]
 
\begin{center}
\includegraphics[width=0.96\textwidth]{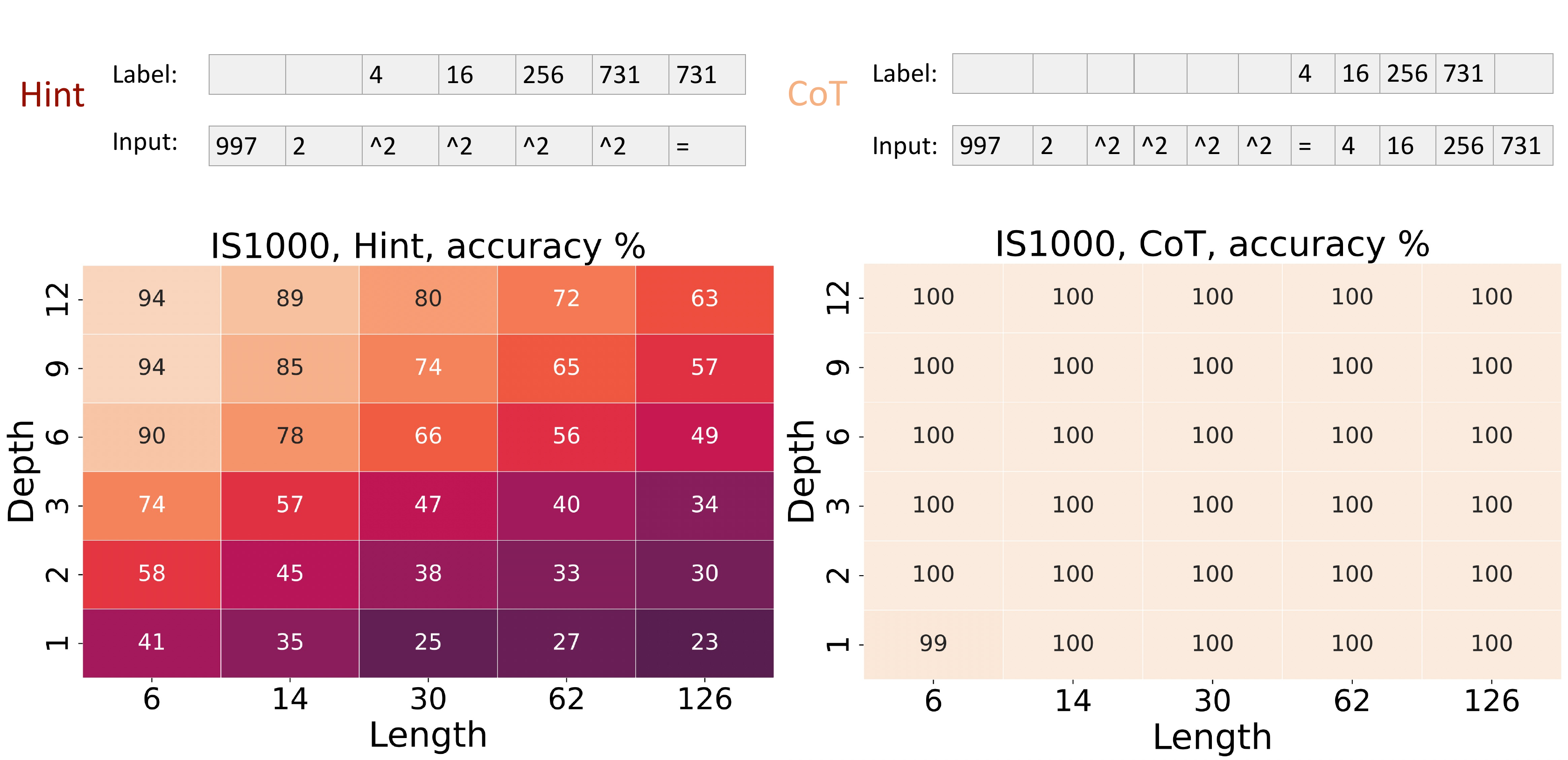}
 
\caption{\small Iterated Squaring(IS). The vocabulary  $\gV\triangleq\{0,1,\ldots,T-1,=, \text{\^{} 2}\}$ with $T=1000$. We randomly generate input of format  $(p,r, {\text{\^{} 2},\ldots,\text{\^{} 2}}, =)$ with $1\le r,p\le T-1$, $p$ being a prime and random number of \^{}2 tokens (at most $m$). 
The label is $f_{r,p}(n)\equiv (r^{2^n})\mod p$. CoT and hints are $(f_{r,p}(i))_{i=1}^n$. Though our construction does not exactly satisfy the technical conditions of the hardness assumption, this problem is difficult for  transformers without CoT to learn, but can be perfectly expressed with CoT even with depth  1.}\label{fig:iterated_square}
\end{center}
 
\end{figure}

\paragraph{Performance Evaluation.} Since we train transformers using fresh sampled synthetic data each step, the training accuracy/loss is just the same as validation accuracy/loss. For \base~and \hint~setting, we evaluate the accuracy of the final answer directly. For \cot~setting, directly evaluating the final answer is too easy because it only measures the ability of the transformer to correctly compute the last step since CoT is given as inputs. Ideally, we should measure the answer output by transformers after auto-regressively generating $|c(x)|$ tokens. But for computational efficiency, we measure the probability that transformers can predict all tokens in the given CoT correctly. Note this probability is a lower bound of the ideal metric because there is a small possibility that transformers can answer correctly with a wrong CoT. Nevertheless, even with this slightly more difficult evaluation metric, transformers in \cot~setting still optimize much faster than without CoT. 

Due to space limit, we defer the details of the training and each setting to \Cref{appsec:additional_exp}. Our experimental results are presented in \Cref{fig:modular_addition,fig:s5,fig:iterated_square,fig:cvp}.

\paragraph{Our Findings:} Unsurprisingly, the accuracy in \hint\ setting is always higher than \base\ setting. Due to the space limit, we postpone all results for \base\  settings into \Cref{appsec:additional_exp}. For the problems hard for parallel computation, \emph{i.e.}, permutation composition, iterated squaring, and circuit value problem, we find that \cot~is always better than \hint\ and \base, and the improvement is huge especially when depth is small. Our experiments suggest that turning on CoT drastically improves the expressiveness of low-depth transformers on problems that are hard to parallel compute, \emph{i.e.}, those inherently serial problems.

\begin{figure}[t]
\begin{center}
\includegraphics[width=0.96\textwidth]{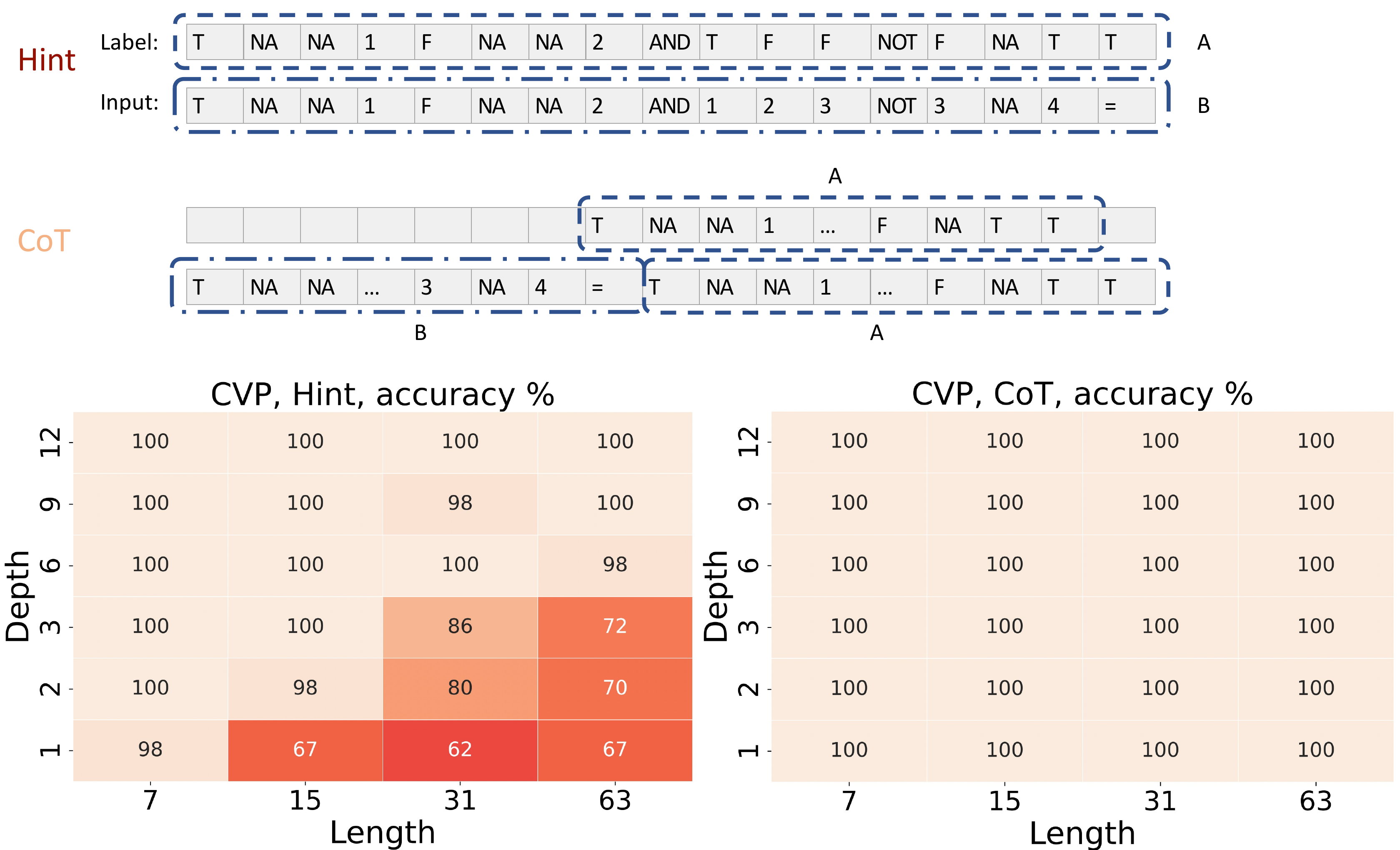}
 
\caption{Circuit Value Problem(CVP). Given a randomly generated circuit with $m$ gates (sorted by topological order), the vocabulary $\gV = [m]\cup \{ \True, \False, \AND, \OR, \NOT, \mathsf{NA}, = \}$. Each gate is represented by four consecutive tokens, which are gate type, two input gate ids, and  the current gate id. The output is the value of the last gate $m$. CoT and hints also contain 4 tokens for each gate, which are gate type,   two input gate values, and  the current gate value. 
 }\label{fig:cvp}
\end{center} 
\end{figure}

\section{Related Works}\label{sec:related_works}
 
Despite the numerous empirical achievements, unanswered questions concerning the inner workings of neural networks capable of algorithmic reasoning. The ability of self-attention to create low-complexity circuits has been recognized \citep{edelman2022inductive,hahn2020theoretical,merrill2021power}, as well as its capacity to form declarative programs~\citep{weiss2021thinking}, and Turing machines~\citep{dehghani2018universal,giannou2023looped,perez2021attention}. Moreover, it has been demonstrated that interpretable symbolic computations can be drawn from trained models~\citep{clark2019does,tenney2019bert,vig2019visualizing,wang2022interpretability}.

\citet{liu2022transformers} is a closely related work to ours, which studies the expressiveness of low-depth transformers for semi-automata. Their setting corresponds to using only 1 step of CoT and our contribution is to show that allowing more steps of CoT enables the transformers to solve more difficult problems than semi-automata, especially those inherently serial problems, like the circuit value problem, which is $\P$-complete.

\paragraph{Constant precision versus logarithmic precision:} We note that most previous literature on the expressiveness of transformers focuses on the setting of logarithmic precision, including \citep{merrill2023parallelism,merrill2022saturated,merrill2021power,liu2022transformers}, \emph{etc}. One main reason as argued by \citet{merrill2023logic} is that log precision allows the transformer to use uniform attention over the rest tokens. However, recent advancements in LLMs showed that uniform attention might not be necessary towards good performance, at least for natural language tasks. For example, one of the most successful open-sourced LLM, LLAMA2 ~\citep{touvron2023llama} takes the input of a sequence of $4096$ tokens and uses \textsf{BF16} precision, which has 1 sign bit, 8 exponent bits and 7 mantissa bits (plus one extra leading bit). As a consequence, for example, $\mathsf{BF16}$ cannot express any floating-point number between $2^8=256 $ and $2^8+2 = 258$, which makes LLAMA2 impossible to compute uniform attention over $257$ elements.

A concurrent work \citet{feng2023towards} also studies the benefit of CoT via the perspective of expressiveness, where they show with CoT, transformers can solve some specific $\P$-complete problem. Our result is stronger in the sense that we give a simple and clean construction for each problem in $\Ppoly$. We also note the slight difference in the settings, while we mainly focus on constant-precision transformers with $O(\log n)$ embedding size, they focus on $O(\log (n))$ precision transformers with bounded embedding size.

\section{Conclusion}\label{sec:conclusion}
 
We study the capability of CoT for decoder-only transformers through the lens of expressiveness. We adopt the language of circuit complexity and define a new complexity class $\Cot[T(n)][d(n)][s(n)][e(n)]$ which corresponds to a problem class solvable by constant-depth, constant-precision decoder-only transformers with  $O(T(n))$ steps of CoT, $O(d(n))$ embedding size and floating-point numbers with $O(e(n))$ bits of exponents and $O(s(n))$ bits of significand. Our theory suggests that increasing the length of CoT can drastically make transformers more expressive. We also empirically verify our theory in four arithmetic problems. We find that for those three inherently serial problems, transformers can only express the groundtruth function by using CoT. 

\section*{Acknowledgement}
The authors would like to thank the support from NSF IIS 2045685. The authors also thank Wei Zhan and Lijie Chen for providing references on circuit complexity and various inspiring discussions, Cyril Zhang and Bingbin Liu for helpful discussions on Khron-Rhodes decomposition theorem, and Kaifeng Lyu for his helpful feedback.
\bibliography{all,ours}

\begin{thebibliography}{53}
\providecommand{\natexlab}[1]{#1}
\providecommand{\url}[1]{\texttt{#1}}
\expandafter\ifx\csname urlstyle\endcsname\relax
  \providecommand{\doi}[1]{doi: #1}\else
  \providecommand{\doi}{doi: \begingroup \urlstyle{rm}\Url}\fi

\bibitem[Achiam et~al.(2023)Achiam, Adler, Agarwal, Ahmad, Akkaya, Aleman, Almeida, Altenschmidt, Altman, Anadkat, et~al.]{achiam2023gpt}
Josh Achiam, Steven Adler, Sandhini Agarwal, Lama Ahmad, Ilge Akkaya, Florencia~Leoni Aleman, Diogo Almeida, Janko Altenschmidt, Sam Altman, Shyamal Anadkat, et~al.
\newblock Gpt-4 technical report.
\newblock \emph{arXiv preprint arXiv:2303.08774}, 2023.

\bibitem[Anil et~al.(2023)Anil, Dai, Firat, Johnson, Lepikhin, Passos, Shakeri, Taropa, Bailey, Chen, et~al.]{anil2023palm}
Rohan Anil, Andrew~M Dai, Orhan Firat, Melvin Johnson, Dmitry Lepikhin, Alexandre Passos, Siamak Shakeri, Emanuel Taropa, Paige Bailey, Zhifeng Chen, et~al.
\newblock Palm 2 technical report.
\newblock \emph{arXiv preprint arXiv:2305.10403}, 2023.

\bibitem[Ba et~al.(2016)Ba, Kiros, and Hinton]{ba2016layer}
Jimmy~Lei Ba, Jamie~Ryan Kiros, and Geoffrey~E Hinton.
\newblock Layer normalization.
\newblock \emph{arXiv preprint arXiv:1607.06450}, 2016.

\bibitem[Barrington(1986)]{barrington1986bounded}
David~A. Barrington.
\newblock Bounded-width polynomial-size branching programs recognize exactly those languages in nc.
\newblock pp.\  1--5, 1986.

\bibitem[Chandra et~al.(1983)Chandra, Fortune, and Lipton]{chandra1983unbounded}
Ashok~K Chandra, Steven Fortune, and Richard Lipton.
\newblock Unbounded fan-in circuits and associative functions.
\newblock In \emph{Proceedings of the fifteenth annual ACM symposium on Theory of computing}, pp.\  52--60, 1983.

\bibitem[Chiang et~al.(2023)Chiang, Cholak, and Pillay]{chiang2023tighter}
David Chiang, Peter Cholak, and Anand Pillay.
\newblock Tighter bounds on the expressivity of transformer encoders.
\newblock \emph{arXiv preprint arXiv:2301.10743}, 2023.

\bibitem[Chowdhery et~al.(2023)Chowdhery, Narang, Devlin, Bosma, Mishra, Roberts, Barham, Chung, Sutton, Gehrmann, Schuh, Shi, Tsvyashchenko, Maynez, Rao, Barnes, Tay, Shazeer, Prabhakaran, Reif, Du, Hutchinson, Pope, Bradbury, Austin, Isard, Gur-Ari, Yin, Duke, Levskaya, Ghemawat, Dev, Michalewski, Garcia, Misra, Robinson, Fedus, Zhou, Ippolito, Luan, Lim, Zoph, Spiridonov, Sepassi, Dohan, Agrawal, Omernick, Dai, Pillai, Pellat, Lewkowycz, Moreira, Child, Polozov, Lee, Zhou, Wang, Saeta, Diaz, Firat, Catasta, Wei, Meier-Hellstern, Eck, Dean, Petrov, and Fiedel]{Chowdhery2023}
Aakanksha Chowdhery, Sharan Narang, Jacob Devlin, Maarten Bosma, Gaurav Mishra, Adam Roberts, Paul Barham, Hyung~Won Chung, Charles Sutton, Sebastian Gehrmann, Parker Schuh, Kensen Shi, Sasha Tsvyashchenko, Joshua Maynez, Abhishek Rao, Parker Barnes, Yi~Tay, Noam Shazeer, Vinodkumar Prabhakaran, Emily Reif, Nan Du, Ben Hutchinson, Reiner Pope, James Bradbury, Jacob Austin, Michael Isard, Guy Gur-Ari, Pengcheng Yin, Toju Duke, Anselm Levskaya, Sanjay Ghemawat, Sunipa Dev, Henryk Michalewski, Xavier Garcia, Vedant Misra, Kevin Robinson, Liam Fedus, Denny Zhou, Daphne Ippolito, David Luan, Hyeontaek Lim, Barret Zoph, Alexander Spiridonov, Ryan Sepassi, David Dohan, Shivani Agrawal, Mark Omernick, Andrew~M. Dai, Thanumalayan~Sankaranarayana Pillai, Marie Pellat, Aitor Lewkowycz, Erica Moreira, Rewon Child, Oleksandr Polozov, Katherine Lee, Zongwei Zhou, Xuezhi Wang, Brennan Saeta, Mark Diaz, Orhan Firat, Michele Catasta, Jason Wei, Kathy Meier-Hellstern, Douglas Eck, Jeff Dean, Slav Petrov, and Noah Fiedel.
\newblock Palm: Scaling language modeling with pathways.
\newblock \emph{Journal of Machine Learning Research}, 24\penalty0 (240):\penalty0 1--113, 2023.

\bibitem[Chung et~al.(2022)Chung, Hou, Longpre, Zoph, Tay, Fedus, Li, Wang, Dehghani, Brahma, et~al.]{chung2022scaling}
Hyung~Won Chung, Le~Hou, Shayne Longpre, Barret Zoph, Yi~Tay, William Fedus, Yunxuan Li, Xuezhi Wang, Mostafa Dehghani, Siddhartha Brahma, et~al.
\newblock Scaling instruction-finetuned language models.
\newblock \emph{arXiv preprint arXiv:2210.11416}, 2022.

\bibitem[Clark et~al.(2019)Clark, Khandelwal, Levy, and Manning]{clark2019does}
Kevin Clark, Urvashi Khandelwal, Omer Levy, and Christopher~D Manning.
\newblock What does bert look at? an analysis of bert's attention.
\newblock \emph{arXiv preprint arXiv:1906.04341}, 2019.

\bibitem[Cobbe et~al.(2021)Cobbe, Kosaraju, Bavarian, Chen, Jun, Kaiser, Plappert, Tworek, Hilton, Nakano, et~al.]{cobbe2021training}
Karl Cobbe, Vineet Kosaraju, Mohammad Bavarian, Mark Chen, Heewoo Jun, Lukasz Kaiser, Matthias Plappert, Jerry Tworek, Jacob Hilton, Reiichiro Nakano, et~al.
\newblock Training verifiers to solve math word problems.
\newblock \emph{arXiv preprint arXiv:2110.14168}, 2021.

\bibitem[Dehghani et~al.(2018)Dehghani, Gouws, Vinyals, Uszkoreit, and Kaiser]{dehghani2018universal}
Mostafa Dehghani, Stephan Gouws, Oriol Vinyals, Jakob Uszkoreit, and {\L}ukasz Kaiser.
\newblock Universal transformers.
\newblock \emph{arXiv preprint arXiv:1807.03819}, 2018.

\bibitem[Edelman et~al.(2022)Edelman, Goel, Kakade, and Zhang]{edelman2022inductive}
Benjamin~L Edelman, Surbhi Goel, Sham Kakade, and Cyril Zhang.
\newblock Inductive biases and variable creation in self-attention mechanisms.
\newblock In \emph{International Conference on Machine Learning}, pp.\  5793--5831. PMLR, 2022.

\bibitem[Feng et~al.(2023)Feng, Gu, Zhang, Ye, He, and Wang]{feng2023towards}
Guhao Feng, Yuntian Gu, Bohang Zhang, Haotian Ye, Di~He, and Liwei Wang.
\newblock Towards revealing the mystery behind chain of thought: a theoretical perspective.
\newblock \emph{arXiv preprint arXiv:2305.15408}, 2023.

\bibitem[Giannou et~al.(2023)Giannou, Rajput, Sohn, Lee, Lee, and Papailiopoulos]{giannou2023looped}
Angeliki Giannou, Shashank Rajput, Jy-yong Sohn, Kangwook Lee, Jason~D Lee, and Dimitris Papailiopoulos.
\newblock Looped transformers as programmable computers.
\newblock \emph{arXiv preprint arXiv:2301.13196}, 2023.

\bibitem[Goldberg(1991)]{goldberg1991every}
David Goldberg.
\newblock What every computer scientist should know about floating-point arithmetic.
\newblock \emph{ACM computing surveys (CSUR)}, 23\penalty0 (1):\penalty0 5--48, 1991.

\bibitem[Hahn(2020)]{hahn2020theoretical}
Michael Hahn.
\newblock Theoretical limitations of self-attention in neural sequence models.
\newblock \emph{Transactions of the Association for Computational Linguistics}, 8:\penalty0 156--171, 2020.

\bibitem[Hao et~al.(2022)Hao, Angluin, and Frank]{hao2022formal}
Yiding Hao, Dana Angluin, and Robert Frank.
\newblock Formal language recognition by hard attention transformers: Perspectives from circuit complexity.
\newblock \emph{Transactions of the Association for Computational Linguistics}, 10:\penalty0 800--810, 2022.

\bibitem[Hesse(2001)]{hesse2001division}
William Hesse.
\newblock Division is in uniform tc0.
\newblock In \emph{International Colloquium on Automata, Languages, and Programming}, pp.\  104--114. Springer, 2001.

\bibitem[IEEE(2008)]{ieee754_2008}
IEEE.
\newblock Ieee standard for floating-point arithmetic.
\newblock \emph{IEEE Std 754-2008}, pp.\  1--70, 2008.
\newblock \doi{10.1109/IEEESTD.2008.4610935}.

\bibitem[Kingma \& Ba(2014)Kingma and Ba]{kingma2014adam}
Diederik~P Kingma and Jimmy Ba.
\newblock Adam: A method for stochastic optimization.
\newblock \emph{arXiv preprint arXiv:1412.6980}, 2014.

\bibitem[Kojima et~al.(2022)Kojima, Gu, Reid, Matsuo, and Iwasawa]{kojima2022large}
Takeshi Kojima, Shixiang~Shane Gu, Machel Reid, Yutaka Matsuo, and Yusuke Iwasawa.
\newblock Large language models are zero-shot reasoners.
\newblock \emph{Advances in Neural Information Processing Systems}, 2022.

\bibitem[Krohn \& Rhodes(1965)Krohn and Rhodes]{krohn1965algebraic}
Kenneth Krohn and John Rhodes.
\newblock Algebraic theory of machines. i. prime decomposition theorem for finite semigroups and machines.
\newblock \emph{Transactions of the American Mathematical Society}, 116:\penalty0 450--464, 1965.

\bibitem[Ling et~al.(2017)Ling, Yogatama, Dyer, and Blunsom]{ling2017program}
Wang Ling, Dani Yogatama, Chris Dyer, and Phil Blunsom.
\newblock Program induction by rationale generation: Learning to solve and explain algebraic word problems.
\newblock \emph{arXiv preprint arXiv:1705.04146}, 2017.

\bibitem[Liu et~al.(2022{\natexlab{a}})Liu, Ash, Goel, Krishnamurthy, and Zhang]{liu2022transformers}
Bingbin Liu, Jordan~T Ash, Surbhi Goel, Akshay Krishnamurthy, and Cyril Zhang.
\newblock Transformers learn shortcuts to automata.
\newblock \emph{arXiv preprint arXiv:2210.10749}, 2022{\natexlab{a}}.

\bibitem[Liu et~al.(2022{\natexlab{b}})Liu, Mai, Chen, Hsieh, and You]{liu2022towards}
Yong Liu, Siqi Mai, Xiangning Chen, Cho-Jui Hsieh, and Yang You.
\newblock Towards efficient and scalable sharpness-aware minimization.
\newblock In \emph{Proceedings of the IEEE/CVF Conference on Computer Vision and Pattern Recognition}, pp.\  12360--12370, 2022{\natexlab{b}}.

\bibitem[Lombardi \& Vaikuntanathan(2020)Lombardi and Vaikuntanathan]{lombardi2020fiat}
Alex Lombardi and Vinod Vaikuntanathan.
\newblock Fiat-shamir for repeated squaring with applications to ppad-hardness and vdfs.
\newblock In \emph{Advances in Cryptology--CRYPTO 2020: 40th Annual International Cryptology Conference, CRYPTO 2020, Santa Barbara, CA, USA, August 17--21, 2020, Proceedings, Part III}, pp.\  632--651. Springer, 2020.

\bibitem[Madaan \& Yazdanbakhsh(2022)Madaan and Yazdanbakhsh]{madaan2022text}
Aman Madaan and Amir Yazdanbakhsh.
\newblock Text and patterns: For effective chain of thought, it takes two to tango.
\newblock \emph{arXiv preprint arXiv:2209.07686}, 2022.

\bibitem[Maler(2010)]{maler2010krohn}
Oded Maler.
\newblock On the krohn-rhodes cascaded decomposition theorem.
\newblock In \emph{Time for Verification: Essays in Memory of Amir Pnueli}, pp.\  260--278. Springer, 2010.

\bibitem[McNaughton \& Papert(1971)McNaughton and Papert]{mcnaughton1971counter}
Robert McNaughton and Seymour~A Papert.
\newblock \emph{Counter-Free Automata (MIT research monograph no. 65)}.
\newblock The MIT Press, 1971.

\bibitem[Merrill \& Sabharwal(2023{\natexlab{a}})Merrill and Sabharwal]{merrill2023logic}
William Merrill and Ashish Sabharwal.
\newblock A logic for expressing log-precision transformers.
\newblock In \emph{Thirty-seventh Conference on Neural Information Processing Systems}, 2023{\natexlab{a}}.

\bibitem[Merrill \& Sabharwal(2023{\natexlab{b}})Merrill and Sabharwal]{merrill2023parallelism}
William Merrill and Ashish Sabharwal.
\newblock The parallelism tradeoff: Limitations of log-precision transformers.
\newblock \emph{Transactions of the Association for Computational Linguistics}, 11:\penalty0 531--545, 2023{\natexlab{b}}.

\bibitem[Merrill et~al.(2021)Merrill, Goldberg, and Smith]{merrill2021power}
William Merrill, Yoav Goldberg, and Noah~A Smith.
\newblock On the power of saturated transformers: A view from circuit complexity.
\newblock \emph{arXiv preprint arXiv:2106.16213}, 2021.

\bibitem[Merrill et~al.(2022)Merrill, Sabharwal, and Smith]{merrill2022saturated}
William Merrill, Ashish Sabharwal, and Noah~A Smith.
\newblock Saturated transformers are constant-depth threshold circuits.
\newblock \emph{Transactions of the Association for Computational Linguistics}, 10:\penalty0 843--856, 2022.

\bibitem[Nye et~al.(2021)Nye, Andreassen, Gur-Ari, Michalewski, Austin, Bieber, Dohan, Lewkowycz, Bosma, Luan, et~al.]{nye2021show}
Maxwell Nye, Anders~Johan Andreassen, Guy Gur-Ari, Henryk Michalewski, Jacob Austin, David Bieber, David Dohan, Aitor Lewkowycz, Maarten Bosma, David Luan, et~al.
\newblock Show your work: Scratchpads for intermediate computation with language models.
\newblock \emph{arXiv preprint arXiv:2112.00114}, 2021.

\bibitem[P{\'e}rez et~al.(2019)P{\'e}rez, Marinkovi{\'c}, and Barcel{\'o}]{perez2019turing}
Jorge P{\'e}rez, Javier Marinkovi{\'c}, and Pablo Barcel{\'o}.
\newblock On the turing completeness of modern neural network architectures.
\newblock \emph{arXiv preprint arXiv:1901.03429}, 2019.

\bibitem[P{\'e}rez et~al.(2021)P{\'e}rez, Barcel{\'o}, and Marinkovic]{perez2021attention}
Jorge P{\'e}rez, Pablo Barcel{\'o}, and Javier Marinkovic.
\newblock Attention is turing complete.
\newblock \emph{The Journal of Machine Learning Research}, 22\penalty0 (1):\penalty0 3463--3497, 2021.

\bibitem[Pippenger \& Fischer(1979)Pippenger and Fischer]{pippenger1979relations}
Nicholas Pippenger and Michael~J Fischer.
\newblock Relations among complexity measures.
\newblock \emph{Journal of the ACM (JACM)}, 26\penalty0 (2):\penalty0 361--381, 1979.

\bibitem[Radford et~al.(2019)Radford, Wu, Child, Luan, Amodei, and Sutskever]{radford2019language}
Alec Radford, Jeffrey Wu, Rewon Child, David Luan, Dario Amodei, and Ilya Sutskever.
\newblock Language models are unsupervised multitask learners.
\newblock \emph{OpenAI Blog}, 1\penalty0 (8), 2019.

\bibitem[Reynolds \& McDonell(2021)Reynolds and McDonell]{reynolds2021prompt}
Laria Reynolds and Kyle McDonell.
\newblock Prompt programming for large language models: Beyond the few-shot paradigm.
\newblock In \emph{Extended Abstracts of the 2021 CHI Conference on Human Factors in Computing Systems}, pp.\  1--7, 2021.

\bibitem[Rivest et~al.(1996)Rivest, Shamir, and Wagner]{rivest1996time}
Ronald~L Rivest, Adi Shamir, and David~A Wagner.
\newblock Time-lock puzzles and timed-release crypto.
\newblock 1996.

\bibitem[Romera-Paredes et~al.(2023)Romera-Paredes, Barekatain, Novikov, Balog, Kumar, Dupont, Ruiz, Ellenberg, Wang, Fawzi, Kohli, and Fawzi]{FunSearch2023}
Bernardino Romera-Paredes, Mohammadamin Barekatain, Alexander Novikov, Matej Balog, M.~Pawan Kumar, Emilien Dupont, Francisco J.~R. Ruiz, Jordan Ellenberg, Pengming Wang, Omar Fawzi, Pushmeet Kohli, and Alhussein Fawzi.
\newblock Mathematical discoveries from program search with large language models.
\newblock \emph{Nature}, 2023.
\newblock \doi{10.1038/s41586-023-06924-6}.

\bibitem[Tenney et~al.(2019)Tenney, Das, and Pavlick]{tenney2019bert}
Ian Tenney, Dipanjan Das, and Ellie Pavlick.
\newblock Bert rediscovers the classical nlp pipeline.
\newblock \emph{arXiv preprint arXiv:1905.05950}, 2019.

\bibitem[Touvron et~al.(2023)Touvron, Martin, Stone, Albert, Almahairi, Babaei, Bashlykov, Batra, Bhargava, Bhosale, et~al.]{touvron2023llama}
Hugo Touvron, Louis Martin, Kevin Stone, Peter Albert, Amjad Almahairi, Yasmine Babaei, Nikolay Bashlykov, Soumya Batra, Prajjwal Bhargava, Shruti Bhosale, et~al.
\newblock Llama 2: Open foundation and fine-tuned chat models.
\newblock \emph{arXiv preprint arXiv:2307.09288}, 2023.

\bibitem[Trinh et~al.(2024)Trinh, Wu, Le, He, and Luong]{trinh2024solving}
Trieu~H Trinh, Yuhuai Wu, Quoc~V Le, He~He, and Thang Luong.
\newblock Solving olympiad geometry without human demonstrations.
\newblock \emph{Nature}, 625\penalty0 (7995):\penalty0 476--482, 2024.

\bibitem[Vig(2019)]{vig2019visualizing}
Jesse Vig.
\newblock Visualizing attention in transformer-based language representation models.
\newblock \emph{arXiv preprint arXiv:1904.02679}, 2019.

\bibitem[Wang et~al.(2022{\natexlab{a}})Wang, Min, Deng, Shen, Wu, Zettlemoyer, and Sun]{wang2022towards}
Boshi Wang, Sewon Min, Xiang Deng, Jiaming Shen, You Wu, Luke Zettlemoyer, and Huan Sun.
\newblock Towards understanding chain-of-thought prompting: An empirical study of what matters.
\newblock \emph{arXiv preprint arXiv:2212.10001}, 2022{\natexlab{a}}.

\bibitem[Wang et~al.(2022{\natexlab{b}})Wang, Variengien, Conmy, Shlegeris, and Steinhardt]{wang2022interpretability}
Kevin Wang, Alexandre Variengien, Arthur Conmy, Buck Shlegeris, and Jacob Steinhardt.
\newblock Interpretability in the wild: a circuit for indirect object identification in gpt-2 small.
\newblock \emph{arXiv preprint arXiv:2211.00593}, 2022{\natexlab{b}}.

\bibitem[Wei et~al.(2022)Wei, Wang, Schuurmans, Bosma, Chi, Le, and Zhou]{wei2022chain}
Jason Wei, Xuezhi Wang, Dale Schuurmans, Maarten Bosma, Ed~Chi, Quoc Le, and Denny Zhou.
\newblock Chain of thought prompting elicits reasoning in large language models.
\newblock \emph{Advances in Neural Information Processing Systems}, 2022.

\bibitem[Weiss et~al.(2021)Weiss, Goldberg, and Yahav]{weiss2021thinking}
Gail Weiss, Yoav Goldberg, and Eran Yahav.
\newblock Thinking like transformers.
\newblock In \emph{International Conference on Machine Learning}, pp.\  11080--11090. PMLR, 2021.

\bibitem[Wilson(1985)]{wilson1985relativized}
Christopher~B Wilson.
\newblock Relativized circuit complexity.
\newblock \emph{Journal of Computer and System Sciences}, 31\penalty0 (2):\penalty0 169--181, 1985.

\bibitem[Xiong et~al.(2020)Xiong, Yang, He, Zheng, Zheng, Xing, Zhang, Lan, Wang, and Liu]{xiong2020layer}
Ruibin Xiong, Yunchang Yang, Di~He, Kai Zheng, Shuxin Zheng, Chen Xing, Huishuai Zhang, Yanyan Lan, Liwei Wang, and Tieyan Liu.
\newblock On layer normalization in the transformer architecture.
\newblock In \emph{International Conference on Machine Learning}, pp.\  10524--10533. PMLR, 2020.

\bibitem[Yao(1989)]{yao1989circuits}
Andrew Chi-Chih Yao.
\newblock Circuits and local computation.
\newblock pp.\  186--196, 1989.

\bibitem[Yao et~al.(2021)Yao, Peng, Papadimitriou, and Narasimhan]{yao2021self}
Shunyu Yao, Binghui Peng, Christos Papadimitriou, and Karthik Narasimhan.
\newblock Self-attention networks can process bounded hierarchical languages.
\newblock \emph{arXiv preprint arXiv:2105.11115}, 2021.

\end{thebibliography}
\bibliographystyle{iclr2024_conference}

\appendix

\newpage
\appendix
{
  \hypersetup{linkcolor=black}
  \tableofcontents
}
\newpage

\section{Additional Experimental Results}\label{appsec:additional_exp}

In this section present the experimental results for \base\ setting which is omitted in the main paper and the details of training and each task. We use the \textsf{nanogpt}\footnote{\url{https://github.com/karpathy/nanoGPT}} codebase for language modeling.

\paragraph{Training Details.} For all settings we use $\Adam$ with $10^{-5}$ learning rate, $0$ weight decay, $\beta_1 = 0.9$, $\beta_2=0.95$, and gradient clipping with threshold equal to $1.0$. The total training budget is $10^6$ steps and we use a linear warmup in the first $2000$ steps starting from $10^{-6}$. For each step, we use a fresh sampled batch of size $64$ from population distribution. We turn off dropout and use \textsf{float16}.  We vary the depth of the transformer for different settings while the embedding size and the number of attention heads are fixed to be $512$ and $8$ respectively.

Below we present the setting and the experimental results of each problem respectively. 

\paragraph{Modular Addition ($C_p$).} Given any positive integer $p$, the vocabulary of modular addition problem is $\{0,1,\ldots,p-1,=\}$. We generate $x=(x_1,\ldots, x_n)$ in the following way: for each $i\in[n-1]$, we independently sample $x_i$ from $\{0,1,\ldots,p-1\}$ and set $x_n$ = `='. The label is $f^*(x) = \sum_{i=1}^{n-1}x_i\mod p$ and CoT $c(x)$ is $(\sum_{i=1}^{k}x_i\mod p)_{k=1}^{n-1}$.
	Unsurprisingly, this task is an easy task for transformers because attention can easily express the average function across different positions, and so is the sum function.  Then the feedforward layers can compute the modulus of the sum and $m$. We note that the high training accuracy here is not contradictory with our \Cref{thm:AC0_upper_bound}, because our sequence length is not long enough and \textsf{float16} is like log-precision. This intuitive argument is elegantly extended to all solvable groups by leveraging Khron-Rhodes decomposition theorem by \citet{liu2022transformers}. 

%
%

\paragraph{Permutation Composition ($S_p$).} Given any $p\in\mathbb{N}^+$, the vocabulary of permutation composition problem is $\{1,\ldots,p, (, ), =\}$. We pick $n = (p+2)m+1$ and generate $x=(x_1,\ldots, x_n)$ in the following way: for each $i\in[m]$, we set $x_{(p+2)(i-1)+1}$ as '(', $x_{(p+2)i}$ as ')' and independently sample a random permutation over $[p]$, $\sigma_i = (x_{(p+2)(i-1)+2},\ldots, x_{(p+2)(i-1)+p+1})$. We set $x_n $ to be '='. Different from other settings which only have the label at one position, we have $p$ labels for this setting, which is the composition of $\sigma_1\circ\ldots\circ\sigma_n$. The CoT $c(x)$ is  the partial composition from $\sigma_1$ to $\sigma_n$.

%
%

As mentioned in \Cref{sec:main_result}, unless $\TC^0 = \NC^1$,  composition of $S_p$ cannot be computed by $\TC^0$ for any $p\ge 5$, since composition of $S_p$ implies the wording problem of $S_p$, which is $\NC^1$-complete under $\AC^0$ reductions. Since all constant-depth poly-embedding-size transformers can be simulated by $\TC^0$ circuits~(\Cref{thm:TC0_upper_bound}), shallow transformers are not able to solve the composition problem of $S_p$ for $p\ge 5$. Our experimental results in \Cref{fig:s5} matches this theoretic prediction very well.




\newcommand{\sq}{\mathsf{sq}}

\paragraph{Iterated Squaring (IS).}  Iterated squaring refers to the following problem: given integers $r,n,p$, we define the iterated squaring function $f_{r,p}(n)\triangleq r^{2^n} \mod p$. It is often used as hardness assumptions in cryptography \citep{rivest1996time,lombardi2020fiat} that iterated squaring cannot be solved in $n-o(n)$ time even with polynomial parallel processors under certain technical conditions (\emph{e.g.}, $p$ is the product of two primes of a certain magnitude and there is some requirement on the order of $r$ as an element of the multiplicative group of integers modulo $p$). In other words, people conjecture there is no faster parallel algorithm than doing squaring for $n$ times.

\paragraph{Circuit Value Problem (CVP).} Circuit value problem is the computational problem of computing the output of a given Boolean circuit on a given input. It is complete for $\P$ under $\AC^0$-reductions. This means if one can solve CVP with constant-depth transformers (or any $\TC^0$ circuits), then any problem in $\P$ becomes solvable by $\TC^0$, which is widely believed to be impossible.




%

\begin{figure}[H]
\begin{center}
\includegraphics[width=\textwidth]{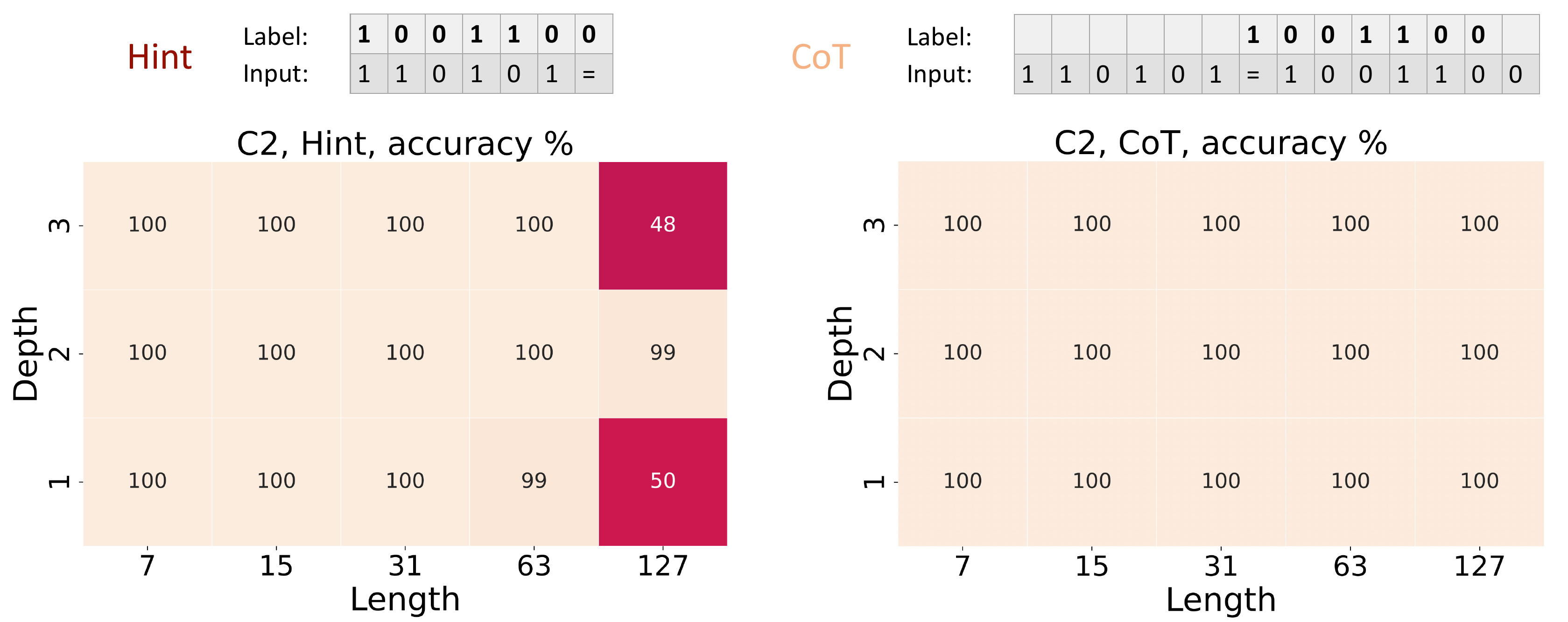}
\caption{Results of Modular Addition $C_2$.}\label{fig:c2}
\end{center}
\end{figure}

 \begin{figure}[H]\label{fig:base1}
\begin{center}
\includegraphics[width=0.328\textwidth]{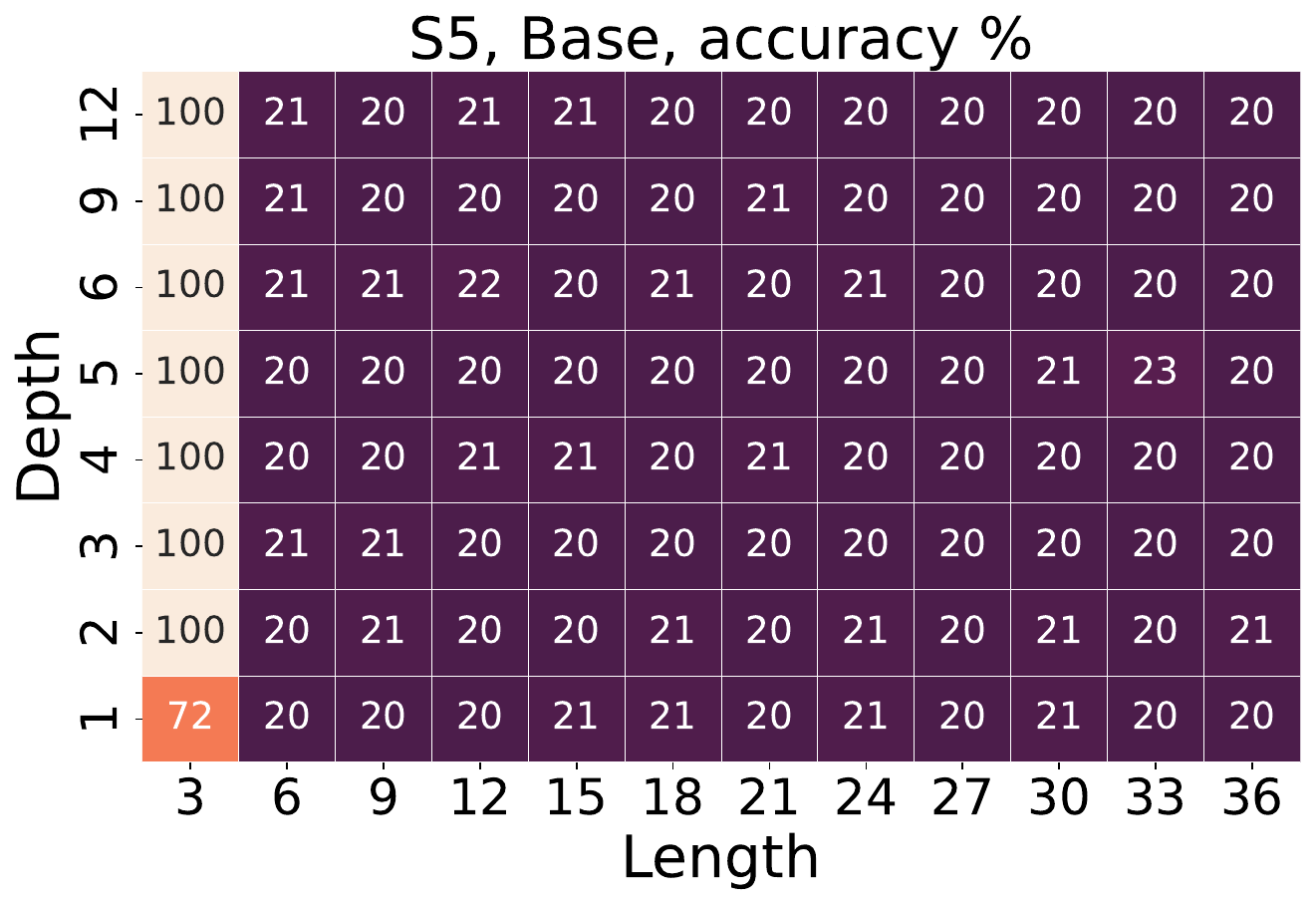}
\includegraphics[width=0.328\textwidth]{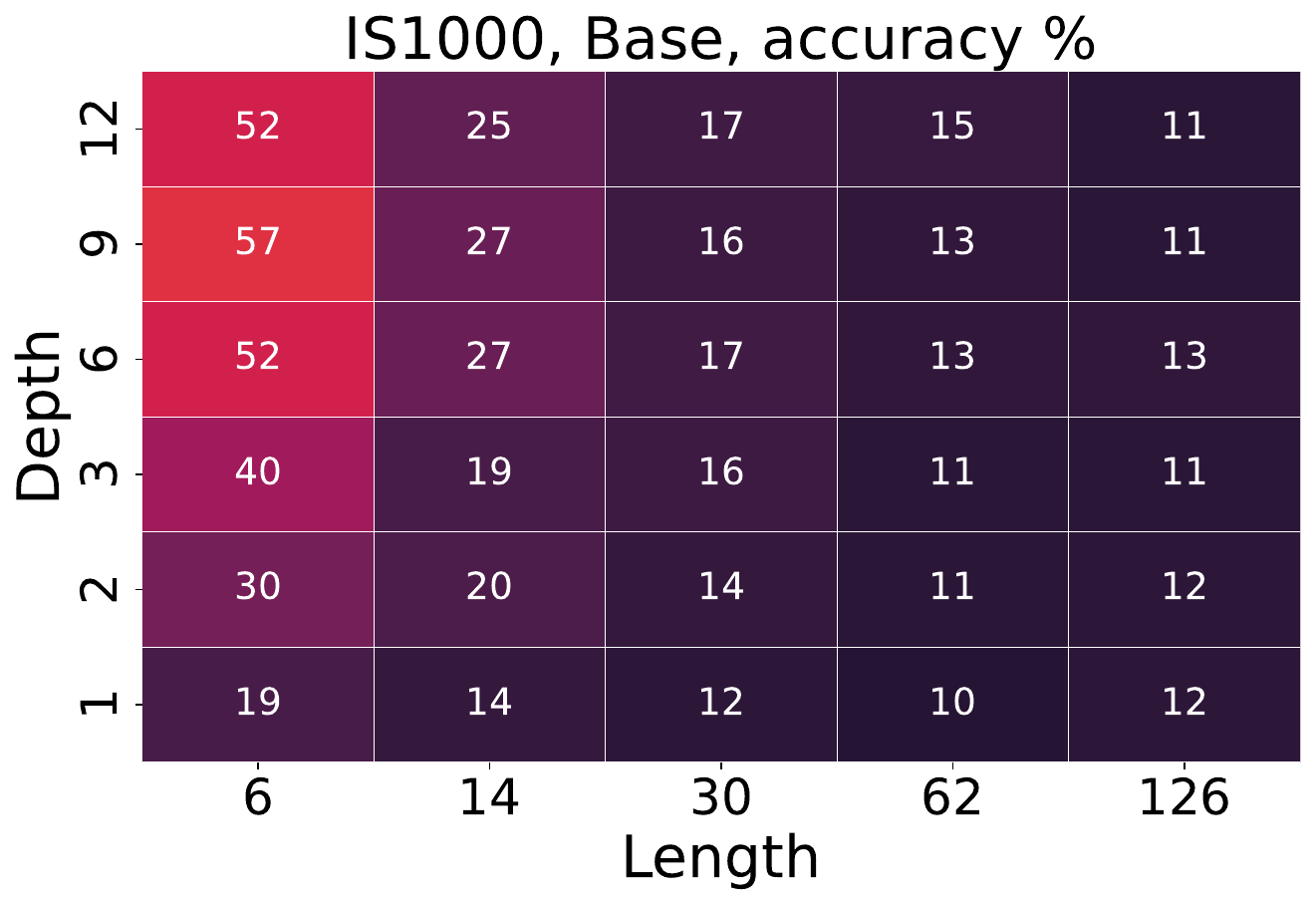}
\includegraphics[width=0.328\textwidth]{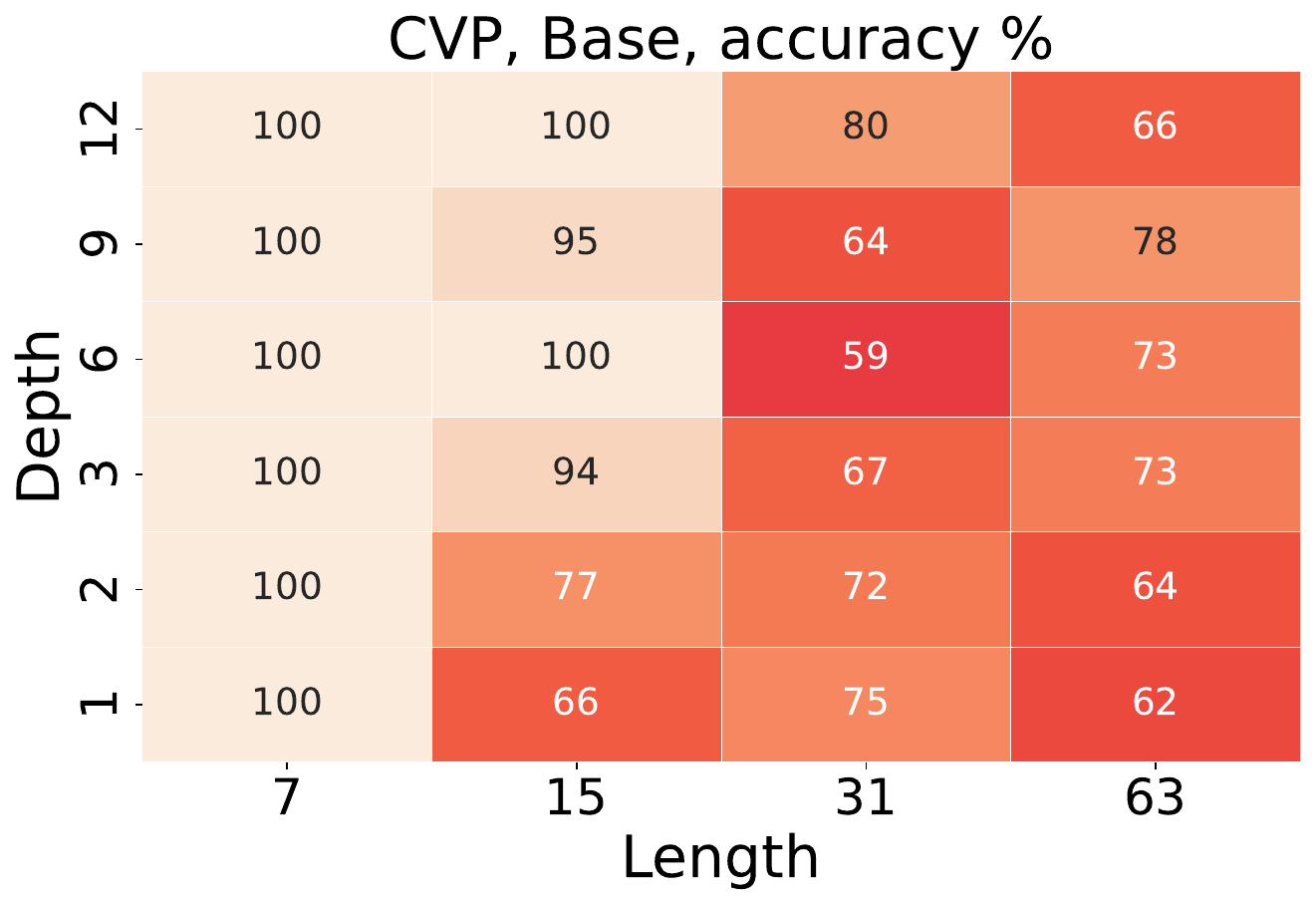}
\caption{Results of \base~on Permutation Composition, Iterated Squaring, and Circuit Value Problem.}
\end{center}
\end{figure}

 \begin{figure}[H]\label{fig:base2}
\begin{center}
\includegraphics[width=0.49\textwidth]{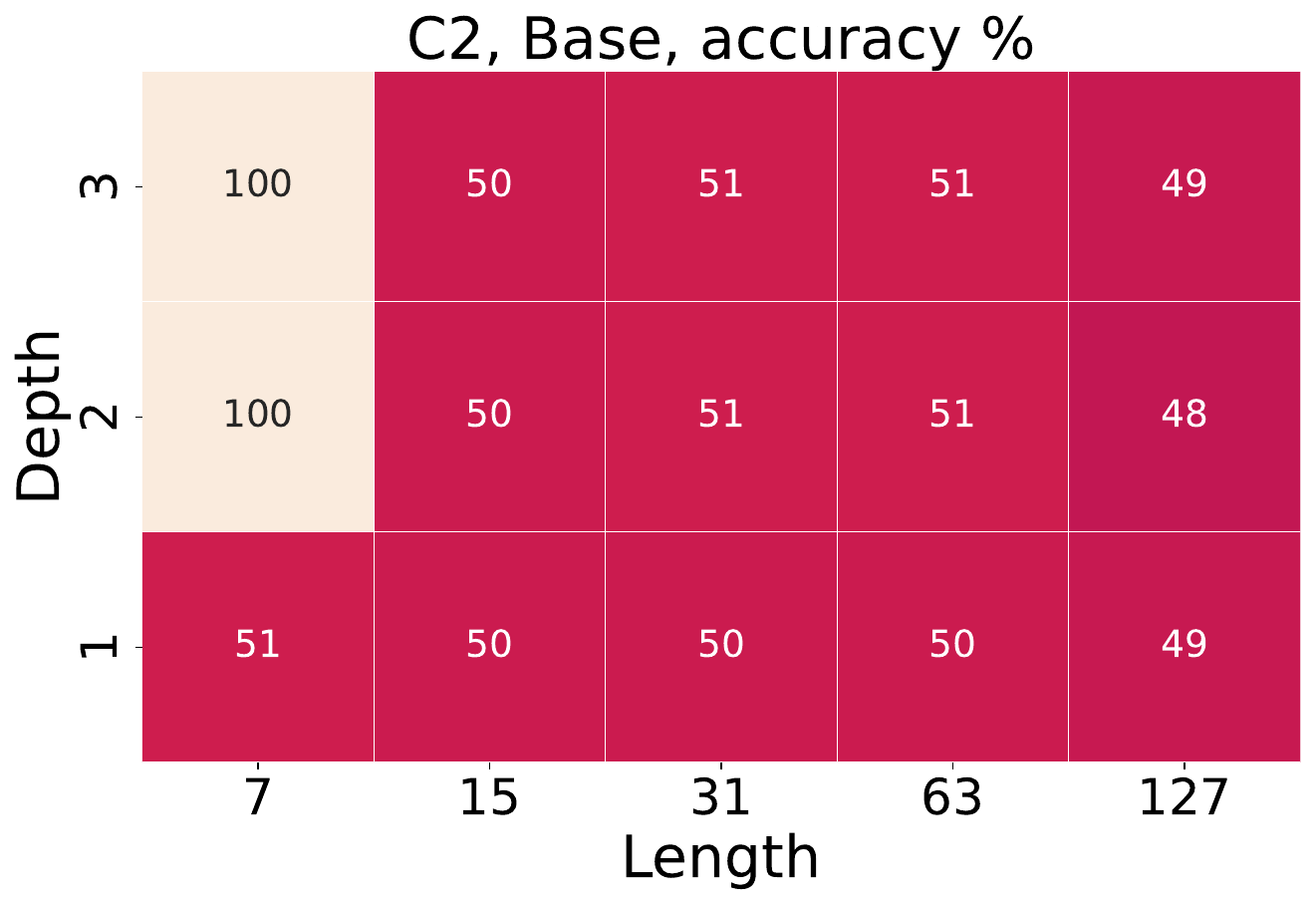}
\includegraphics[width=0.49\textwidth]{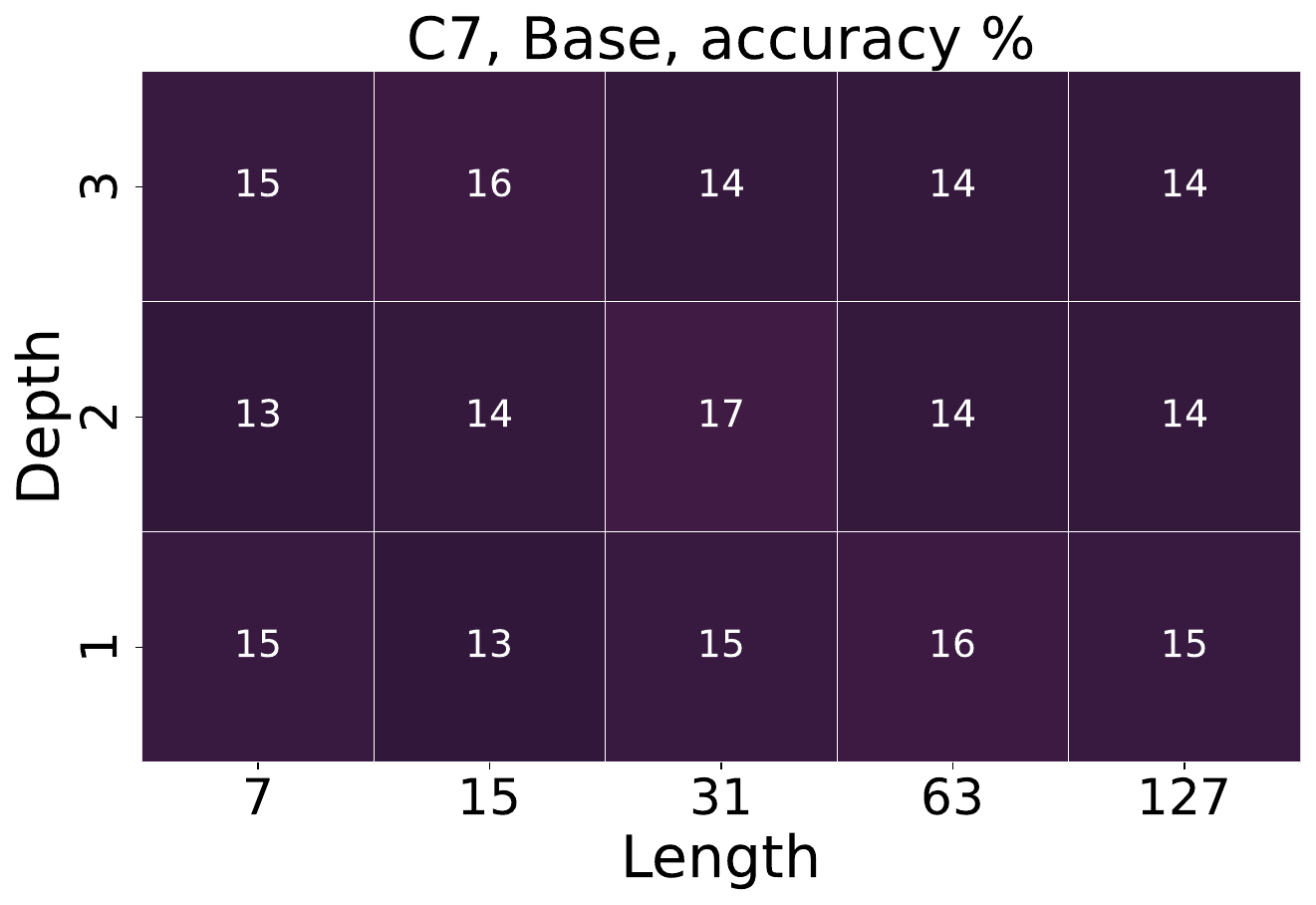}
\caption{ Results of Modular Addition \base~on $C_2$ and $C_7$.}
\end{center}
\end{figure}

We can observe that the accuracy of \base\ setting is also lower than that of \hint\ setting.

\section{Details on Finite-Precision Layers}\label{appsec:finite_precision}

In this section, we give the definition of the finite-precision version of different transformer layers.  Recall that given $s\in\mathbb{N}^+$, the numbers representable using $2s$-bit significand and $e$-bit exponent is $\Floating_{e,s}\triangleq \{ S\cdot 2^{-s +E} \mid -2^{2s}+1\le S\le 2^{2s}-1, -2^{\max(0,e-1)}\le E\le 2^e -1 -2^{\max(0,e-1)}, E,S\in\mathbb{N} \}$.

\paragraph{Self-Attention Mechanism:} Given attention parameter $\theta_\Attn = (W_Q,W_K,W_V,W_O)\in  \Floating_{e,s} ^{d\times d}\times  \Floating_{e,s} ^{d\times d}\times \Floating_{e,s} ^{d\times d}\times  \Floating_{e,s} ^{d\times d}$, we define the self-attention layer with causal mask for decoder-only transformer in \Cref{alg:defi_attn}.

\begin{algorithm}
\caption{Finite-Precision Causal Self-Attention, $\Attn$}\label{alg:defi_attn}
\begin{algorithmic}[1]
\Require  Integer $s\in\mathbb{N}^+$, $e\in\mathbb{N}$, Parameter $\theta_\Attn = (W_Q,W_K,W_V,W_O)$, Input embedding $h= (h_1,\ldots, h_n)\in \Floating_{e,s} ^{nd}$.
\Ensure Output embedding $h'= (h'_1,\ldots, h'_n) \triangleq \Attn_{\theta_\Attn}(h_1,\ldots, h_n)$.
\State $q_i \triangleq W_Q \times_{e,s}  h_i, k_i \triangleq W_K \times_{e,s} h_i, v_i \triangleq W_V \times_{e,s} h_i, \forall i\in[n]$
\State $s_i \triangleq \softmax_{e,s}(\inner{q_i}{k_1}_{e,s},\ldots,\inner{q_i}{k_i}_{e,s}) \| (0,\ldots, 0) $. \Comment{ $n-i$'s 0; Mask for Causal Attention;}
\State $h'_i \triangleq W_O \times_{e,s} \sumes(  [v_1,\ldots, v_n] \times_{e,s} s_i)$.
\end{algorithmic}
\end{algorithm}

\paragraph{Feed-Forward Network:} Given $s\in\mathbb{N}^+$, $e\in\mathbb{N}$, and the parameter of fully-connected feedforward network layer $\theta_\FF = (W_1,b_1,W_2,b_2)\in  \Floating_{e,s} ^{d\times d}\times  \Floating_{e,s} ^d \times  \Floating_{e,s} ^{d\times d}\times  \Floating_{e,s} ^d$, we define the fully-connected feedforward layer $\FF_{\theta_\FF}: \Floating_{e,s} ^d\to  \Floating_{e,s} ^d$ as $\FF_{\theta_{\FF}}(h) \triangleq \rdes{W_2\times_{e,s} \relu(\rdes{W_1 \times_{e,s} h + b_1}) + b_2}$.

\paragraph{Token Embedding:} Given $s\in\mathbb{N}^+$, $e\in\mathbb{N}$, and the parameter of token embedding layer $\theta_\tokenembedding \in  \Floating_{e,s} ^{d \times|\gV|}$, we define the token embedding layer by viewing $\theta_\tokenembedding$ as a mapping from $\gV $ to $ \mathbb{R}^d$, that is, for all $x\in \gV$, the token embedding is $\theta_\tokenembedding(x)$. 

\paragraph{Position Encoding:}  Given $s\in\mathbb{N}^+$, $e\in\mathbb{N}$, and the parameter of position encoding layer $\theta_\posencoding \in  \Floating_{e,s} ^{d \times n_{\max}}$, we define the token embedding layer by viewing $\theta_\posencoding$ as a mapping from $[n_{\max}] $ to $ \mathbb{R}^d$ that is, for all $n\in [n_{\max}]$, the position embedding is as $\theta_\posencoding(n)$. 

\paragraph{Output Layer:} Given $s\in\mathbb{N}^+$, $e\in\mathbb{N}$, and the parameter of output layer $\theta_\transoutput  \in  \Floating_{e,s} ^{|\gV| \times  d}$, we define the output layer $\transoutput_{\theta_\transoutput}: \Floating_{e,s} ^d \to \gV$ as $\transoutput_{\theta_\transoutput}(h)\triangleq \softmax_{e,s}(\theta_{\transoutput}\times_{e,s} h)$ for all $h\in  \Floating_{e,s} ^d$. 

Finally, we define finite-precision decoder-only transformers below.

\begin{algorithm}[h]
\caption{Finite-precision Decoder-only Transformer, $\protect\transformer_\theta$ and $p_\theta$}\label{alg:defi_transformer_finite_precision}
\begin{algorithmic}[1]

\Require Integer $s\in\mathbb{N}^+$, $e\in\mathbb{N}$. Transformer parameter $\theta = (\theta_\posencoding,\theta_\tokenembedding,\theta_\transoutput,\{\theta^{(l)}_\Attn,\theta^{(l)}_\FF\}_{l=0}^{L-1} )$ with $2s$-bit  precision and $e$-bit exponent. Input tokens $x= (x_1,\ldots, x_n)\in\gV^n$.
\Ensure Output distribution $p_\theta(\cdot \mid x_1,\ldots, x_i)$ for all $i\in[n]$ and output token $\transformer_\theta(x)$.
\State $h^{(0)}_i \triangleq \rdes{\tokenembedding(x_i) + \posencoding(i)}, \forall i \in [n]$
\For{$l = 0,\ldots,L-1$}
\State	$( h^{(l+0.5)}_1,\ldots, h^{(l+0.5)}_n) \triangleq  \rdes{(h^{(l)}_1,\ldots, h^{(l)}_n) +\Attn_{\theta^{(l)}_\Attn}(h^{(l)}_1,\ldots, h^{(l)}_n)} $
\State  $ h^{(l+1)}_i \triangleq \rdes{ h^{(l+0.5)}_i + \FF_{\theta^{(l)}_\FF}(h^{(l+0.5)}_i)}$, $\forall i \in [n]$
\EndFor
\State $p_\theta(\cdot\mid x_1,\ldots,x_i) \triangleq  \rdes{\transoutput_{\theta_\transoutput}(h^{(L)}_i)}, \forall i \in [n]$
\State $\transformer_\theta(x)\triangleq \argmax_{x} p_\theta(x\mid x_1,\ldots,x_n)$.
\end{algorithmic}
\end{algorithm}
\section{Preliminary of Automata and Krohn-Rhodes Decomposition Theorem}\label{appsec:prelim_automata}

In this section we recap the basic notations and definitions for automata theory and Krohn-Rhodes Decomposition Theorem~\citep{krohn1965algebraic}, following the notation and presentation of \citet{maler2010krohn}. 

\begin{definition}[Automaton]\label{defi:automaton}
	A deterministic automaton is triple $\gA = (\Sigma,Q,\delta)$ where
$\Sigma$ is a finite set of symbols called the input alphabet, $Q$ is a finite set of states and
$\delta : Q \times \Sigma \to Q$ is the transition function.
\end{definition}

The transition function can be lifted naturally to input sequences, by letting $\delta(q, w\sigma) \triangleq \delta(\delta(q, w), \sigma)$ for all $w\in \Sigma^*$ recursively.

An automaton can be made an acceptor by choosing an initial state $q_0 \in Q$ and a set of accepting states $F \subseteq Q$. As such it accepts/recognizes a set of sequences, also known as a language, defined as $\gL(A,q_0,F) = \{  w\in\Sigma^* : \delta(q_0, w) \in F\}$. Kleene’s Theorem states that the class of languages recognizable by finite automata coincides with the regular
languages.

\begin{definition}[Automaton Homomorphism]
	A surjection $\phi : Q \to Q_0$
is an automaton homomorphism from $\gA = (\Sigma, Q, \delta)$ to $\gA_0 = (\Sigma, Q_0, \delta_0)$ if for every $q \in Q, \sigma \in \Sigma,\phi(\delta(q, \sigma)) = \delta_0(\phi(q), \sigma)$.
In such a case we say that $\gA_0$
is homomorphic to $\gA$ and denote it by $\gA_0 \le_\phi \gA$. When $\phi$
is a bijection, $\gA$ and $\gA_0$ are said to be isomorphic.
\end{definition} 

The conceptual significance of Automaton Homomorphism is that, if we can simulate any $\gA$ and $\gA_0 \le_\phi \gA$, we can `almost' simulate $\gA_0$ as well, in the sense of following lemma:

\begin{lemma}\label{lem:automata_homo}
	For any two automata $\gA= (\Sigma, Q, \delta),\gA_0 = (\Sigma, Q_0, \delta_0)$ satisfying that $\gA_0 \le_\phi \gA$ for some function $\phi$,  for any $F_0\subseteq Q$, $q_0\in Q$, $\phi(q) = q_0$, it holds that $ \gL(\gA_0,q_0,F_0) = \gL(\gA,q,\phi^{-1}(F_0))$.
\end{lemma}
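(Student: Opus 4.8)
\textbf{Proof plan for Lemma~\ref{lem:automata_homo}.}
The plan is to unfold both languages according to the definition $\gL(\gA,q,F)=\{w\in\Sigma^*:\delta(q,w)\in F\}$ and to reduce the statement to a single pointwise identity: for every word $w\in\Sigma^*$,
\[
\phi(\delta(q,w)) = \delta_0(q_0, w),
\]
given the hypothesis $\phi(q)=q_0$. Once this identity is established, membership is immediate: for $F_0\subseteq Q_0$ we have $w\in\gL(\gA_0,q_0,F_0)$ iff $\delta_0(q_0,w)\in F_0$ iff $\phi(\delta(q,w))\in F_0$ iff $\delta(q,w)\in\phi^{-1}(F_0)$ iff $w\in\gL(\gA,q,\phi^{-1}(F_0))$. (Note the statement as written has $F_0\subseteq Q$, which appears to be a typo for $F_0\subseteq Q_0$; I would flag this and proceed with $F_0\subseteq Q_0$, since $\phi^{-1}(F_0)$ only makes sense that way and matches the intended reading.)

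The core of the argument is therefore the pointwise identity, which I would prove by induction on the length of $w$, using the recursive lifting $\delta(q,w\sigma)\triangleq\delta(\delta(q,w),\sigma)$ (and likewise for $\delta_0$) stated just before Definition~\ref{defi:automaton}. For the base case $w=\varepsilon$ (the empty word), $\delta(q,\varepsilon)=q$ and $\delta_0(q_0,\varepsilon)=q_0$, so $\phi(\delta(q,\varepsilon))=\phi(q)=q_0=\delta_0(q_0,\varepsilon)$. For the inductive step, write $w=w'\sigma$ with $w'\in\Sigma^*$, $\sigma\in\Sigma$, and assume $\phi(\delta(q,w'))=\delta_0(q_0,w')$. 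Then
\[
\phi(\delta(q,w'\sigma)) = \phi(\delta(\delta(q,w'),\sigma)) = \delta_0(\phi(\delta(q,w')),\sigma) = \delta_0(\delta_0(q_0,w'),\sigma) = \delta_0(q_0,w'\sigma),
\]
where the second equality is exactly the homomorphism property $\phi(\delta(q',\sigma))=\delta_0(\phi(q'),\sigma)$ from Definition~\ref{defi:automaton} applied at the state $q'=\delta(q,w')$, and the third uses the inductive hypothesis.

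There is essentially no hard part here — the lemma is a routine bookkeeping fact, and the only thing to be careful about is keeping the lifted transition functions straight and invoking the homomorphism identity at the right (single-symbol) level rather than trying to apply it directly to words. I would present it concisely: state the pointwise identity as a sub-claim, give the two-line induction, and then read off the language equality in one line. If one wants to avoid invoking the empty word, one can equally well start the induction at $|w|=1$ (where the identity is literally the homomorphism axiom) and note the $w=\varepsilon$ case separately; either framing is fine.
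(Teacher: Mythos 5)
Your proposal is correct and follows essentially the same route as the paper's proof: both establish the pointwise identity $\phi(\delta(q,w))=\delta_0(\phi(q),w)$ by induction on the length of $w$ using the single-symbol homomorphism axiom, then read off the language equality by unfolding definitions (and your note that $F_0\subseteq Q$ should read $F_0\subseteq Q_0$ is a correct observation about a typo in the statement).
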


\begin{proof}[Proof of \Cref{lem:automata_homo}]
We claim for any $w\in\Sigma^*$, it holds that $\phi(\delta(q,w)) = \delta(\phi(q),w)$. This claim holds by definition of automaton homomorphism for all $|w|\le 1$. suppose the claim already holds for all $w$ no longer than $n$ for some $n$, for any $w'=w\sigma$ with $|w|=n$ and $\sigma\in\Sigma$, it holds that $\phi(\delta(q,w')) = \phi(\delta(\delta(q,w),\sigma)) = \delta(\phi(\delta(q,w)),\sigma)=\delta(\delta(\phi(q),w),\sigma) = \delta(\phi(q),w')$. 	Therefore $\delta_0(q_0,w)\in F_0 \iff \delta_0(\phi(q),w)\in F_0 \iff \phi(\delta(q,w))\in F_0 \iff \delta(q,w)\in \phi^{-1}(F_0)$. Thus we conclude that 
$\gL(\gA_0,q_0,F_0) = \{w\in\Sigma^*\mid \delta(q_0,w)\in F_0\} = \{w\in\Sigma^*\mid \delta(q,w)\in \phi^{-1}(F_0)\} = \gL(\gA,q,\phi^{-1}(F_0))$.
\end{proof}

\begin{definition}[Semigroups, Monoids and Groups]
	A Semigroup is a pair $(S, \cdot)$ where
$S$ is a set and $\cdot$ is a binary associative operation (“multiplication”) from $S\times S$ to $S$. A
Monoid $(S, \cdot, 1)$ is a semigroup admitting an identity element $1$ such that $s\cdot1 = 1\cdots = s$
for every $s \in S$. A group is a monoid such that for every $s \in S$ there exists an element
$s^{-1} \in S$ (an inverse) such that $s \cdot s^{-1} = 1$.
\end{definition}

\begin{definition}[Semigroup Homomorphisms]
	A surjective function $\phi : S \to S_0$
is a semigroup homomorphism from $(S, \cdot)$ to $(S_0, \ast)$ if for every $s_1,s_2 \in S, 
\phi(s_1 \cdot s_2) = \phi(s_1) \ast \phi(s_2)$.
In such a case we say that $S_0$ is homomorphic to $S$ and denote it by $S_0 \le_\phi S$. Two mutually homomorphic semigroups are said to be isomorphic.
\end{definition}

\begin{definition}[Transformation Semigroup]
	The transformation semigroup of an automata $\gA = (\Sigma,Q,\delta)$ is the semigroup generated by $\{\delta(\cdot, \sigma):Q\to Q\mid \sigma \in \Sigma \}$.
\end{definition}

\subsection{The Krohn-Rhodes Decomposition Theorem}\label{appsec:krhon-rhodes theorem}

Below we give the definition of the cascade product of two automata, which is a central concept used in Krohn-Rhodes Decomposition Theorem for automata.

\begin{definition}[Cascade Product]
Let $\gB_1 = (\Sigma, Q_1, \delta_1)$ and 
$\gB_2 = (Q_1 \times \Sigma, Q_2, \delta_2)$ be two automata. The cascade product $\gB_1 \circ \gB_2$ is the automaton $C = (\Sigma, Q_1\times Q_2, \overline\delta)$ where
$$\overline\delta((q_1, q_2), \sigma) \triangleq (\delta_1(q_1, \sigma), \delta_2(q_2,(q_1, \sigma))).$$
The cascade product of more than two automata is defined as
$\gB_1 \circ \gB_2 \circ \cdots\circ \gB_k = (\cdots((\gB_1 \circ \gB_2) \circ B_3\cdots) \circ \gB_k$.
\end{definition}

\begin{definition}[Permutation-Reset Automata]
A automaton $\gA =(\Sigma, Q, \delta)$ is a permutation-reset automaton if for every letter $\sigma \in \Sigma$, $\sigma$ is either a
permutation or reset. If the only
permutations are identities, we call it a reset automaton.
\end{definition}
\begin{theorem}[Krohn-Rhodes; cf. \citet{maler2010krohn}]\label{thm:krohn_rhodes}
	For every automaton A there exists a cascade $\gC = \gB_1 \circ \gB_2 \circ \cdot \cdot \cdot \circ \gB_k$ such that:
	\begin{enumerate}
		\item Each $\gB_i$ is a permutation-reset automaton;
		\item There is a homomorphism $\phi$ from $\gC$ to $\gA$;
		\item Any permutation group in some $\gB_i$
is homomorphic to a subgroup of the transformation semigroup of $\gA$.

	\end{enumerate}
The pair $(\gC, \phi)$ is called a cascaded decomposition of $\gA$.
\end{theorem}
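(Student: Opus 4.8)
The plan is to carry out the \emph{holonomy decomposition} of the transformation semigroup $(Q,S)$ of $\gA$, where $S$ is generated by $\{\delta(\cdot,\sigma):\sigma\in\Sigma\}$ and $S^1\triangleq S\cup\{\id_Q\}$. First I would reduce to the semigroup level: passing between an automaton and its transformation semigroup, and between cascade products of automata and wreath products of transformation semigroups over the fixed alphabet $\Sigma$, is pure bookkeeping, so it suffices to exhibit a wreath product of permutation-reset transformation semigroups that maps onto $(Q,S)$ and whose groups are homomorphic images of subgroups of $S$.

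The backbone is the poset of \emph{images}. Let $\gI\triangleq\{Qs:s\in S^1\}\cup\{\{q\}:q\in Q\}$; it is closed under $P\mapsto Ps$, contains $Q$ at the top, and is graded by $|P|$ into levels $\ell=1,\dots,m$ (coarsest first). For an image $P$, let $\gB(P)$ be the set of maximal proper subimages of $P$ and let $G_P\triangleq\{t\in S^1 : Pt\subseteq P,\ t\text{ is injective on }P\}$; then $t\mapsto(R\mapsto Rt)$ is an action of $G_P$ on $\gB(P)$ whose image $H_P\le\mathrm{Sym}(\gB(P))$ is the \emph{holonomy group} at $P$. The first substantive step is to verify this is well posed ($Rt\in\gB(P)$ whenever $R\in\gB(P)$, $t\in G_P$) and that $H_P$ is a homomorphic image of a subgroup of $S$: restriction to $P$ sends $G_P$ onto a finite subsemigroup of $\mathrm{Sym}(P)$, hence onto a group $\Gamma_P$; $H_P$ is a further quotient of $\Gamma_P$; and by the standard fact that a group which is a homomorphic image of a finite semigroup $T$ is already a homomorphic image of a subgroup of $T$ — applied to $T=G_P\subseteq S^1$ — together with the observation that a subgroup of $S^1$ is either trivial or a subgroup of $S$, clause (3) follows.

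Next I would assemble, for each level $\ell$, a single permutation-reset transformation semigroup $\gB_\ell$. Its state set records ``the current level-$\ell$ image $P$ together with the element of $\gB(P)$ we are aiming into,'' and a letter of $\gB_\ell$ is a tuple $(q_1,\dots,q_{\ell-1},\sigma)\in Q_1\times\dots\times Q_{\ell-1}\times\Sigma$, so that the level-$\ell$ behaviour may depend on the shallower states — precisely the cascade dependency. The crux is to show that, given this data, each such letter acts on $Q_\ell$ either as a permutation (when $\sigma$ keeps the current image at level $\ell$, acting through the relevant holonomy element) or as a reset (when $\sigma$ drops the current image strictly below level $\ell$, in which case it resets to a fixed entry state read off from $q_1,\dots,q_{\ell-1}$), with nothing in between; this makes $\gB_\ell$ genuinely permutation-reset (clause (1)). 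Setting $\gC\triangleq\gB_1\circ\gB_2\circ\cdots\circ\gB_m$, a state of $\gC$ determines a chain of nested images $Q\supseteq P_1\supseteq\cdots\supseteq P_m$ that bottoms out in a singleton $P_m=\{q\}$, and I would define $\phi:\gC\to\gA$ by $\phi(\text{state})\triangleq q$; checking $\phi(\overline\delta(\cdot,\sigma))=\delta(\phi(\cdot),\sigma)$ and surjectivity (each $q\in Q$ lies at the bottom of a maximal chain of images) then unwinds the definitions and gives clause (2).

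The main obstacle is the middle part: organizing the image poset so that (a) each $s\in S^1$ acting on a fixed image $P$ either fixes $P$ and permutes $\gB(P)$, or sends $P$ strictly to a lower level, with nothing intermediate; and (b) after such a drop the entry point at the next level is a function of the states already computed at shallower levels (else $\gC$ is not a genuine cascade). These are the combinatorial facts about the images $Qs$ and their maximal subimages that force the permutation-reset and cascade structures to coexist; once they are in place, clauses (1)--(3) are as above and $(\gC,\phi)$ is the desired cascaded decomposition.
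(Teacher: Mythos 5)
This statement is the classical Krohn--Rhodes theorem, which the paper does not prove at all: it is imported as a black box, cited from \citet{maler2010krohn} (see also the original work of Krohn and Rhodes), and then only \emph{used} (via \Cref{cor:krohm_rhodes_aperiodic}) in the proof of \Cref{thm:iterated_addition_in_AC0}. So there is no in-paper proof to compare against; the relevant comparison is with the cited literature, and your sketch follows exactly the standard route taken there, namely the holonomy decomposition of the transformation semigroup: the image poset $\{Qs\}$ graded coarsest-first, holonomy groups acting on the maximal proper subimages of each image, one permutation--reset component per level whose letters carry the shallower coordinates, and a projection $\phi$ reading off the bottom of a nested chain of images.

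As a proof, however, what you have written is a plan rather than an argument, and you yourself flag the places where the real work lives. Concretely: (i) the closure fact $R\in\gB(P),\ t\in G_P \Rightarrow Rt\in\gB(P)$ and the identification of $H_P$ as a quotient of a subgroup of $S$ (clause 3) are asserted with the right auxiliary fact (a group quotient of a finite semigroup is a quotient of one of its subgroups) but not verified; (ii) the dichotomy that every letter of $\gB_\ell$ acts on the level-$\ell$ states either as a permutation or as a reset, ``with nothing in between,'' is precisely the delicate combinatorial core of the holonomy proof and is left open; (iii) the entry state after a drop must be a well-defined function of the shallower coordinates $(q_1,\dots,q_{\ell-1})$, which in standard treatments requires fixing covering chains/tiling data and checking consistency; and (iv) your $\phi$ is only defined on states of $\gC$ that encode a \emph{consistent} nested chain $Q\supseteq P_1\supseteq\cdots\supseteq\{q\}$, whereas the paper's notion of homomorphism (\Cref{appsec:prelim_automata}) demands a surjection defined on all of $\gC$'s states commuting with $\overline\delta$; you need either to show the consistent states form a subautomaton and adjust the statement accordingly, or to extend $\phi$ to arbitrary tuples in a transition-compatible way, as the cited source does. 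None of these is a wrong turn --- they are the standard obstacles and they are surmountable --- but until they are carried out the sketch does not constitute a proof. Given that the paper deliberately treats this theorem as known background, reproving it is also unnecessary for the paper's purposes.
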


\subsection{Counter-free Automata}\label{appsec:counter_free_automata}

Next we introduce a key concept used in the proof of \Cref{thm:iterated_addition_in_AC0} (and thus \Cref{thm:AC0_upper_bound}) -- Counter-free Automaton.
\begin{definition}[Counter-free Automaton, \citep{mcnaughton1971counter}]
	An automaton is counter-free if no word $w\in\Sigma^*$ induces a permutation other than identity on any subset of $Q$. 
\end{definition}

 A subclass of the regular languages is the class of star-free sets defined as:
\begin{definition}[Star-Free regular languages]\label{defi:star_free_language}
	 The class of star-free regular languages over $\Sigma$ is the
smallest class containing $\Sigma^\ast$ and the sets of the form $\{  \sigma\}$ where $\sigma \in \sigma \cup \{ \eps\}$, which is
closed under finitely many applications of concatenation and Boolean operations including union, intersection, and complementation.
\end{definition}

It is well-known that languages recognized by counter-free automata have the following equivalent characterizations.

\begin{theorem}[\citet{mcnaughton1971counter}]\label{thm:counter_free_language_three_characterization}
Suppose $L$ is a regular language not containing the empty string. Then the following are equivalent:
 \begin{enumerate}
 	\item $L$ is star-free;
 	\item $L$ is accepted by a counter-free automata.
 	\item $L$ is non-counting, \emph{i.e.}, there is an $n\in\mathbb{N}$ so that for all $x$, $y$, and $z$ and all $m \ge n$, $xy^mz \in  L  \iff xy^{m+1}z\in L$. 
 \end{enumerate}
\end{theorem}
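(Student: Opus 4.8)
The plan is to prove the cycle of implications $(1)\Rightarrow(3)\Rightarrow(2)\Rightarrow(1)$. The technical hinge, which I would record first, is the elementary equivalence that an automaton $\gA=(\Sigma,Q,\delta)$ is counter-free if and only if its transformation semigroup is \emph{aperiodic}, i.e. $f^{N}=f^{N+1}$ for every element $f$ and some uniform $N$: if some word $w$ induced a nontrivial permutation $\pi$ on a subset $S$, the induced transformation $f_w$ would satisfy $f_w^{N}|_S=\pi^{N}\neq\pi^{N+1}=f_w^{N+1}|_S$ (as $\pi$ is invertible and nontrivial); conversely, for any $f=f_w$ the iterates stabilize to a permutation on $\bigcap_k f^k(Q)$, which counter-freeness forces to be the identity, whence $f^{a+1}=f^{a}$ for large $a$. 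The hypothesis $\eps\notin L$ is needed only to sidestep boundary issues with the empty word and plays no role below.

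For $(1)\Rightarrow(3)$ I would induct on the star-free expression defining $L$, tracking the least $n$ (a ``counting threshold'') with $xy^mz\in K\iff xy^{m+1}z\in K$ for all $x,y,z$ and all $m\ge n$. The generators $\Sigma^\ast$ and $\{\sigma\}$ have thresholds $0$ and $1$; if $K_1,K_2$ have thresholds $n_1,n_2$ then $K_1\cup K_2$, $K_1\cap K_2$, and $\Sigma^\ast\setminus K_1$ are non-counting with threshold $\max(n_1,n_2)$ (resp. $n_1$). The only real work is concatenation: for $m$ large (depending on $n_1,n_2,|y|$), any factorization of $xy^mz$ witnessing membership in $K_1K_2$ splits the block $y^m$ so that at least $n_1$ whole copies of $y$ lie entirely in the $K_1$-part, or at least $n_2$ whole copies lie entirely in the $K_2$-part; inserting or deleting one copy of $y$ on that side preserves membership of both parts, giving $xy^mz\in K_1K_2\iff xy^{m+1}z\in K_1K_2$. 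This is the standard pumping argument, routine if tedious.

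For $(3)\Rightarrow(2)$ I would use the minimal DFA $\gA$ of $L$. If some word $w$ induced a nontrivial permutation on a subset, pick a cycle $q_0\mapsto q_1\mapsto\cdots\mapsto q_{\ell-1}\mapsto q_0$ of length $\ell\ge 2$ with $q_i=q_0 w^i$. Since $q_0\neq q_1$ and $\gA$ is minimal, some word $z$ is accepted from exactly one of $q_0,q_1$; taking $x$ to drive the start state to $q_0$ and $y=w$, membership of $xy^mz$ in $L$ depends only on $m\bmod\ell$ and differs between $m\equiv 0$ and $m\equiv 1$, so for every multiple $m=t\ell$ we have $xy^mz\in L$ while $xy^{m+1}z\notin L$ (or vice versa), contradicting non-countingness. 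Hence the minimal automaton is counter-free, which yields $(2)$.

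The hard direction is $(2)\Rightarrow(1)$, and here I would invoke the Krohn--Rhodes decomposition (\Cref{thm:krohn_rhodes}). If $\gA$ is counter-free and recognizes $L$, its transformation semigroup is aperiodic and so has no nontrivial subgroup, so the third clause of \Cref{thm:krohn_rhodes} forces every permutation group occurring in a component $\gB_i$ of a cascade decomposition $\gC=\gB_1\circ\cdots\circ\gB_k$ of $\gA$ to be trivial; that is, each $\gB_i$ is a \emph{reset automaton}. By \Cref{lem:automata_homo}, $\gC$ recognizes $L$, so it suffices to prove, by induction on $k$ and over all input alphabets, that every language recognized by a cascade of $k$ reset automata is star-free. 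The base case $k=1$ is a direct computation: after reading a word a reset automaton sits at the target of its last non-identity letter (or the start state), so over any alphabet $\Gamma$ the recognized language is a Boolean combination of sets $\Gamma^\ast C A^\ast$ with $C,A\subseteq\Gamma$, and $A^\ast=\Gamma^\ast\setminus(\Gamma^\ast(\Gamma\setminus A)\Gamma^\ast)$ is star-free. For the step, write the cascade as $(\gB_1\circ\cdots\circ\gB_{k-1})\circ\gB_k$; the first $k-1$ components define, for each of their states $p$, a star-free set of prefixes reaching $p$ (induction hypothesis with final set $\{p\}$), while $\gB_k$ reads the induced word over the enlarged alphabet $Q'\times\Sigma$, whose recognized language is star-free over $Q'\times\Sigma$ by the base case; substituting the star-free prefix descriptions into the $Q'$-coordinates and checking that star-freeness is preserved gives a star-free description of $L$ over $\Sigma$. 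This substitution step is the main obstacle --- it is precisely the wreath-product principle specialized to reset automata, the automata-theoretic core of the Sch\"utzenberger--McNaughton--Papert theorem --- whereas the other three implications are comparatively light. Assembling $(1)\Rightarrow(3)\Rightarrow(2)\Rightarrow(1)$ closes the equivalence.
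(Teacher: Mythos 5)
The paper never proves this statement at all: it is imported as a classical black box from \citet{mcnaughton1971counter}, so there is no in-paper argument to compare against, and what you have written is a reconstruction of the McNaughton--Papert/Sch\"utzenberger theorem along the standard lines. Your cycle $(1)\Rightarrow(3)\Rightarrow(2)\Rightarrow(1)$ is sound. The aperiodicity reformulation you open with is exactly the paper's \Cref{lem:aperiodic} (stated there with proof skipped), and your two ``light'' directions are the usual and correct arguments: the pumping/threshold induction for star-free expressions (minor bookkeeping nit: the threshold for $\{\sigma\}$ is $2$, not $1$, since $x=z=\eps$, $y=\sigma$, $m=1$ already breaks threshold $1$), and the minimal-DFA distinguishing-word argument for $(3)\Rightarrow(2)$. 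For the hard direction $(2)\Rightarrow(1)$ you route through Krohn--Rhodes, which the paper already has on hand as \Cref{thm:krohn_rhodes} and has in fact already specialized to counter-free automata in \Cref{cor:krohm_rhodes_aperiodic}; you could invoke that corollary together with \Cref{lem:automata_homo} rather than re-deriving the reset-cascade form. The only place your write-up is thinner than a full proof is the substitution (wreath-product) step of the cascade induction: you correctly identify it as the crux and state what must be checked, but you do not verify that plugging the star-free prefix languages $L_p$ into the $Q'$-coordinate of the base-case expressions of the form $\Gamma^\ast C A^\ast$ over the enlarged alphabet yields star-free expressions over $\Sigma$; it does, since ``the last resetting position'' and ``only identity-acting letters afterwards'' unwind into Boolean combinations of languages $L_{p}\,\sigma\,\Sigma^\ast$, and star-freeness is closed under the Boolean operations and concatenations used, so this is an elision of standard detail rather than a gap. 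In terms of trade-offs: the paper's bare citation suffices because it only ever uses the implication counter-free $\Rightarrow \AC^0$ via \Cref{thm:counter_free_in_AC0}, whereas your reconstruction makes the equivalence self-contained and dovetails with the Krohn--Rhodes machinery the paper already sets up, at the cost of the tedious substitution verification.
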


Counter-free property of an automaton can also be characterized via its transformation semigroup by \Cref{lem:aperiodic}, whose proof is straightforward and skipped. 

\begin{lemma}\label{lem:aperiodic}
An automaton is counter-free if and only if the transformation semigroup of the automaton is group-free, \emph{i.e.}, it has no non-trivial subgroups. A semigroup $(S,\cdot)$ is group-free if and only if it is \emph{aperiodic}, \emph{i.e.}, for all $s\in S$, there exists $k\in\mathbb{N}$, $s^k = s^{k+1}$.
\end{lemma}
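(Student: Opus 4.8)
The plan is to establish the two stated equivalences separately, starting with the purely semigroup-theoretic one (``group-free $\iff$ aperiodic'' for finite semigroups) and then using it, together with the functional description of the transformation semigroup, to deduce the automaton statement. Throughout I use that the transformation semigroup $T$ of $\gA=(\Sigma,Q,\delta)$ is finite, being a subsemigroup of $Q^Q$, and that its elements are exactly the maps $f_w\colon q\mapsto\delta(q,w)$ for $w\in\Sigma^+$, with $f_uf_v=f_{uv}$. I fix the convention that the product $fg$ in $T$ means ``apply $f$, then $g$'', i.e.\ $fg=g\circ f$ as set maps, so that abstract subgroup identities like $eg=ge=g$ and $gg^{-1}=e$ translate correctly into statements about functions on $Q$.

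For ``group-free $\iff$ aperiodic'': if $S$ is group-free, fix $s\in S$; finiteness gives a minimal index $i$ and period $p\ge 1$ with $s^i=s^{i+p}$, and $\{s^i,\dots,s^{i+p-1}\}$ is then a cyclic subgroup of $S$ of order $p$ (with identity the unique power in that range whose exponent is divisible by $p$), so group-freeness forces $p=1$, i.e.\ $s^i=s^{i+1}$; hence $S$ is aperiodic. Conversely, if $S$ is aperiodic and $G\le S$ were a non-trivial subgroup with identity $e$ and some $g\in G\setminus\{e\}$, then $g^k=g^{k+1}$ for some $k$, and multiplying both sides by the $G$-inverse of $g^k$ gives $e=g$, a contradiction; so $S$ is group-free.

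For the automaton equivalence I argue contrapositively in both directions. If $\gA$ is \emph{not} counter-free, there is a word $w$ and a subset $A\subseteq Q$ with $f_w(A)=A$ setwise and $f_w|_A$ a non-identity permutation $\pi$ of $A$; then $f_w^k|_A=\pi^k$ for all $k$, so $f_w^k=f_w^{k+1}$ is impossible (it would force $\pi=\mathrm{id}_A$ after cancelling the bijection $\pi$), meaning $f_w\in T$ is not aperiodic, hence $T$ is not group-free by the first equivalence. Conversely, suppose $T$ is not group-free: take a non-trivial subgroup $G\le T$ with idempotent identity $e$ and some $g\in G\setminus\{e\}$, and set $A=\mathrm{im}(e)\subseteq Q$, so that idempotency gives $e|_A=\mathrm{id}_A$. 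Translating $g=ge$ and $g=eg$ through the composition convention yields $\mathrm{im}(g)\subseteq A$ and $g(q)=g(e(q))$ for all $q\in Q$; combining $\mathrm{im}(g)\subseteq A$ with $gg^{-1}=e$ (so $g^{-1}(g(a))=e(a)=a$ on $A$) shows $g|_A$ is a bijection of $A$, hence a permutation; and were this permutation $\mathrm{id}_A$, then $g(q)=g(e(q))=e(q)$ for every $q$ would give $g=e$, contradicting $g\ne e$. Thus $g|_A$ is a non-identity permutation of $A$ and $\gA$ is not counter-free.

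The step I expect to need the most care is this last image-of-idempotent argument: keeping the composition order consistent so that the subgroup identities $eg=ge=g$ and $gg^{-1}=e$ really do translate into $\mathrm{im}(g)\subseteq A$, $g(q)=g(e(q))$, and ``$g^{-1}|_A$ inverts $g|_A$'', and then checking that $g|_A$ is genuinely not the identity. Everything else is routine finite-semigroup bookkeeping, which is presumably why the paper states the proof is straightforward.
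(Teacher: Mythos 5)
Your proof is correct. The paper declares this lemma ``straightforward'' and skips the proof entirely, so there is no argument of the paper to compare against; what you give is the standard one, and it is complete: the index--period cyclic group $\{s^i,\dots,s^{i+p-1}\}$ inside the monogenic subsemigroup for ``group-free $\Rightarrow$ aperiodic'', the cancellation-by-$G$-inverse argument for the converse, and for the automaton part the restriction-to-$A$ cancellation in one direction and the idempotent-image argument ($A=\mathrm{im}(e)$, $e|_A=\mathrm{id}_A$, $g|_A$ a non-identity permutation) in the other, with the composition convention $fg=g\circ f$ handled consistently so that $eg=ge=g$ and $gg^{-1}=e$ translate as you claim. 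One small point worth keeping in mind: the ``group-free $\Rightarrow$ aperiodic'' half genuinely needs finiteness (e.g.\ the free monogenic semigroup is group-free but not aperiodic), so the lemma's second sentence should be read for finite semigroups; your proof states and uses exactly this, and it suffices here since the transformation semigroup of a finite automaton is finite.
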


Thus \Cref{cor:krohm_rhodes_aperiodic} holds as a corollary of \Cref{thm:krohn_rhodes}.

\begin{theorem}[Corollary of \Cref{thm:krohn_rhodes}]\label{cor:krohm_rhodes_aperiodic}
		For every counter-free automaton $\gA$ there exists a cascade $\gC = \gB_1 \circ \gB_2 \circ \cdot \cdot \cdot \circ \gB_k$ such that each $\gB_i$ is a reset automaton and there is a homomorphism $\phi$ from $\gC$ to $\gA$.
\end{theorem}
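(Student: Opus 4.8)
The plan is to derive this as an immediate consequence of the Krohn-Rhodes theorem (Theorem \ref{thm:krohn_rhodes}) together with the semigroup characterization of counter-free automata in Lemma \ref{lem:aperiodic}. First I would apply Theorem \ref{thm:krohn_rhodes} to the given counter-free automaton $\gA$, obtaining a cascade $\gC = \gB_1 \circ \gB_2 \circ \cdots \circ \gB_k$ of permutation-reset automata, a homomorphism $\phi$ from $\gC$ to $\gA$, and — crucially — the property (part 3) that any permutation group occurring in some $\gB_i$ is homomorphic to a subgroup of the transformation semigroup of $\gA$. The homomorphism in the conclusion of the corollary will be exactly this $\phi$, so the only thing left to establish is that each $\gB_i$ is actually a \emph{reset} automaton, i.e., that every permutation letter of $\gB_i$ acts as the identity on its state set.

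Next I would invoke Lemma \ref{lem:aperiodic}: since $\gA$ is counter-free, its transformation semigroup is group-free, hence contains no non-trivial subgroups. Combining this with part 3 of Theorem \ref{thm:krohn_rhodes}, each permutation group $G_i$ of $\gB_i$ is a surjective homomorphic image of some subgroup $H$ of the transformation semigroup of $\gA$; but such an $H$ must be trivial, and a surjective image of the trivial group is trivial, so $G_i$ is trivial. Therefore every permutation letter of $\gB_i$ induces the identity permutation on its states, which is precisely the definition of $\gB_i$ being a reset automaton. Keeping the same cascade $\gC$ and the same $\phi$ then yields the statement.

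The argument is essentially bookkeeping once the pieces are aligned, so there is no genuinely hard step; the one point requiring care is the direction of "homomorphic to" in the paper's conventions. Recall that $S_0 \le_\phi S$ means there is a \emph{surjection} $S \to S_0$, so "$G_i$ homomorphic to a subgroup $H$" asserts that $G_i$ is a quotient of $H$, not a subgroup of it. Getting this backwards would break the reduction, so I would spell out explicitly that a quotient of the trivial group is trivial. A secondary minor check is that the "permutation group of $\gB_i$" — the group generated by the permutation letters acting on the states of $\gB_i$ — is indeed the object referred to in part 3 of Theorem \ref{thm:krohn_rhodes}; this matches the standard formulation of Krohn-Rhodes and needs only a one-line remark.
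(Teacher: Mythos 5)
Your proposal is correct and follows exactly the route the paper intends: invoke \Cref{thm:krohn_rhodes} to get a permutation-reset cascade with homomorphism $\phi$, use \Cref{lem:aperiodic} to see the transformation semigroup of a counter-free $\gA$ has only trivial subgroups, and conclude via part 3 of the theorem that every permutation group in each $\gB_i$ is trivial, i.e., each $\gB_i$ is a reset automaton. Your careful handling of the direction of ``homomorphic to'' (quotient of a trivial group is trivial) is precisely the bookkeeping the paper leaves implicit.
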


Using \Cref{cor:krohm_rhodes_aperiodic} the following theorem connects the counter-free automata to constant-depth poly-size circuits with unbounded fan-in. The high-level proof idea is that any reset automaton can be simulated using constantly many depth and any counter-free automaton can be decomposed into the cascade product of a finite number of reset automaton. 

\begin{theorem}\label{thm:counter_free_in_AC0}[Theorem 2.6, \citet{chandra1983unbounded}]
	Suppose $\gA=(\Sigma,Q,\delta)$ is an counter-free automaton. Then there is a circuit of size $O(n^3)$ with unbounded fan-in 
and constant depth that simulates $\delta(q,w)$ for any $q\in Q$ and $w\in\Sigma^*$ satisfying $|w|=n$, where $O(\cdot)$ hides constants depending on the automaton. 
\end{theorem}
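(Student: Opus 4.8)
The plan is to reduce the claim, via the Krohn--Rhodes machinery already in place, to simulating a cascade of \emph{reset} automata, and then to show that a reset automaton --- together with its entire state trajectory along an input word --- is computable in constant depth and size $O(n^3)$.

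\textbf{Step 1 (reduction to a reset cascade).} Since $\gA$ is counter-free, \Cref{cor:krohm_rhodes_aperiodic} gives a cascade $\gC=\gB_1\circ\cdots\circ\gB_k$ in which every $\gB_i$ is a reset automaton, together with a homomorphism $\phi$ from $\gC$ to $\gA$; here $k$ and the state sets $Q_1,\dots,Q_k$ depend only on $\gA$. The identity $\phi(\overline\delta(\tilde q,w))=\delta(\phi(\tilde q),w)$ established inside the proof of \Cref{lem:automata_homo} shows that to output $\delta(q,w)$ it suffices to fix once a preimage $\tilde q$ of each $q$ under $\phi$ (possible by surjectivity), to compute $\overline\delta(\tilde q,w)$, and then to apply the constant-size table for $\phi$ (constant depth, $O(1)$ gates). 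So the task becomes: a constant-depth, $O(n^3)$-size, unbounded fan-in circuit for $\overline\delta(\tilde q,\cdot)$ on length-$n$ words.

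\textbf{Step 2 (one reset automaton, with all intermediate states).} Let $\gB=(\Sigma',Q',\delta')$ be a reset automaton --- each letter acts as the identity or as a constant map --- with start state $q'_0$, and let $u=u_1\cdots u_n$ be a word over the finite alphabet $\Sigma'$. Then $\delta'(q'_0,u_1\cdots u_t)$ is the target of the last reset letter among $u_1,\dots,u_t$, or $q'_0$ if there is none. For each $j\le t$, the predicate ``$u_j$ is a reset and $u_{j'}$ is not a reset for all $j<j'\le t$'' is a conjunction of at most $n$ conditions, each a constant-size function of a single letter; the $O(n^2)$ such conjunctions, each of fan-in $O(n)$, thus cost $O(n^3)$ size and constant depth, and the $O(1)$ bits of each $\delta'(q'_0,u_1\cdots u_t)$ follow by a further fan-in-$O(n)$ $\OR$ (plus the no-reset default), adding $O(n^2)$ more. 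Hence the whole trajectory $\big(\delta'(q'_0,u_1\cdots u_t)\big)_{t=0}^{n}$ is produced by a constant-depth circuit of size $O(n^3)$ from the encodings of $u_1,\dots,u_n$.

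\textbf{Step 3 (stacking the cascade).} I would build the trajectories level by level. Level $1$ reads $w=\sigma_1\cdots\sigma_n$ and yields $q^{(1)}_t$, the state of $\gB_1$ after $\sigma_1\cdots\sigma_t$, for all $t$, by Step 2. Inductively, given the trajectories of $\gB_1,\dots,\gB_{i-1}$, the letter fed to $\gB_i$ at time $t$ is $\big((q^{(1)}_{t-1},\dots,q^{(i-1)}_{t-1}),\sigma_t\big)$, a constant-size function of bits already computed; since $\gB_i$ is a reset automaton over the \emph{finite} alphabet $\big(\prod_{j<i}Q_j\big)\times\Sigma$, Step 2 produces its trajectory with $O(n^3)$ extra size and constant extra depth. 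After $k$ levels ($k$ constant) the total size is $O(n^3)$ and the total depth is $O(1)$; reading off $\overline\delta(\tilde q,w)=(q^{(1)}_n,\dots,q^{(k)}_n)$ and applying the $\phi$-table gives $\delta(q,w)$ for any $q\in Q$ and $|w|=n$.

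\textbf{Expected obstacle.} The substantive fact --- counter-free automata decompose into cascades of \emph{reset}-only automata --- is exactly \Cref{cor:krohm_rhodes_aperiodic}, which may be assumed, so the remaining work is careful bookkeeping rather than new ideas. The two points to watch are that $k$ and the $|Q_i|$ are constants (so interleaving the per-level sub-circuits keeps the depth $O(1)$), and that the cubic size is essentially forced here because each internal level must expose all $n$ of its intermediate states, each an $\Theta(n^2)$-size sub-circuit; one also checks that each $\gB_i$ retains a finite alphabet, so that Step 2 applies verbatim at every level.
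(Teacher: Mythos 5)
Your proposal is correct and follows exactly the route the paper indicates for this result (which it attributes to \citet{chandra1983unbounded} and only sketches): decompose the counter-free automaton via \Cref{cor:krohm_rhodes_aperiodic} into a cascade of constantly many reset automata, simulate each reset level in constant depth by locating the last reset letter, and pull the answer back through the homomorphism. Your bookkeeping of the levels, the use of \Cref{lem:automata_homo} for the pullback, and the $O(n^3)$ size accounting are all sound, so the write-up is a valid fleshing-out of the same argument.
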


\section{Proofs for Expressiveness Upper Bounds  (\Cref{sec:improved_upper_bounds})}\label{appsec:proofs_transformer_upperbound}

The main technical theorems we will prove in this section are \Cref{thm:iterated_addition_in_AC0,thm:iterated_addition_in_TC0}. Their proofs can be found in \Cref{appsec:AC0_upper_bound,appsec:TC0_upper_bound} respectively.  

Recall $\psi_{e,s}:\Floating_{e,s}\to \{0,1\}^{e+2s+1}$ is the binary representation of floating point with $e$-bit exponent and $2s$-bit precision.
\begin{theorem}\label{thm:iterated_addition_in_AC0}
For any fixed $e\in\mathbb{N},s\in\mathbb{N}^+$, $\sumes:(\Floating_{e,s})^n \to \Floating_{e,s}$ has $\AC^0$ circuits.
 
 In detail, there is a family of $\AC^0$ circuits $\{C_n\}$ such that for all $x_1,\ldots,x_n \in \Floating_{e,s}$, it holds that 
	\begin{equation}
		C_n(\psi_{e,s}(x_1)\|\dots\|\psi_{e,s}(x_n)) = \psi_{e,s}(\sumes(x_1,\dots,x_n))
	\end{equation}
\end{theorem}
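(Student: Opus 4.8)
The plan is to recast iterated rounded addition over $\Floating_{e,s}$ as a single \emph{fixed} finite automaton, show that this automaton is counter-free, and then invoke \Cref{thm:counter_free_in_AC0}. Since $e$ and $s$ are constants, $\Floating_{e,s}$ is a finite set of constant size; I would define the automaton $\gA = (\Sigma, Q, \delta)$ with $\Sigma = Q = \Floating_{e,s}$ and transition function $\delta(q, \sigma) \triangleq \rdes{q + \sigma}$. Because $\rdes{0 + x} = x$ for every $x \in \Floating_{e,s}$, feeding the word $x_1 x_2 \cdots x_n$ into $\gA$ from the start state $0$ produces exactly $\delta(0, x_1 \cdots x_n) = \sumes(x_1, \ldots, x_n)$, so it suffices to build an $\AC^0$ circuit that simulates $\delta$.

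The technical heart is showing $\gA$ is counter-free, which rests on the monotonicity of rounded addition noted in \Cref{sec:improved_upper_bounds}. First I would verify that for each fixed $\sigma \in \Floating_{e,s}$ the map $q \mapsto \rdes{q+\sigma}$ is non-decreasing on the totally ordered set $\Floating_{e,s}$: the round-to-nearest operation $\round(\cdot, \Floating_{e,s}) : \mathbb{R} \to \Floating_{e,s}$ is order-preserving (ties are broken toward $0$ and overflow is clamped to $\pm B_{e,s}$; since $\Floating_{e,s}$ is symmetric about $0$, consecutive representable numbers never straddle $0$, so no edge case breaks monotonicity), and $q \le q'$ gives $q + \sigma \le q' + \sigma$. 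Consequently every word $w \in \Sigma^*$ induces a composition of non-decreasing maps, hence a non-decreasing self-map $f_w$ of the finite chain $\Floating_{e,s}$. If $f_w$ restricted to some $S \subseteq \Floating_{e,s}$ permutes $S$, then $f_w|_S$ is a non-decreasing bijection of a finite totally ordered set and must therefore be the identity on $S$; thus no word induces a non-identity permutation on any subset of $Q$, i.e., $\gA$ is counter-free (equivalently, by \Cref{lem:aperiodic}, its transformation semigroup is aperiodic). \Cref{thm:counter_free_in_AC0} then yields a constant-depth, $O(n^3)$-size unbounded-fan-in circuit computing $\delta(q, w)$ for any state $q$ and length-$n$ input word $w$.

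It remains to handle the binary encoding. The circuit receives $\psi_{e,s}(x_1) \| \cdots \| \psi_{e,s}(x_n)$, where each block has the constant length $e + 2s + 1$; a constant-size Boolean map applied to each block in parallel converts it into whatever fixed encoding of the alphabet $\Sigma = \Floating_{e,s}$ the circuit of \Cref{thm:counter_free_in_AC0} expects, and one further constant-size map converts the resulting output state back into its $\psi_{e,s}$-string. All three stages have constant depth, so their composition is an $\AC^0$ circuit of size $O(n^3)$ with the required behavior, and its constants depend only on $e,s$. I expect the main obstacle to be the counter-free step --- specifically, isolating monotonicity of rounded addition as the invariant that makes the automaton non-counting and checking it is not destroyed by the tie-breaking and overflow conventions of \Cref{defi:rounding}; once that is established, the passage to \Cref{thm:counter_free_in_AC0} is routine.
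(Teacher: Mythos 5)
Your proposal is correct and follows essentially the same route as the paper: it views iterated rounded addition as the automaton $(\Floating_{e,s},\Floating_{e,s},\delta_+)$ with $\delta_+(q,\sigma)=\rdes{q+\sigma}$, uses monotonicity of rounded addition to conclude the automaton is counter-free (the paper packages this as the notion of an \emph{ordered} automaton in \Cref{defi:ordered_automaton} and proves \Cref{thm:ordered_automata_counter_free} via aperiodicity of the transformation semigroup, whereas you argue directly that a non-decreasing bijection of a finite chain is the identity --- a minor, if slightly cleaner, variation), and then invokes \Cref{thm:counter_free_in_AC0} plus constant-size encoding/decoding maps to obtain the $\AC^0$ circuit family.
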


\begin{theorem}\label{thm:iterated_addition_in_TC0}
For $s(n) = O(\poly(n))$, $\mathsf{sum}_{0,s(n)}:(\Floating_{0,s(n)})^n \to \Floating_{0,s(n)}$ has $\TC^0$ circuits.
 
 In detail, there is a family of $\TC^0$ circuits $\{C_n\}$ such that for all $x_1,\ldots,x_n \in \Floating_{0,s(n)}$, it holds that 
	\begin{equation}
		C_n(\psi_{0,s(n)}(x_1)\|\dots\|\psi_{0,s(n)}(x_n)) = \psi_{0,s(n)}(\sumes(x_1,\dots,x_n))
	\end{equation}
\end{theorem}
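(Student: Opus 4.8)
The plan is to strip away the floating-point wrapper and recognise $\sumes$ (with $e=0$) as a \emph{clamped prefix sum} of bounded integers, and then to compute that prefix sum via the explicit solution of a doubly-reflected walk --- in essence Algorithm~1 of \citet{liu2022transformers} for \textsf{GridWorld}. When $e=0$ the set $\Floating_{0,s}$ of representable numbers is a symmetric arithmetic progression $\{-M\delta,\ldots,-\delta,0,\delta,\ldots,M\delta\}$ with $M=2^{2s}-1$ and $\delta$ a power of two, and $\psi_{0,s}(x)$ is related to the integer $\sigma$ defined by $x=\sigma\delta$ (with $|\sigma|\le M$) by an $\AC^0$-computable re-encoding in both directions. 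Since every partial sum $T_{k-1}+x_k$ is again a multiple of $\delta$, the rounding $\rdes{\cdot}$ never rounds a fraction and, by the overflow convention, merely clips the result to $[-M\delta,M\delta]$; this is exactly where $e=0$ is used, as for $e>0$ the representable numbers are not equally spaced and genuine fractional rounding occurs. Writing $A_k\triangleq x_k/\delta\in\mathbb{Z}$ (with $|A_k|\le M$, each of $\poly(n)$ bits), we obtain $\sumes(x_1,\ldots,x_n)=\delta\cdot T_n$ where $T_0=0$ and $T_k=\mathrm{clip}_{[-M,M]}(T_{k-1}+A_k)$ for $k\in[n]$ (the first rounding in $\sumes$ is vacuous since $x_1\in\Floating_{0,s}$). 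It therefore suffices to compute $T_n$ in $\TC^0$ from the signed encodings of $A_1,\ldots,A_n$.

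Let $P_k\triangleq\sum_{j\le k}A_j$ (so $P_0=0$), an integer of $\poly(n)$ bits. The map $(A_k)\mapsto T_n$ is the action of an automaton with $2M+1$ states, which is exponential in $s$ --- precisely why the counter-free / Krohn--Rhodes route behind \Cref{thm:iterated_addition_in_AC0} yields circuits whose depth grows with $s$ and does not apply here. Instead, $T_n$ equals the discrete two-barrier Skorokhod reflection of the path $(P_0,P_1,\ldots,P_n)$ onto $[-M,M]$, and this reflection admits a closed form: there are an absolute constant $r$ and a fixed expression built from $r$ nested $\max$/$\min$ operations over the suffixes $\{P_i,\ldots,P_n\}$ of the array (schematically $T_n=P_n-R_n$ with $R_n$ of the shape $\big(\max_i(\cdots)\big)\wedge\big(\min_i\max_{i\le j}(\cdots)\big)$) that solves the recursion above. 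I would establish this either by transcribing the \textsf{GridWorld} construction of \citet{liu2022transformers} --- stated there for $\pm1$ steps, but the reflection formula is insensitive to step sizes --- or, self-contained, by positing the formula and verifying by induction on $n$ that it satisfies $T_k=\mathrm{clip}(T_{k-1}+A_k)$, using only that $\mathrm{clip}$ is monotone and $1$-Lipschitz.

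Finally, every ingredient is computable by constant-depth threshold circuits of $\poly(n)$ size: iterated addition of $n$ integers with $\poly(n)$ bits is in $\TC^0$, so all $n+1$ prefix sums $P_0,\ldots,P_n$ are obtained in parallel; comparison, $\max$, and $\min$ of two $\poly(n)$-bit integers are in $\TC^0$ (indeed $\AC^0$), hence so is $\max$/$\min$ over $\poly(n)$ many of them and every prefix/suffix extremum of $(P_0,\ldots,P_n)$; and the final $O(1)$-depth arithmetic combination producing $T_n$, together with the re-encoding through $\psi_{0,s}$, is $\TC^0$. Composing the constantly many $\TC^0$ stages yields the claimed family $\{C_n\}$. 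The main obstacle is the second step: two-sided clamping is genuinely non-associative --- unlike one-sided reflection, which is merely ``subtract the running minimum'' --- so the crux is to exhibit a reflection formula whose nesting depth is an absolute constant \emph{independent of $s$} and to verify it against the recursion, exactly the point at which the $\AC^0$ argument for \Cref{thm:iterated_addition_in_AC0} breaks down.
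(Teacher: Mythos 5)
Your proposal is correct and takes essentially the same route as the paper: with $e=0$ the rounding is pure saturation, so $\sumes$ becomes a clamped integer prefix sum, which is then evaluated by a constant-depth formula built from exact prefix sums and suffix maxima/minima (the GridWorld-style construction adapted from \citet{liu2022transformers}, instantiated in the paper as \Cref{alg:gridworld}), and $\TC^0$-ness follows since iterated integer addition, comparisons, and max/min over polynomially many $\poly(n)$-bit integers are all in $\TC^0$. The only substantive difference is that the paper exhibits and verifies the explicit two-sided reflection formula (via the sentinel values $y_{-2},\ldots,y_1$ and the last index where the suffix oscillation reaches $2B_{s(n)}$, \Cref{lem:TC0_gridworld_algorithm_correct}), the very step you correctly identify as the crux but leave in schematic form with a deferred inductive verification.
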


With \Cref{thm:iterated_addition_in_AC0,thm:iterated_addition_in_TC0} ready, \Cref{thm:AC0_upper_bound,thm:TC0_upper_bound} are standard (e.g., see proof of Theorem 4 in \citet{liu2022transformers}) and thus are omitted.

\subsection{Proofs for \Cref{thm:iterated_addition_in_AC0}}\label{appsec:AC0_upper_bound}
\begin{definition}[Total Order]
	A total order $\le$ on some set $X$ is a binary relationship satisfying that for all $a,b,c\in X$:
	\begin{enumerate}
		\item $a\le a$ (reflexive)
		\item $a\le b$, $b\le c \implies a\le c$ (transitive)
		\item $a\le b$, $b\le a \implies a =b$ (antisymmetric)
		\item $a\le b$ or $b\le a$. (total)
	\end{enumerate}
\end{definition}
\begin{definition}[Ordered Automaton]\label{defi:ordered_automaton}
	We say an automaton $\gA = (\Sigma,Q,\delta)$ is \emph{ordered} if and only if there exists a total order $\le$ on $Q$ and for all $\sigma\in\Sigma$, $\delta(\cdot,\sigma)$ preserves the order, that is,  $$\forall q,q'\in Q, \quad q\ge q' \implies \delta(q,\sigma)\ge \delta(q',\sigma).$$
\end{definition}

\begin{theorem}\label{thm:ordered_automata_counter_free}
	All ordered automata are counter-free. Languages recognizable by any ordered automata belong to $\AC^0$.
\end{theorem}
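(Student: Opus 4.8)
The plan is to prove the two assertions in sequence, obtaining the second from the first together with \Cref{thm:counter_free_in_AC0}. First I would record the basic closure fact: for every word $w \in \Sigma^*$ the induced map $\delta(\cdot, w) : Q \to Q$ is order-preserving for the total order $\le$ witnessing that $\gA$ is ordered. This is an easy induction on $|w|$ --- the base case $|w| \le 1$ is exactly the hypothesis in \Cref{defi:ordered_automaton}, and for $w = w'\sigma$ one writes $\delta(\cdot, w) = \delta(\cdot, \sigma) \circ \delta(\cdot, w')$, a composition of order-preserving maps.

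Next I would invoke the elementary fact that an order-preserving self-bijection of a finite totally ordered set is the identity. Concretely, if some $w$ induces a permutation on a subset $S = \{s_1 < \dots < s_m\} \subseteq Q$, so that $\delta(\cdot, w)$ restricts to a bijection $g : S \to S$, then $g$ is order-preserving and injective, hence $g(s_1) < \dots < g(s_m)$; the only strictly increasing length-$m$ chain inside the $m$-element set $S$ is $s_1 < \dots < s_m$ itself, forcing $g = \mathrm{id}_S$. Thus no word induces a nontrivial permutation on any subset of $Q$, which is precisely the definition of $\gA$ being counter-free.

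For the second assertion I would fix an ordered automaton $\gA = (\Sigma, Q, \delta)$, an initial state $q_0$ and an accepting set $F \subseteq Q$, and show $\gL(\gA, q_0, F) \in \AC^0$. By the first part $\gA$ is counter-free, so \Cref{thm:counter_free_in_AC0} supplies a family of constant-depth, unbounded fan-in, $O(n^3)$-size circuits that, on a length-$n$ input word $w$, output an encoding of the state $\delta(q_0, w)$. Appending one $O(1)$-depth, $O(1)$-size gadget that tests whether this state lies in $F$ (a disjunction over the constantly many $q \in F$ of the equality test $\delta(q_0,w) = q$) yields an $\AC^0$ circuit family deciding the language; translating the fixed alphabet $\Sigma$ into binary inputs costs only a constant factor, as noted in \Cref{subsec:circuit}, and the empty-word case just checks $q_0 \in F$.

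I do not expect a genuine obstacle --- once \Cref{thm:counter_free_in_AC0} is in hand the argument is short. The only spots needing care are (i) pinning down what ``$w$ induces a permutation on a subset $S$'' means, namely that $S$ is invariant under $\delta(\cdot, w)$ with bijective restriction, so that the monotone-bijection argument applies, and (ii) the degenerate cases $|S| \le 1$ and the empty word, which are immediate. An alternative route to the first assertion would go through \Cref{lem:aperiodic}, showing that the transformation semigroup of an ordered automaton is aperiodic, but the direct argument above seems cleaner.
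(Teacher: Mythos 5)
Your proof is correct, and for the first claim it takes a genuinely different route from the paper's. After establishing (as the paper also does) that every map $\pi_w=\delta(\cdot,w)$ is order-preserving, you argue straight from the definition of counter-freeness: if $w$ maps a subset $S\subseteq Q$ bijectively onto itself, the restriction is an order-preserving injection of a finite chain, hence strictly increasing, hence the identity. The paper instead routes the argument through \Cref{lem:aperiodic}, proving the transformation semigroup is aperiodic: for each state $q$ it splits into the cases $\pi_w(q)=q$, $\pi_w(q)\ge q$, $\pi_w(q)\le q$, observes that the orbit $(\pi_w)^k(q)$ is then monotone and must stabilize in the finite set $Q$, and concludes $(\pi_w)^k=(\pi_w)^{k+1}$ for some $k$. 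Both arguments rest on the same two facts---monotonicity of $\pi_w$ (by composing the single-letter hypothesis) and finiteness of $Q$---but yours bypasses the semigroup/aperiodicity characterization entirely, which is arguably cleaner, while the paper's phrasing fits the Krohn--Rhodes machinery it invokes elsewhere (\Cref{cor:krohm_rhodes_aperiodic}, \Cref{thm:counter_free_in_AC0}). For the $\AC^0$ claim you use the same ingredient as intended, namely the counter-free-to-constant-depth simulation of \Cref{thm:counter_free_in_AC0}, and you spell out the final acceptance test against $F$, the binary encoding of $\Sigma$, and the empty-word case, which the paper leaves implicit by tersely citing \Cref{thm:counter_free_language_three_characterization}. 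No gaps.
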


\begin{proof}[Proof of \Cref{thm:ordered_automata_counter_free}]
	To show an ordered automaton $\gA = (\Sigma,Q,\delta)$ is counter-free, it suffices to its transformation semigroup is group-free, or aperiodic. We first recall the definition of aperiodic semigroups~\Cref{lem:aperiodic}. Let $\pi_w:Q\to Q, \pi_w(q)\triangleq\delta(q,w)$ be the transformation induced by word $w\in\Sigma^*$. Transformation semigroup of $\gA$ is aperiodic iff for any $w\in\Sigma^*$, there exists $k\in\mathbb{N}$, such that $(\pi_w)^k = (\pi_w)^{k+1}$.

	Now  We claim for any $q\in Q$, there is $k\in\mathbb{N}$, such that $(\pi_w)^k(q) = (\pi_w)^{k+1}(q)$. Since $Q$ is finite, this implies that there exists $k\in\mathbb{N}$, such that $(\pi_w)^k = (\pi_w)^{k+1}$ and thus the transformation semigroup of $\gA$ is aperiodic. First, note that $\gA$ is ordered, we know $\pi_\sigma$ is order-preserving for all $\sigma\in \Sigma$. Let $w = \overline{w_1\cdots w_n}$ where $|w|=n$, we have $\pi_w = \pi_{w_1}\circ \cdots \circ \pi_{w_n}$ is also order-preserving and thus for all $q\ge q'\in Q$, $\pi_w(q)\ge \pi_w(q')$. Then we proceed by three cases for each $q\in Q$: 
	\begin{enumerate}
		\item $ \pi_w(q)=q$. In this case, it suffices to take $k=0$;
		\item $\pi_w(q)\ge q$. Since $\pi_w$ is order-preserving, we know for any $k\in \mathbb{N}$, $(\pi_w)^{k+1}(q)\ge (\pi_w)^{k}(q)$. Since $Q$ is finite, there must exist some $k\in \mathbb{N}$ such that $(\pi_w)^{k+1}(q)= (\pi_w)^{k}(q)$.
		\item $\pi_w(q)\le q$. Same as the case of $\pi_w(q)\ge q$.
	\end{enumerate}

	Since $\ge$ is a total order, at least one of the three cases happens. This concludes the proof.
	
	The second claim follows directly from \Cref{thm:counter_free_language_three_characterization}.
\end{proof}

For any $e,s\in\mathbb{N}$, iterated addition on floating point numbers with $e$-bit exponent and $s$-bit significand $\Floating_{e,s}$ can be viewed $\gA_{e,s} = (\Floating_{e,s},\Floating_{e,s}, \delta_{+})$.
\begin{theorem}\label{thm:floating_point_is_ordered_automaton}
Automaton $\gA_{e,s} = (\Floating_{e,s},\Floating_{e,s}, \delta_{+})$ is ordered, where $\delta_+(x,y) \triangleq \rdes{x+y}$ for any $x,y\in\Floating_{e,s}$.
\end{theorem}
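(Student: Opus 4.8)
The plan is to take the witnessing total order on $Q = \Floating_{e,s}$ to be the usual order $\le$ on $\mathbb{R}$ restricted to this finite set, and to reduce the order-preservation requirement to a single fact: the correct-rounding map $\round_{e,s}\colon \mathbb{R} \to \Floating_{e,s}$ is monotone non-decreasing. Granting this, for any $x \ge x'$ in $\Floating_{e,s}$ and any $y \in \Floating_{e,s}$ we have $x + y \ge x' + y$ by monotonicity of real addition, hence $\delta_+(x,y) = \rdes{x+y} \ge \rdes{x'+y} = \delta_+(x',y)$, which is exactly the condition in \Cref{defi:ordered_automaton}. (That $\le$ restricted to a subset of $\mathbb{R}$ is a total order is immediate, since reflexivity, transitivity, antisymmetry and totality are all inherited; and $\delta_+$ does map into $Q$ since $\round_{e,s}$ takes values in $\Floating_{e,s}$.)

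To prove the monotonicity lemma I would enumerate $\Floating_{e,s} = \{f_1 < f_2 < \dots < f_N\}$ and set midpoints $m_i \triangleq (f_i + f_{i+1})/2$ for $1 \le i \le N-1$, with the conventions $m_0 = -\infty$ and $m_N = +\infty$. On each open interval $(m_{i-1}, m_i)$ the element of $\Floating_{e,s}$ closest to any point is uniquely $f_i$ (it beats $f_{i-1}$ and $f_{i+1}$ strictly, and every other representable number is farther than one of those), so $\round_{e,s}$ is constantly $f_i$ there; thus $\round_{e,s}$ is a non-decreasing step function on $\mathbb{R} \setminus \{m_1, \dots, m_{N-1}\}$, and this also subsumes the overflow behaviour, since any $a > B_{e,s}$ lies in the last interval and rounds to $f_N = B_{e,s}$, and symmetrically for very negative $a$. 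At a midpoint $a = m_i$ the set of closest elements is \emph{exactly} the adjacent pair $\{f_i, f_{i+1}\}$ — every other element of $\Floating_{e,s}$ is strictly farther — so whatever value the tie-breaking rule of \Cref{defi:rounding} returns, it lies in $\{f_i, f_{i+1}\}$, giving $f_i \le \round_{e,s}(m_i) \le f_{i+1}$. Combining the constant values on the open intervals with these bounds at the midpoints shows $\round_{e,s}$ is non-decreasing on all of $\mathbb{R}$.

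I expect the only delicate point to be this tie-breaking step: one must verify that ties occur only between two \emph{consecutive} elements of $\Floating_{e,s}$, so that no tie-breaking convention — in particular the \emph{smaller absolute value} rule — can ever invert the order. Once that observation is in place the argument is routine, and it uses nothing about floating-point arithmetic beyond the fact that $\Floating_{e,s}$ is a finite subset of $\mathbb{R}$; indeed the same proof shows that any automaton of the form $(\Floating, \Floating, \delta_+)$ with $\Floating$ a finite subset of $\mathbb{R}$ and $\delta_+(x,y) = \round(x+y, \Floating)$ is ordered.
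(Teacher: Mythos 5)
Your proposal is correct and follows essentially the same route as the paper: take the usual order on $\mathbb{R}$ restricted to $\Floating_{e,s}$, observe that correct rounding is monotone, and combine this with monotonicity of real addition. The paper simply asserts the order-preservation of $\round_{e,s}$ (and only states it for arguments in $\Floating_{e,s}$, though it is needed for arbitrary reals such as $x+y$), whereas you supply the missing justification — including the observation that ties occur only between consecutive representable numbers, so the smaller-absolute-value tie-break cannot invert the order — which is a welcome tightening of the same argument.
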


\begin{proof}[Proof of \Cref{thm:floating_point_is_ordered_automaton}]
	The total order we use for $\Floating_{e,s}$ as the state space of automaton $\gA$ coincides with the usual order $\le$ on $\mathbb{R}$. Recall the rounding operation is defined as $\rdes{x}\in \argmin_{x'\in \Floating_{e,s} \abs{x-x'}}$, which means rounding operation is order preserving, that is, for any $x\ge x'\in\Floating_{e,s}$, $\rdes{x}\ge \rdes{x'}$. Thus for any $x,x',y\in\Floating_{e,s}$ with $x\le x'$, it holds that $\delta_+(x,y) = \rdes{x+y}\ge \rdes{x'+y} = \delta_+(x',y) $. Thus $\gA_{e,s}$ is ordered.
\end{proof}

The following theorem \Cref{thm:iterated_addition_in_AC0} is a direct consequence of \Cref{thm:floating_point_is_ordered_automaton}.

\subsection{Proofs for \Cref{thm:iterated_addition_in_TC0}}\label{appsec:TC0_upper_bound}

	We first claim that the following algorithm \Cref{alg:gridworld} correctly computes $\mathsf{sum}_{0,s(n)}$ over $n$ numbers in $\Floating_{0,s(n)}$.
	
\begin{lemma}\label{lem:TC0_gridworld_algorithm_correct}
	\Cref{alg:gridworld} outputs $\mathsf{sum}_{0,s(n)}(x_1,\ldots,x_n)$ for all $n\in\mathbb{N}^+$ and $x_1,\ldots,x_n\in\Floating_{0,s(n)}$. 
\end{lemma}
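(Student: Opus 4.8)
The plan is to prove correctness in two stages: first to reduce $\mathsf{sum}_{0,s(n)}$ to a one-dimensional clipped running sum, and then to verify that \Cref{alg:gridworld} computes that running sum by tracking the last time the partial sum saturates. For the reduction, note that since $e=0$, every element of $\Floating_{0,s(n)}$ is an integer multiple of the common scale $2^{-s(n)-1}$ lying in $[-B,B]$ with $B\triangleq B_{0,s(n)}$, so the exact sum $x+y$ of two such numbers is again an integer multiple of that scale; rounding it back into $\Floating_{0,s(n)}$ can therefore only saturate, never move it off the grid. Hence $\round_{0,s(n)}(x+y)=\max\!\big(-B,\min(B,x+y)\big)$ for all $x,y\in\Floating_{0,s(n)}$, and unrolling the definition of $\mathsf{sum}_{0,s(n)}$ gives $\mathsf{sum}_{0,s(n)}(x_1,\dots,x_n)=s_n$, where $s_0\triangleq 0$ and $s_k\triangleq\max\!\big(-B,\min(B,\ s_{k-1}+x_k)\big)$ for $k\in[n]$ (the case $n=1$ being trivial). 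This is precisely the run of the ``GridWorld'' automaton with state set $\Floating_{0,s(n)}$, of size $2^{2s(n)+1}-1=\poly(n)$, and transition $q\mapsto\max(-B,\min(B,q+x))$ on reading $x$; so it remains to show \Cref{alg:gridworld} outputs this $s_n$.

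For the second stage I would establish a ``last saturation'' invariant. Let $P_k\triangleq\sum_{i=1}^k x_i$ be the exact prefix sums ($P_0=0$), call step $k$ an \emph{up-clip} if $s_{k-1}+x_k>B$ and a \emph{down-clip} if $s_{k-1}+x_k<-B$ (at most one can hold at a given step), and let $m(k)$ be the largest index $\le k$ at which a clip of either kind occurs, with $m(k)=0$ and $s_0=0$ if there is none. The claim I would prove by induction on $k$ is
\[
s_k=\max\!\big(-B,\ \min(B,\ s_{m(k)}+P_k-P_{m(k)})\big),
\]
the induction step splitting on whether step $k$ is itself a clip (then $m(k)=k$ and the invariant is the definition of $s_k$) or not (then $m(k)=m(k-1)$, $s_k=s_{k-1}+x_k\in[-B,B]$, and one combines the inductive formula for $s_{k-1}$ with $P_k-P_{k-1}=x_k$). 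Specializing to $k=n$: either no clip ever occurs and $s_n=P_n$, or, writing $k^\star\triangleq m(n)$, one has $s_{k^\star}=B$ if step $k^\star$ is an up-clip and $s_{k^\star}=-B$ if it is a down-clip, no clip occurs after $k^\star$, and hence $s_n=s_{k^\star}+P_n-P_{k^\star}$. It then remains to check that the inequalities \Cref{alg:gridworld} evaluates on the $\{P_j\}$ are equivalent to the up-/down-clip conditions: substituting the invariant for $s_{k-1}$ into $s_{k-1}+x_k>B$ (resp. $s_{k-1}+x_k<-B$) collapses it to a comparison among $P_k$, $P_{m(k-1)}$ and the endpoint value $s_{m(k-1)}$, and iterating this elimination expresses the last up-clip index, the last down-clip index, their endpoint values, and $P_n$, $P_{k^\star}$ purely in terms of the $P_j$'s and integer comparisons — exactly the quantities the algorithm forms. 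Assembling these identities identifies the algorithm's output with $s_n$, which by Stage 1 equals $\mathsf{sum}_{0,s(n)}(x_1,\dots,x_n)$.

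The hard part will be the two-sided bookkeeping. With a single boundary this is a routine Skorokhod/Lindley-type induction, but with both $+B$ and $-B$ one must argue that whatever the walk does strictly between the last up-clip and the last down-clip is irrelevant to $s_n$, and that the prefix-sum reformulation of the two clip conditions pins down the true last clip of each type without circularly depending on the unknown intermediate states $s_j$ — the unrolling that eliminates $s_{m(k-1)}$ in terms of earlier $s$'s, all the way back to a clip endpoint or to $s_0=0$, is the delicate step. I also expect to spend some care on the degenerate cases (no clip at all, $k^\star=n$, the $s_1=x_1$ boundary step, overflow landing exactly on $\pm B$) and on confirming that every quantity the algorithm assembles — iterated sums of $n$ numbers of $O(\log n)$ bits, comparisons, and an arg-max over indices — is well defined; those arithmetic facts themselves are standard, and their $\TC^0$ implementability is what is needed only later for \Cref{thm:iterated_addition_in_TC0}, not for this lemma.
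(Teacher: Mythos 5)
Your Stage~1 is fine and matches the paper's implicit setup: with $e=0$ the correct rounding of a two-term sum can only saturate, so the iterated rounded sum is the clipped walk $s_k=\max(-B,\min(B,s_{k-1}+x_k))$ with $B=B_{s(n)}$, and the lemma reduces to showing that \Cref{alg:gridworld} outputs $s_n$. The gap is in Stage~2, which is where all the content of the lemma lives. Your ``last saturation'' invariant is essentially tautological (it only says the walk is unclipped after its last clip), and the step you defer --- converting the clip conditions, which depend on the hidden states $s_j$, into conditions on the exact prefix sums alone --- is precisely what the algorithm's specific quantities are built for, and you never engage with them. In particular, the $2B$ threshold never appears in your plan: the algorithm takes $i^*$ to be the last index after which the suffix oscillation $U_i-L_i$ of the exact prefix sums is at least $2B_{s(n)}$, and the facts one must actually prove are (i) $S_{i^*}\in\{U_{i^*},L_{i^*}\}$, by maximality of $i^*$; (ii) in the case $S_{i^*}=L_{i^*}$, no negative overflow can occur after $i^*$, because all prefix-sum fluctuations beyond $i^*$ are strictly less than $2B_{s(n)}$; (iii) at the last time $k^*$ at which $S_k=U_{i^*}$, the clipped value is pinned to exactly $B_{s(n)}$; and (iv) no overflow of either sign occurs after $k^*$, so the output $O+S_n-S_{k^*}$ equals $s_n$. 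None of these arguments are in your proposal, and your reformulation is subtly off target: the algorithm's $k^*$ is a hitting time of the suffix extremum of the \emph{exact} prefix sums, not ``the last up-/down-clip index'' of the clipped walk (the walk may sit at $\pm B$ at $k^*$ without any clip occurring there), so identifying $k^*$ with your $m(n)$ and then ``iterating the elimination'' of $s_{m(k-1)}$ will not go through as stated. The two-sided bookkeeping you flag as the delicate step is the theorem, not a verification to be assembled at the end.

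A second omission: you ignore the preprocessing $y_{-2}=0$, $y_{-1}=y_0=\sign(x_1)\cdot B_{s(n)}$, $y_1=x_1-\sign(x_1)\cdot B_{s(n)}$. This leaves the iterated rounded sum unchanged but forces $U_{-2}-L_{-2}\ge 2B_{s(n)}$, so $i^*$ is always well defined and your ``no clip at all'' degenerate case is eliminated by construction rather than handled separately; relatedly, the quantities the algorithm compares are the prefix sums $S_i$ of the modified sequence (indices $-2,\dots,n$), not your $P_k$, so any invariant has to be stated and verified for those.
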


\begin{proof}[Proof of \Cref{lem:TC0_gridworld_algorithm_correct}]
Note that $y_{-2}=0$, $[y_{-1}+y_0]_{0,s(n)} = \sign(x_1) \cdot B_{s(n)}$, and $[\sign(x_1) \cdot B_{s(n)} + y_1]_{0,s(n)} = x_1$, thus we conclude $\mathsf{sum}_{0,s(n)}(x_1,\ldots,x_n) = \mathsf{sum}_{0,s(n)}(y_{-2},y_{-1},y_0,y_1,\ldots,y_n)$. 
Without loss of generality, we can assume that $x_1>0$. Therefore $S_{-2} = 0, S_{-1} = B_{s(n)}$, and $S_{0} = 2B_{s(n)}$, which further implies that $L_{-2}\le 0$ and $U_{-2}\ge 2 B_{s(n)}$. This ensures $i^*$ is always well-defined. For convenience we use $H_i$ to denote $\mathsf{sum}_{0,s(n)}(y_{-2},y_{-1},y_0,y_1,\ldots,y_{i-1})$ in the rest of this proof.

Now we claim either $S_{i^*} = L_{i^*}$ or $S_{i^*} = U_{i^*}$. By definition of $i^*$, if neither of these two equalities happen, we have that $i^*\le n-1$, $U_{i^*}= U_{i^*+1}$, and $L_{i^*}= L_{i^*+1}$, which contradicts with the maximality of $i^*$ since $U_{i^*+1} - L_{i^*+1} = U_{i^*} - L_{i^*}\ge 2B_{s(n)}$. Without loss of generality, we assume $S_{i^*} = L_{i^*}$ and the analysis for the other case  is almost the same. Now we claim that for all $i>i^*$, no negative overflow happens at position $i$, that is, $H_i+ y_i\ge -B_{s(n)}$.

We will prove this claim for two cases respectively depending on whether there exists some $i^*<j<i$ such that $\mathsf{sum}_{0,s(n)}(y_{-2},y_{-1},y_0,y_1,\ldots,y_{j}) = B_{s(n)}$.  The first case is such $j$ does not exist. Then neither positive or negative overflow happens through $i^*$ to $i$, and thus 
\begin{align}
H_{i-1} + y_i 
= H_{i^*} + (S_i-S_{i^*})
\ge  H_{i^*} 
\ge -B_{s(n)}.
\end{align}

If such $j$ exists, we let $j^*$ to be the maximum of such $j$. Then neither positive or negative overflow happens through $j^*$ to $i$. Due to the optimality of $i^*$, we know that for all $i,j\ge i^*$, $ \abs{S_{i}-S_{j}}<   2B_{s(n)}<$. Thus
\begin{align}
H_{i-1} + y_i 
= H_{j^*} + (S_i-S_{j^*})
\ge  B_{s(n)} - 2B_{s(n)} 
\ge -B_{s(n)}.
\end{align} 

Now we claim $H_{k^*} = B_{s(n)}$. Because there is no negative overflow between $i^*$ and $k^*$, we have that $H_{k^*}- H_{i^*} \ge S_{k^*}-S_{i^*}\ge 2B_{s(n)}$ and the fist inequality is only strict when positive overflow happens at some $i^*\le j\le k^*$. If there is no such $j$, then $B_{s(n)}\ge H_{k^*}\ge H_{i^*}+2B_{s(n)}\ge B_{s(n)}$ and thus $H_{k^*} = B_{s(n)}$. Otherwise such $j$ exists and $j^*$ be the maximum of such $j$. Then $H_{k^*}\ge  H_{j^*} + (S_{k^*}-S_{j^*}) \ge B_{s(n)} + (S_{k^*}-S_{j^*})\ge B_{s(n)}$, where the last inequality is due to the optimality of $k^*$. Thus in both cases we conclude that $H_{k^*} = B_{s(n)}$. 

Finally we will show there is neither negative or positive overflow from $k^*+1$ to $n$ and thus $H_n = H_{k^*} + S_n-S_{k^*}$, which would justify the correctness of the algorithm. We have already shown there is no negative overflow. Suppose there is a positive overflow at some $j>k^*$ in the sense that $H_{j-1}+y_j\ge B_{s(n)}$ and we let $j^*$ be the first positive overflow after $k^*$. By definition of $j^*$, there is neither positive and negative overflow between $k^*+1$ and $j^*$ and thus $H_{j^*-1}+y_{j^*} = H_{k^*} + (S_{j^*} - S_{k^*})\le H_{k^*}=B_{s(n)}$, which is contradictory to the assumption that there is a positive overflow at $j^*$. This concludes the proof. 
\end{proof}

\begin{algorithm}
\caption{$\AC^0$ algorithm for iterative addition for poly-precision floating point numbers}\label{alg:gridworld}
\begin{algorithmic}[1]

\Require Integer $n\in\mathbb{N}^+$, $s(n)=O(\poly(n))$. Floating numbers $x_1,\ldots, x_n\in \Floating_{0,s(n)}$.
\Ensure $\mathsf{ans} = \mathsf{sum}_{0,s(n)}(x_1,\ldots,x_n)\in\Floating_{0,s(n)}$.

\State   $y_{-2}\gets 0, y_{-1},y_{0} \gets \sign(x_1) \cdot B_{s(n)}$, $y_1 \gets x_1 -\sign(x_1) \cdot B_{s(n)}$, $y_i \gets x_i, \forall i\in\{2,\ldots,n\}$;
\State   $S_i\gets \sum_{j=0}^i y_j,\ \forall i\in\{-2,\ldots,n\}$;
\State   $U_i \gets \max_{i\le j\le n} S_j, L_i \gets \min_{i\le j\le n} S_j,\ \forall i\in\{-2,\ldots, n\}$;
\State $i^*  \gets \max \{-2\le i\le n\mid U_i-L_i\ge 2B_{s(n)}\}$;
\If{$S_{i^*} = U_{i^*}$}
\State   $k^*  \gets \max\{i^*\le k\le n\mid S_{k} = L_{i^*}\}$;
\State   $O  \gets -B_{s(n)}$;
\Else
\State   $k^*  \gets \max\{i^*\le k\le n\mid S_{k} = U_{i^*}\}$;
\State   $O  \gets B_{s(n)}$;
\EndIf
\State $\mathsf{ans}\gets O + S_{n} -  S_{k^*}$.
\end{algorithmic}
\end{algorithm}

\begin{proof}[Proof of \Cref{thm:TC0_upper_bound}]
It suffices to show that \Cref{alg:gridworld} can be implemented by a family of $\TC^0$ circuits since \Cref{lem:TC0_gridworld_algorithm_correct} guarantees the correctness of \Cref{alg:gridworld}. We can treat all the fixed-point floating numbers in the \Cref{alg:gridworld} as integers with a suitable rescaling, which is $2^{s(n)}$. Since both sorting and adding $n$ binary integers with polynomial bits have $\TC^0$ circuits, each line in \Cref{alg:gridworld} can be implemented by an $\TC^0$ circuits (for all indexes $i$ simultaneously if there is any).
\end{proof}
	 
\begin{figure}[t]
\begin{center}
\vspace{-1.5cm}
\includegraphics[width=1.1\textwidth]{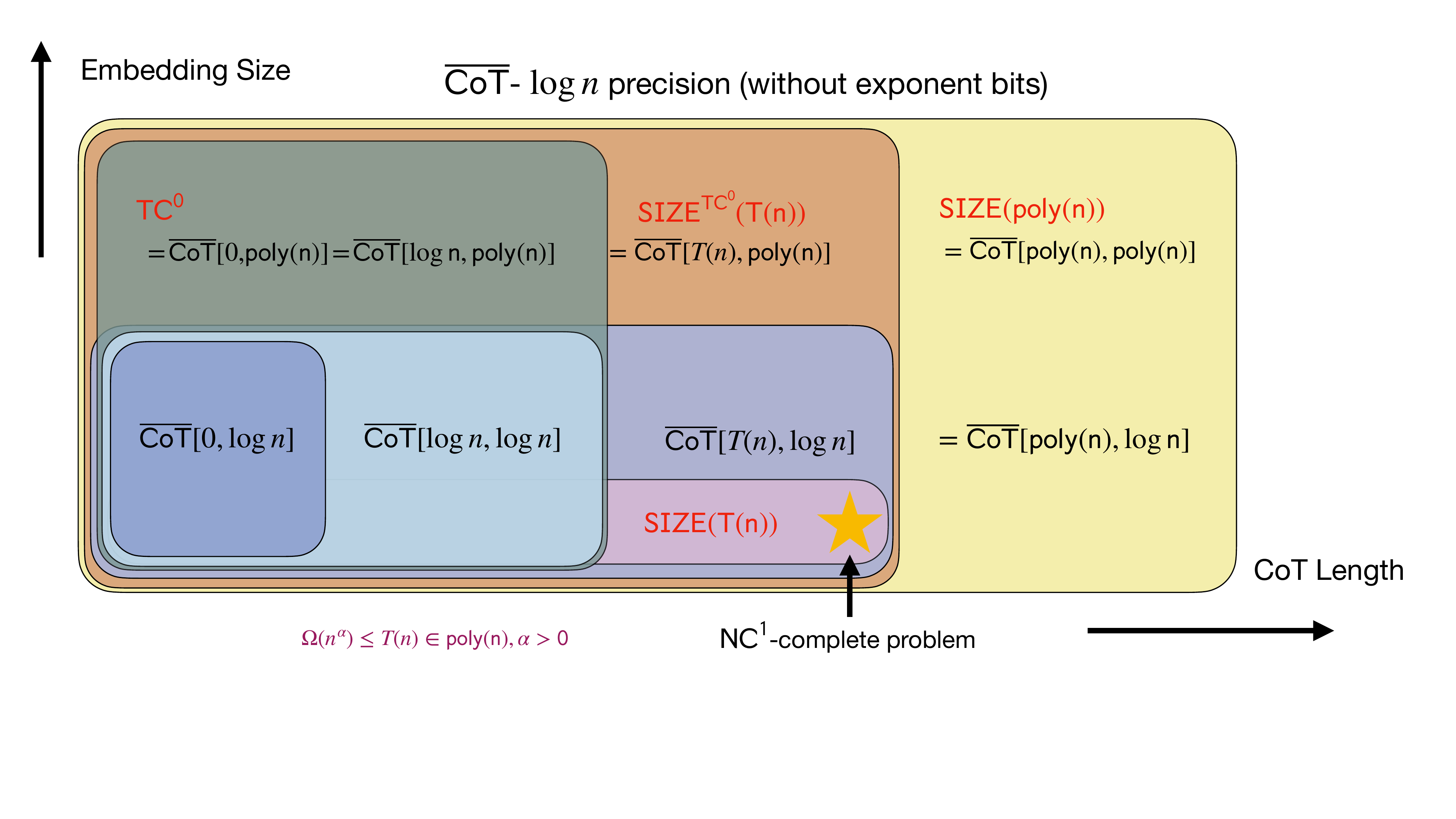}
\vspace{-2cm}
\caption{Relationship diagram between \cot complexity class with different embedding sizes $d(n)$ and CoT lengths $T(n)$. We fix the precision to be $\log(n)$ and the number of exponents bit as $0$. This is the counterpart of its finite precision version~\Cref{fig:cot_diagram_constant} }\label{fig:cot_diagram_log}
\end{center}
\end{figure}

\section{Proofs for Expressiveness Lower Bounds (\Cref{sec:lower_bounds})}\label{appsec:proofs}

\newcommand{\interleave}[2]{{#1}^\frown{#2}}
We first introduce some notations. Since in the construction of the lower bounds we are only using fixed-point numbers, we will use the shorthand $\Floating_{s}\triangleq \Floating_{0,s}=\{c\cdot k\cdot 2^{-s}\mid c\in \{-1,1\}, 0\le k\le 2^{2s}-1, k\in\mathbb{N}\}$ and rounding operation $\rds{\cdot} \triangleq[\cdot]_{0,s}$. We use $1_s$ to denote all-one vectors of length $s$. Similarly we define $\inner{\cdot}{\cdot}_s$, $\times_s$, and $\softmax_s$. We recall that for any $s\in \mathbb{N}^+$ and integer $0\le x\le 2^s-1$, we use $\bin_s(x)\in\{0,1\}^s$ to denote the usual binary encoding of integer $x$ using $s$ binary bits in the sense that $x = \sum_{i=1}^s 2^i (\bin_s(x))_i$ and $\sbin_s(x)\in\{-1,1\}^s$ to denote the signed binary encoding, which is $2\bin_s(x)-(1,\ldots,1)$.

%

Recall $B_s = \max \Floating_s = 2^s-2^{-s}$.
\begin{lemma}\label{lem:exp_rounding}
	For any $s\in\mathbb{N}^+$, it holds that $\rds{\exp(-B_s)} = 0$.
\end{lemma}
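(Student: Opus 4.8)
The plan is to reduce the statement to a single elementary inequality. Recall that in this section $\Floating_s = \Floating_{0,s} = \{c\,k\,2^{-s}\mid c\in\{-1,1\},\ 0\le k\le 2^{2s}-1\}$, so $0\in\Floating_s$, the smallest positive element of $\Floating_s$ is $2^{-s}$, and the largest negative element is $-2^{-s}$. By \Cref{defi:rounding}, $\rds{\exp(-B_s)}$ is the element of $\Floating_s$ closest to $\exp(-B_s)$, with ties broken toward smaller absolute value. Since $\exp(-B_s)>0$, the point $-2^{-s}$ is at distance $2^{-s}+\exp(-B_s)$, which is strictly larger than $\exp(-B_s) = \mathrm{dist}(\exp(-B_s),0)$, so it is never chosen; hence it suffices to compare $0$ with $2^{-s}$. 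I claim $0$ is the value returned as soon as
\[
\exp(-B_s)\ \le\ 2^{-s}-\exp(-B_s),\qquad\text{i.e.,}\qquad \exp(-B_s)\ \le\ 2^{-s-1},
\]
since then $0$ is a (possibly tied) minimizer of the distance, and the tie-break rule, preferring the element of smallest absolute value, also returns $0$.

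The heart of the argument is therefore the inequality $\exp(-B_s)\le 2^{-s-1}$, equivalently $B_s = 2^s-2^{-s}\ge (s+1)\ln 2$. I would prove this by two cases. For $s\ge 2$ a crude bound suffices: $B_s = 2^s-2^{-s} > 2^s-1 \ge s+1 > (s+1)\ln 2$, where $2^s\ge s+2$ for $s\ge 2$ is a routine induction and $\ln 2<1$. The base case $s=1$ is tight and must be checked explicitly: $B_1 = 3/2$ and $2\ln 2 < 3/2$ (equivalently $e^3>16$, which holds since $e^3>2.7^3>16$), so again $B_1 > 2\ln 2$. In both cases $\exp(-B_s)\le e^{-(s+1)\ln 2} = 2^{-s-1}$, as needed.

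Assembling the pieces: for any nonzero $x\in\Floating_s$ we have $|x|\ge 2^{-s}$, hence $|x-\exp(-B_s)|\ge 2^{-s}-\exp(-B_s)\ge 2^{-s-1}\ge \exp(-B_s) = |0-\exp(-B_s)|$, so $0$ minimizes the distance to $\exp(-B_s)$ over $\Floating_s$; since $|0|\le|x|$ for all $x\in\Floating_s$, the tie-breaking rule also selects $0$, and therefore $\rds{\exp(-B_s)} = 0$. The only mildly delicate point — and the step I would flag as the main obstacle — is the small-$s$ boundary: the exponential-versus-linear comparison $B_s\ge (s+1)\ln 2$ carries almost no slack at $s=1$ (it is essentially the estimate $e^3>16$), so that case cannot be folded into a uniform asymptotic bound and has to be verified by hand; everything else is routine.
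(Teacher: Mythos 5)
Your proof is correct and takes essentially the same route as the paper: both reduce the lemma to the single inequality $\exp(-B_s)\le 2^{-s-1}$, equivalently $B_s\ge (s+1)\ln 2$, with the rounding/tie-breaking bookkeeping handled as you describe. The only difference is how that elementary inequality is checked — the paper avoids your case split with the uniform one-liner $B_s/(s+1)\ge B_s\,2^{-s}=1-2^{-2s}\ge 3/4\ge\ln 2$ (using $2^s\ge s+1$), so the separate hand-verification at $s=1$, while fine, is not actually forced.
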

\begin{proof}[Proof of \Cref{lem:exp_rounding}]
	By the definition of rounding operation for $2s$-bit precision (\Cref{defi:rounding}), it suffices to show that $\exp(-B_s)\le 2^{-s-1}$, that is, $B_s \ge \ln 2\cdot (s+1)$. Note that $2^s\ge s+1$ for all $s\ge 1$, we have $B_s/(s+1)\ge B_s2^{-s} =1-2^{-2s}\ge 3/4 \ge \ln 2$.
\end{proof}

Using the same argument above, we also have \Cref{lem:exp_rounding_up}.
\begin{lemma}\label{lem:exp_rounding_up}
	For any $s\in\mathbb{N}^+$, it holds that $\rds{\exp(B_s)} = B_s$.
\end{lemma}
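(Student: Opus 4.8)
\textbf{Proof proposal for \Cref{lem:exp_rounding_up}.} The plan is to observe that $\exp(B_s)$ \emph{overflows}: it is strictly larger than the biggest number representable in $\Floating_s$, which is $B_s$ itself, so correct rounding necessarily returns $B_s$. This is the mirror image of \Cref{lem:exp_rounding}, where $\exp(-B_s)$ was shown to be so small (below $2^{-s-1}$, half the gap around $0$) that it rounds to the extreme value $0$; here the argument is even cheaper because there simply is no representable number above $B_s$.

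Concretely, I would proceed in three short steps. First, note $B_s = 2^s - 2^{-s} \ge 2 - \tfrac12 = \tfrac32 > 0$ for every $s\in\mathbb{N}^+$, and invoke the elementary inequality $e^x \ge 1 + x > x$ for $x \ge 0$ with $x = B_s$, giving $\exp(B_s) > B_s$. (One could equivalently reuse the computation $B_s \ge \ln 2\cdot(s+1) \ge \ln 2$ from the proof of \Cref{lem:exp_rounding}; either way $\exp(B_s)$ exceeds $B_s$.) Second, recall from the definition that $B_s = \max \Floating_s$, so every $y \in \Floating_s$ satisfies $y \le B_s < \exp(B_s)$; hence for each such $y$,
\[
\abs{\exp(B_s) - y} \;=\; \exp(B_s) - y \;\ge\; \exp(B_s) - B_s \;=\; \abs{\exp(B_s) - B_s},
\]
with equality only when $y = B_s$. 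Third, conclude that $B_s$ is the unique element of $\Floating_s$ closest to $\exp(B_s)$, so by \Cref{defi:rounding} (correct rounding, which in the overflow case returns the largest representable number) we get $\rds{\exp(B_s)} = B_s$.

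I do not expect any genuine obstacle. The only points requiring (trivial) care are confirming that $\exp(B_s)$ really lands strictly above the top of the representable range $\Floating_s$ — settled by the crude bound $B_s \ge \tfrac32$ — and noting that rounding a value larger than $\max\Floating_s$ is well-defined and equals $\max\Floating_s = B_s$ rather than being treated as an error.
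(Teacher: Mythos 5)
Your proof is correct and matches the paper's (very terse) argument: the paper simply notes that the overflow case of \Cref{lem:exp_rounding}'s reasoning applies, i.e.\ $\exp(B_s)$ exceeds $B_s=\max\Floating_s$, so correct rounding returns $B_s$. Your write-up just makes this explicit via the closest-point definition in \Cref{defi:rounding}, with no gap.
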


\subsection{Proof of \Cref{thm:cot_dlogn_main}}\label{appsec:proof_cot_dlogn_main}
Given two vectors $x,y$ of the same length $s$, we use $\interleave{x}{y}$ to denote their interleaving, that is, $(\interleave{x}{y})_{2i-1} = x_i, (\interleave{x}{y})_{2i} = y_i$ for all $i\in [e]$. 
\begin{lemma}\label{lem:attention_rounding}
	For any $s\in\mathbb{N}^+$, let $q_i = \interleave{\sbin_s(i)}{1_s}$ and $k_i = B_s\cdot (\interleave{\sbin_s(i)}{(-1_s)})$ for all $i\in [2^s-1]$, it holds that $\rds{\exp(\inner{q_i}{k_j}_s)}=\one[i=j]$ for all $i,j\in [2^s-1]$.
\end{lemma}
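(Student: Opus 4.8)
The plan is to evaluate the finite-precision inner product $\inner{q_i}{k_j}_s = \sums(q_i\odot k_j)$ in closed form by tracking the running partial sums of the iterated rounded addition, show that it equals $0$ when $i=j$ and $-B_s$ when $i\ne j$, and then apply the rounded exponential together with \Cref{lem:exp_rounding}. First I would unpack the interleaving: setting $b_\ell\triangleq(\sbin_s(i))_\ell\cdot(\sbin_s(j))_\ell\in\{-1,+1\}$, so that $b_\ell=+1$ exactly when the $\ell$-th bits of $i$ and $j$ agree, the identities $q_i=\interleave{\sbin_s(i)}{1_s}$ and $k_j=B_s\cdot\interleave{\sbin_s(j)}{(-1_s)}$ give that $q_i\odot k_j$ is the length-$2s$ vector whose $(2\ell-1)$-st coordinate is $B_s b_\ell$ and whose $(2\ell)$-th coordinate is $-B_s$, for $\ell\in[s]$. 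Hence $\sums(q_i\odot k_j)$ adds the sequence $B_s b_1,\,-B_s,\,B_s b_2,\,-B_s,\,\dots,\,B_s b_s,\,-B_s$ from left to right, rounding to $\Floating_s$ after each binary addition.

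The core step is an induction on $\ell$ showing that the partial sum $P_{2\ell}$ obtained after processing the first $\ell$ blocks $(B_s b_m,-B_s)$ equals $0$ if $b_1=\cdots=b_\ell=+1$ and equals $-B_s$ otherwise. The only rounding facts needed are $\rds{0}=0$, $\rds{\pm B_s}=\pm B_s$, $\rds{B_s-B_s}=0$, and $\rds{-2B_s}=-B_s$ — overflow saturates to the smallest representable number, by the IEEE-style convention adopted in the paper. The inductive step amounts to checking the two-stage update $P_{2\ell+1}=\rds{P_{2\ell}+B_s b_{\ell+1}}$ followed by $P_{2(\ell+1)}=\rds{P_{2\ell+1}-B_s}$, in the four cases determined by the sign of $b_{\ell+1}$ and by whether $P_{2\ell}=0$ or $P_{2\ell}=-B_s$; in each case the invariant persists, and in particular once the accumulator has dropped to $-B_s$ it may briefly return to $0$ after a matching term $B_s(+1)$, but the trailing $-B_s$ drives it straight back to $-B_s$, so it never recovers to $0$ after a disagreement.

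Since all $b_\ell=+1$ iff $i$ and $j$ have identical $s$-bit encodings, i.e. iff $i=j$, this yields $\inner{q_i}{k_j}_s=P_{2s}=0$ for $i=j$ and $=-B_s$ for $i\ne j$. The conclusion then follows: $\exp(0)=1\in\Floating_s$ (as $1=2^s\cdot 2^{-s}$ and $s\ge1$), so $\rds{\exp(\inner{q_i}{k_i}_s)}=1$, while $\rds{\exp(\inner{q_i}{k_j}_s)}=\rds{\exp(-B_s)}=0$ for $i\ne j$ by \Cref{lem:exp_rounding}; the two cases combine to $\one[i=j]$. The only delicate point — more bookkeeping than a genuine obstacle — is that rounded addition is not associative, so one must respect the precise left-to-right order imposed by the interleaving; the inserted $\pm 1_s$ block is exactly what forces a subtraction of $B_s$ after each bit-comparison term, which is the mechanism by which a single disagreeing bit collapses the whole accumulated sum irreversibly to the saturation value $-B_s$.
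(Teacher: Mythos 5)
Your proof is correct and follows essentially the same route as the paper's: reduce to showing $\inner{q_i}{k_j}_s$ equals $0$ when $i=j$ and $-B_s$ when $i\neq j$ by tracking the left-to-right rounded partial sums over the interleaved blocks, then invoke \Cref{lem:exp_rounding}. Your explicit four-case invariant (including the brief return to $0$ followed by saturation back to $-B_s$) is just a more carefully spelled-out version of the induction the paper states tersely.
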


\begin{proof}[Proof of \Cref{lem:attention_rounding}]
	It suffices to prove that $\inner{q_i}{k_j}_s = -B_s$ if $i\neq j$ and $\inner{q_i}{k_j}_s = 0$ if $i= j$. The rest is done by \Cref{lem:exp_rounding}.
	
	Given any $i,j\in [2^s-1]$, by definition of finite-precision inner product, we know that for any $l\in [2s-1]$,  it holds that $a_{l} = \rds{	a_{l-1} + \rds{(q_i)_{l}(k_j)_{l}}}$ where $a_0 \triangleq 0$ and $a_{l}\triangleq \inner{(q_i)_{:l}}{(k_j)_{:l}}_s$ for $l\in [2s]$. 
	
	For all $l\in [e]$, we have that $\rds{(q_i)_{2l}(k_j)_{2l}} = -B_s$ and $\rds{(q_i)_{2l-1}(k_j)_{2l-1}} = B_s\cdot \ind[(\sbin_s(i))_l = (\sbin_s(j))_l]$. If $i=j$, it is straightforward that $a_{2l-1} = -B_s$ and $a_{2l} = 0$ for all $l\in[e]$. If $i\neq j$, then there exists $l\in[s-1]$ such that $(\sbin(i)))_l\neq (\sbin(j)))_l$. Thus $\rds{(q_i)_{2l-1}(k_j)_{2l-1}} = \rds{(q_i)_{2l}(k_j)_{2l}} = -B_s$, which implies $a_{2l} = -B_s$ regardless of the value of $a_{2l-2}$. Again use induction we can conclude that $a_{2l'} = -B_s$ for all $l\le l' \le e $.
\end{proof}

\begin{proof}[Proof of \Cref{thm:cot_dlogn_main}]
	For any $\gL\in \SIZE[T(n)]$, by definition there is a family of boolean circuit $\{C_n\}$ which compute $\gL$ for all inputs of length $n$ using $O(T(n))$ many   $\NOT$ and $\AND$ gates. Without loss of generality, let us assume the number of non-input gates in $C_n$ be $T(n)$. We will show that for each $n$, there is a 2-layer decoder-only transformer $\transformer_{\theta_n}$ that computes $C_n(x)$ using $T(n)$ steps of CoT for all $x\in\{0,1\}^n$. More precisely, we will construct a transformer that simulates one boolean gate in $C_n$, following the topological order of the circuit, in each step of its chain of thought. 
	
	We first index the gates (including input gates) from $1$ to $n+T(n)$ according to the topological order. For each gate $i \in [n+1,n+T(n)]$, we use $a(i)$ and $b(i)$ to denote its two input gates. Since $\NOT$ only has one input, we set $a(i)$ as its input and $b(i)$ as $0$.  We let $c(i)=0$ if $i$th gate is $\NOT$ and  $c(i)=1$ if $i$th gate is $\AND$. For any input gate $1\le i\le n$,  $a(i),b(i)$, and the gate type are not meaningful and their choice will not affect the output and thus can be set arbitrarily. For convenience, we will set $a(i)=b(i)=c(i)=0$.  We use $g_i(x)$ to denote the output of non-input gate $i$ ($n+1\le i\le n+T(n)$) on the circuit input $x\in\{0,1\}^n$, which is equal to $(1-c(i))(1-x_{a(i)}) + c(i)(\relu(x_{a(i)} + x_{b(i)}-1))$.
	
	Now we describe the construction of the vocabulary $\gV$, the token embedding $\theta_{\tokenembedding}$,  and position encoding $\theta_\posencoding$. We set precision $s$ as any positive integer larger than $1$, $\gV=\{0,1\}$, $k\triangleq k(n) = \lceil \log_2 (T(n)+n) \rceil = O(\log n)$ since $T(n)$ is a polynomial, $d'(n) = 3k+6$, $\theta_{\tokenembedding}(0) = 0\cdot e_1$,  $\theta_{\tokenembedding}(1) =  1\cdot e_1$, and 
	\begin{align}
	(\theta_{\posencoding})_{i-1}^\top = [0,0,0,0, c(i), \sbin_k(i-1) , \sbin_k(a(i)), \sbin_k(b(i)),1], \quad  \forall 2\le i \le n+T(n).	
	\end{align}
	
	We use $h^0_i$, $h^{0.5}_i$, $h^1_i$, $h^{1.5}_i$, and $h^2_i$ to denote the intermediate embeddings at position $i$ and different depths. Here, depth $0.5$ and $1.5$ refer to the output of the Attention layer inside each transformer layer. 
	\begin{enumerate}[leftmargin=*] 
		\item 	For the first attention layer, denoting embedding at the $i$th position $h_i^0$ by $h$, we set the query as $q_i \triangleq \left(h_{k+6:2k+5}\right)^\frown \left(h_{3k+6}\cdot 1_s\right)$, the key as $k_i\triangleq \left(B_sh_{6:k+5}\right)^\frown \left(-h_{3k+6}\cdot 1_s\right)$, and the value as $v_i\triangleq h_1 \cdot e_2$. \footnote{Note here the dimension of $k_i$ and $q_i$ are the same but less than $d'$, which does not strictly satisfy our definition of transformer in \Cref{alg:defi_attn}. This is for notational convenience and is without loss of generality because we can pad extra zeros.}
		\item For the first fully-connected layer, we skip it by setting its weights to be $0$.
		\item For the second attention layer, denoting embedding at the $i$th position $h_i^1$ by $h$, we set the query as $q_i \triangleq \left(h_{2k+6:3k+5}\right)^\frown \left(h_{3k+6}\cdot 1_s\right)$, the key as $k_i\triangleq \left(B_sh_{6:k+5}\right)^\frown \left(-h_{3k+6}\cdot 1_s\right)$, and the value as $v_i\triangleq h_1 \cdot e_3$. 
		\item For the second fully-connected layer, we define $F(a,b,c)\triangleq \relu(1-a-c) + \relu(a+b+c-2)$. Denote  the embedding at position $i$, $h_{1.5}^i$ by $h$. The output of the second fully-connected layer is defined as $F(h_2, h_3, h_4)\cdot e_4$. Note this expression is valid because it can be expressed by two-layer ReLU nets with constant bits of precision and a constant number of neurons.
		\item The final output at position $i$ is $(h_i^2)_4$. 
	\end{enumerate}
	
	Below we first describe the value of the internal variables of the transformer and then show there exist parameters making such computation realizable. Let $(x_1,\ldots, x_n)$ be the input tokens, we define $x_{n+i}\triangleq \transformer^i_{\theta_n}(x_1,\ldots,x_n), \forall 1\le i\le T(n)$. We claim there exists transformers parameter $\theta_n$ such that $\transformer^{T(n)}_{\theta_n}(x_1,\ldots,x_n) = g_{T(n)}(x)$ (\emph{i.e.} $C_n(x)$). More specifically, we claim that our constructions will ensure the following inductively for all $n+1\le i\le n+T(n)$,
	\begin{enumerate}
		\item $h^0_{i-1}= [x_{i-1},0,0,0, c(i), (\sbin_k(i-1))^\top , (\sbin_k(a(i)))^\top, (\sbin_k(b(i)))^\top,1]^\top$;
		\item $h^{0.5}_{i-1}\!= [x_{i-1},x_{a(i)},0,0, c(i),(\sbin_k(i-1))^\top , (\sbin_k(a(i)))^\top, (\sbin_k(b(i)))^\top,1]^\top$;
		\item $h^{1}_{i-1}= [x_{i-1},x_{a(i)},0,0, c(i),(\sbin_k(i-1))^\top , (\sbin_k(a(i)))^\top, (\sbin_k(b(i)))^\top,1]^\top$;
		\item $h^{1.5}_{i-1}\!\!=\!\! [x_{i-1},x_{a(i)},x_{b(i)},0, c(i),(\sbin_k(i-1))^\top\!, (\sbin_k(a(i)))^\top\!, (\sbin_k(b(i)))^\top,1]^\top$;
		\item $h^{2}_{i-1}\!\!= \!\![x_{i-1},x_{a(i)},x_{b(i)}, g_i(x),c(i),(\sbin_k(i-1))^\top , (\sbin_k(a(i)))^\top, (\sbin_k(b(i)))^\top,1]^\top$;
		\item $x_{i} \triangleq \transformer_{\theta_n}(x_1,\ldots,x_n,\ldots,x_{i-1}) =  g_{i}(x)$.
	\end{enumerate} 
	
	Now we explain why the above conditions hold for any position $i$ using induction, \emph{i.e.}, assuming it is true for all $n+1\le i'\le i$. We first notice that by our construction, for all $1\le i' \le n-1$, it holds that $(h_{i'}^k)_1 = x_{i'}$ and $(h_{i'}^k)_{6:k+5}$ for all $k\in\{0,0.5,1,1.5,2\}$. Note these are the only information that will be used in the later attention layers.
	\begin{enumerate}
		\item This is simply by the construction of $\theta_{\tokenembedding}$ and $\theta_{\posencoding}$.
		\item In the first attention layer, at the $j$th position, we have $q_j \triangleq \sbin_k(a(j))^\frown 1_s$ as the query, $k_j \triangleq B_s \cdot \sbin_k(j)^\frown (-1_s)$ as the key and $v_j\triangleq x_j$ as the value for all $j\in[n+T(n)]$. Note here we reduce the sizes of hidden embeddings for simplicity of demonstration. This is valid because we can fill the extra coordinates by 0. This is valid because we can always set the extra coordinates to be $0$. By \Cref{lem:attention_rounding}, we know that $\rds{\exp(\inner{q_i}{k_j}_s)}=\one[a(i)=j]$ for all $j\in[n + T(n)]$. Recall that the attention scores are defined as $s_i \triangleq \softmax(\inner{q_i}{k_1}_s,\ldots,\inner{q_i}{k_i}_s) \| (0,\ldots, 0) $, we know that $s_i = e_{a(i)}$.

		\item We set the parameters in the first fully-connected feedforward layer to be all $0$ and let the skip connection pass the intermediate values.
		\item The second attention layer attains $x_{b(i)}$ and places it in the third coordinate in the same way as step 2. 
		\item In the fully-connected feedforward layer we compute $F(x_{a(i)}, x_{b(i)}, c(i)) = \relu(1 - x_{a(i)}-c(i)) + \relu(x_{a(i)} + x_{b(i)} + c(i)-2)$ for all $x_{a(i)},x_{b(i)},c(i)\in\{0,1\}$. We can verify that $F(x_{a(i)}, x_{b(i)}, c(i))= (1-c(i))(1-x_{a(i)}) + c(i)(\relu(x_{a(i)} + x_{b(i)}-1)) = g_i(x)$,  which is the desirrd output of the gate $i$. This is because when $c(i)=0$, the output is $1-a(i)=\NOT a(i)$ and when $c(i)=0$, the output is $\relu(a(i)+b(i)-1) = a(i) \AND b(i)$.  
		\item The output layer uses the fourth coordinate of $h^2_i$, which is $g_i(x)$ according to induction, as the output. 
	\end{enumerate}
This completes the proof of \Cref{thm:cot_dlogn_main}.
\end{proof}

\subsection{Proof of \Cref{thm:cot_dpolyn_slogn_main,thm:cot_dpolyn_s1_main}}\label{appsec:proof_cot_dpolyn_slogn_main}

In this subsection, we prove \Cref{thm:cot_dpolyn_slogn_main,thm:cot_dpolyn_s1_main}. We first prove a useful lemma that gives an equivalent characterization of $\SIZE^{\AC^0}$ and $\SIZE^{\TC^0}$.

\begin{lemma}\label{lem:size_Tn_with_TC0_oracle}
	For any $T(n)\in \poly(n)$ satisfying $T(n)\ge 1,\ \forall n\in\mathbb{N}^+$, a  decision problem $\gL:\cup_{k\in \mathbb{N}^+}\{0,1\}^k\to \{0,1\}$ belongs to $\SIZE^{\AC^0}[T(n)]$ (resp. $\SIZE^{\TC^0}[T(n)]$)  if and only if  there exist a polynomial $S(n)$, a function $T'(n)=O(T(n))$, and a depth $L\in\mathbb{N}^+$  such that for every $n\in\mathbb{N}^+$ there exist a sequence of sizes-$S(n)$, depth-$L$ circuits, $\{C_n^{i}\}_{i=1}^{T'(n)}$, with unlimited-fanin $\AND$, $\OR$ and $\NOT$ gates (with additionally $\MAJORITY$ gates for $\SIZE^{\TC^0}$) and that for all $ x\in\{0,1\}^n$,
	\begin{align}
		\gL(x) = x_{n+T'(n)}, \text{\quad where \quad }  x_{n+i}\triangleq C_n^i(x_1,\ldots, x_{n+i-1}), \quad \forall i \in [T'(n)].
	\end{align}
\end{lemma}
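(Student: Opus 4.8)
The plan is to prove the two inclusions separately, the guiding principle being the correspondence ``one oracle gate $\leftrightarrow$ one step of the sequential computation.'' In the forward direction a circuit witnessing membership in $\SIZE^{\AC^0}[T(n)]$ (resp.\ $\SIZE^{\TC^0}[T(n)]$) is unrolled gate by gate in topological order, each gate becoming one of the circuits $C_n^i$; in the reverse direction the family $\{C_n^i\}$ is bundled into a single non-uniform $\AC^0$ (resp.\ $\TC^0$) problem that plays the role of the oracle problem $\gL'$ in \Cref{defi:circuits_with_oracle}.

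\emph{The forward direction.} Suppose $\gL\in\SIZE^{\AC^0}[T(n)]$, witnessed by some $\gL'\in\AC^0$, a polynomial $p$, and circuits $\{C_n\}$ with $O(T(n))$ gates of type $\AND,\OR,\NOT,\gL'_{p(n)}$; reorder a topological order of the non-input gates of $C_n$ as $g_1,\dots,g_{m(n)}$ so that the output gate is $g_{m(n)}$, where $m(n)=O(T(n))$. Fix a depth-$d_0$, $\poly(\cdot)$-size $\AC^0$ family $\{D_m\}$ computing $\gL'$, and set $L=\max(2,d_0)$ and $T'(n)=m(n)$. For $i\in[T'(n)]$, let $C_n^i$ be the circuit on $n+i-1$ inputs (the original inputs of $C_n$, followed by the values of $g_1,\dots,g_{i-1}$) which, if $g_i$ is a simple gate, applies that gate to the one or two positions feeding $g_i$ in $C_n$ (padded by an identity layer so the depth is exactly $L$) and, if $g_i$ is a $\gL'_{p(n)}$ gate with incoming wiring $\sigma\colon[p(n)]\to[n+i-1]$, equals $D_{p(n)}$ with its $j$-th input wired to the $\sigma(j)$-th input. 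Then every $C_n^i$ has depth $\le L$ and size $S(n)=\poly(n)$ (using $p(n)=\poly(n)$), $x_{n+i}$ equals the value of $g_i$ on $x$, and $x_{n+T'(n)}=C_n(x)=\gL(x)$. The $\TC^0$ case is identical, allowing $\MAJORITY$ gates in the $D_m$ and hence in the $C_n^i$.

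\emph{The reverse direction.} Given $S(n)$, $T'(n)=O(T(n))$, $L$, and circuits $\{C_n^i\}$ as in the statement, fix an encoding that, on a length-$M$ string, stores binary representations of $n$ and $i$ in fixed fields of width $O(\log M)$, followed by a string $z$ of length $n+i-1$ and then padding; its length for step $i$ is $O(n+T'(n))=\poly(n)$, so all step encodings for a given $n$ may be padded to a common length $p(n)=\poly(n)$. Let $\gL^*$ be the problem that decodes $n$ and $i$ and outputs $C_n^i(z)$ (outputting $0$ on malformed inputs). For each input length $M$ there are only $\poly(M)$ pairs $(n,i)$ with $p(n)=M$ (since $p(n)\ge n$ forces $n\le M$, and then $i\le T'(M)=\poly(M)$), so a constant-depth $\poly(M)$-size circuit can decode $(n,i)$ and select the appropriate $C_n^i$ via an $\OR$ over the candidates of ``equality test on the $(n,i)$-field $\wedge$ output of the hard-wired $C_n^i$''; as each $C_n^i$ has depth $\le L$ and poly size, $\gL^*\in\AC^0$ (and $\in\TC^0$ in the $\TC^0$ case). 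Finally $\gL\in\SIZE^{\gL^*}[T(n)]$: use oracle gates $G_1,\dots,G_{T'(n)}$, where $G_i$ is a $\gL^*_{p(n)}$ gate fed by hardwired constants encoding $n,i$, together with wires carrying $x_1,\dots,x_{n+i-1}$ (from the inputs and $G_1,\dots,G_{i-1}$) and padding $0$'s; only $O(1)$ ordinary gates are needed (to supply constants), so the total gate count is $T'(n)+O(1)=O(T(n))$ and the output of $G_{T'(n)}$ is $x_{n+T'(n)}=\gL(x)$.

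\emph{Main obstacle.} The mathematical content is light; the work is in making the construction fit the exact definitions: (a) forcing all oracle gates to share a single fan-in $p(n)$, handled by padding every step encoding to a common polynomial length; (b) checking that the bundled problem $\gL^*$ really lies in $\AC^0$ (resp.\ $\TC^0$), where the non-uniformity of these classes is precisely what makes it legitimate to precompute and hard-wire the finitely many relevant $C_n^i$ for each input length; and (c) ensuring the single constant $L$ (and the constant depth of $\gL^*$) does not secretly grow with $n$, which hinges on the oracle $\gL'$ having an $\AC^0$/$\TC^0$ circuit family whose depth is independent of input length. Keeping the topological order honest (output gate last) in the forward direction is the only other point that needs a remark.
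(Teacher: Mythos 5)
Your proposal is correct and follows essentially the same route as the paper: the forward direction unrolls the oracle circuit gate-by-gate in topological order, and the reverse direction merges the step circuits $\{C_n^i\}$ into a single constant-depth oracle via an $\OR$ over selector-gated copies ($y_j \wedge C_n^j$ in the paper, your equality-test-on-$(n,i)$ variant). The only difference is presentational: you are somewhat more explicit than the paper about padding to a common oracle fan-in $p(n)$ and about bundling the per-$(n,i)$ circuits into one well-defined language $\gL^*$, which is a harmless refinement of the same argument.
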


\begin{proof}[Proof of \Cref{lem:size_Tn_with_TC0_oracle}]
We will prove for $\SIZE^{\TC^0}$ only and the proof for $\SIZE^{\AC^0}$ is almost the same. 
	
The ``$\implies$'' direction is straightforward. By definition of $\SIZE^{\TC^0}[T(n)]$~(\Cref{defi:circuits_with_oracle}), for any $\gL\in \SIZE^{\TC^0}[T(n)]$, there is a function $p(n)\in\poly(n)$ and a family of $\TC^0$ circuits $\{C'_i\}_{i=1}^\infty$ such that for every $n\in \mathbb{N}$ and $x\in\{0,1\}^n$, $\gL_n(x_1,\ldots,x_n)$ can be computed by a size-$O(T(n))$ threshold circuits with oracle gate $C_{p(n)}$. Now we sort all the nodes in the circuits with oracle gates along the topological order as $x_1,\ldots,x_{n+T'(n)}$ where $x_1,\ldots,x_n$ are the inputs and $T'(n)=O(T(n))$ is the number of the gates, then clearly $x_{n+i}$ is a function of $x_1,\ldots,x_{n+i-1}$ for all $i\in[T'(n)]$ and this function can be implemented by a different threshold circuit of constant depth and $\poly(n)$ size for each $i$. This completes the proof of ``$\implies$'' direction.
	
Now we prove the other direction ``$\Longleftarrow$''. We first show that given $T'(n)$ sizes-$S(n)$, depth-$L$ circuits, $\{C_n^{i}\}_{i=1}^{T'(n)}$, there is a depth-$(L+1)$, size $O(T'(n)S(n))$ circuit $C'_n$, such that 
\begin{align}
C'_n(x_1,\ldots, x_{n+T'(n)-1},e_{j}) = C_n^j(x_1,\ldots,x_{n+j-1}),\quad \forall j\in [T'(n)], x\in\{0,1\}^{n+T'(n)-1},
\end{align}
where $\one_j\in\{0,1\}^{T'(n)}$ is the one-hot vector with its $j$th coordinate being $1$. Indeed, it suffices to set 
\begin{align}
	C'_n(_1,\ldots, x_{n+T'(n)-1}, y_1,\ldots, y_{T'(n)})  = \vee_{j=1}^{T'(n)} \left( y_j \wedge C_n^j(x_1,\ldots,x_{n+j-1}) \right).
\end{align}

Once we have such oracle gate $C'_n$, given input $x_1,\ldots,x_n$, we can recursively define 
\begin{align}
x_{n+i} \triangleq C'_n(x_1,\ldots, x_{n+i-1}, 0_{T'(n)-n}, e_{j}).	
\end{align}
Thus we can compute $\gL(x)=x_{n+T'(n)}$ using $T'(n)$ oracle gate $C'_n$. We can get constant gate $0$ and $1$ by using $x_1 \wedge \neg x_1$ and $x_1 \vee \neg x_1$. respectively. This completes the proof. 
\end{proof}

Now we are ready to prove \Cref{thm:cot_dpolyn_slogn_main,thm:cot_dpolyn_s1_main}. We will prove \Cref{thm:cot_dpolyn_slogn_main} first and the proof of \Cref{thm:cot_dpolyn_s1_main} is very similar to \Cref{thm:cot_dpolyn_slogn_main}.

\begin{proof}[Proof of \Cref{thm:cot_dpolyn_slogn_main}]

We first show that $\SIZE^{\TC^0}[T(n)+1] \supseteq \Cot[T(n)][ \poly(n)][\log n]$. For the case that the vocabulary of transformer $\gV=\{0,1\}$, by \Cref{thm:TC0_upper_bound}, we know for any $\theta_n$, $\transformer_{\theta_n}(x_1,\ldots,x_i)$ can be expressed by a $\TC^0$ circuit whose depth is uniformly upper bounded by some constant for all $n\le i\le n+O(T(n))$. This completes the proof when $\gV=\{0,1\}$. When $\gV\neq \{0,1\}$, we can use the binary encoding of elements in $\gV$ as the inputs of those $\TC^0$ gates constructed for the later layers of the transformer. 

Now we turn to the proof for the other direction: $\SIZE^{\TC^0}[T(n)+1] \subseteq \Cot[T(n)][ \poly(n)][\log n]$. In high-level speaking, the proof contains two steps: 
\begin{enumerate}
	\item  We show that $\TC^0 \subseteq \T[\poly(n)][\log n]\subseteq \Cot[1][ \poly(n)][\log n]$. The first step has two key constructions: (a). using attention to copy all the weights to the same position; (b). we can use polysize two-layer FC net with ReLU activation to simulate $\MAJORITY,\AND,\OR$ gate with unbounded fan-in~(\Cref{lem:FB_simulating_boolean_gates});
	\item We can do the first step for all positions $i=n+1,\ldots,n+O(T(n)+1)$ simultaneously.
\end{enumerate}

By the equivalence construction shown in the \Cref{lem:size_Tn_with_TC0_oracle}, we know that for any problem $\gL\in \SIZE^{\TC^0}[T(n)+1]$, there exist constant $L$, polynomial $S(n)$, and $T'(n)=O(T(n)+1)$, and a sequence of threshold circuits, $\{C_n\}_{n\in\mathbb{N}}$, each of size $S(n)$ (number of non-input gates) and depth of $L$, and that for all $ x\in\{0,1\}^n$,
	\begin{align}
		L(x) = x_{n+T'(n)}, \text{\quad where \quad }  x_{n+i}\triangleq C_n(x_1,\ldots, x_{n+i-1},\underbrace{0,\ldots,0}_{T'(n)-i},e_i), \quad \forall i \in [T'(n)].
	\end{align}
	
	Now we present the construction of the constant-depth, constant-precision decoder-only transformer, $\transformer_{\theta_n}$ which computes problem $\gL$ when input length is $n$. Without loss of generality, we only consider the case where $T'(n) = T(n)+1$. We set vocabulary $\gV = \{0,1\}$, embedding width $d(n) = 1 + 3(T(n)+n)+S(n) = O(\poly(n))$, depth equal to $L+1$, CoT length $T(n)$ and precision $s(n)= \lceil \log_2 S(n)\rceil$ so the precision is high enough for simulating all the $\poly(n)$ size $\MAJORITY$ gates used in $C_n$ (\Cref{lem:FB_simulating_boolean_gates}). We set $(\theta_{\tokenembedding})_0 = 0\cdot e_1$,  $(\theta_{\tokenembedding})_1 =  1\cdot e_1$, and $(\theta_{\posencoding})_i = e_{i+1}$ for all $i\in[n+T(n)]$, where we use $e_i\in \{0,1\}^{d(n)}$ to denote the one-hot vector whose $i$th coordinate is $1$ for $i\in[d(n)]$ and $\overline e_i\in\{0,1\}^{n+T(n)}$ to denote one-hot vector whose $i$th coordinate is $1$ for $i\in[n+T(n)]$. 
	
Below we first describe the value the internal variables of the transformer and then show there exist parameters making such computation realizable. To make our claims more interpretable, we only write the non-zero part of the embedding and omit the remaining $0$'s.  the Let $(x_1,\ldots, x_n)$ be the input tokens and $\Delta\triangleq 4n+4T(n)+1$, our constructions will ensure that 
	\begin{enumerate}
		\item $x_{n+i}= \transformer^i_{\theta_n}(x_1,\ldots,x_n)$,  $\forall i\in[T(n)]$. 
		\item $h^0_{i}= x_i e_1 + e_{i+1} = (x_i, \overline e_i) \  \forall i \in [ n+T(n)]$;
		\item $h^{0.5}_{i}\!= x_i e_1 + e_{i+1} = (x_i, \overline e_i), \  \forall i \in [ n+T(n)]$;
		\item $h^{1}_{i} = x_i e_1 + e_{i+1} + x_ie_{n+T(n)+i+1} = (x_i, \overline e_i, x_i \overline e_i), \  \forall i \in [ n+T(n)]$;
		\item $h^{1.5}_{i} =  (x_i, \overline e_i, x_i \overline e_i, x_1,x_2,\ldots,x_i), \forall i \in [ n+ T(n)]$
		\item $h^{2}_{i} =  (x_i, \overline e_i, x_i \overline e_i, x_1,1,x_2,1,\ldots,x_i,1), \forall i \in [ n+ T(n)]$
		\item $(h^{1+l}_{n+i-1})_{\Delta+1: \Delta+ 2S(n) }\in\{0,1\}^{S(n)}$ stores the intermediate result of circuit $C_n^{i}(x_1,\ldots,x_{n+i-1}, 0,\ldots, 0,,e_i)$ at layer $l$, $\forall i\in[T(n)]$ and $l\in [L]$;
		\item $(\theta_{\transoutput} h^{L+2}_{n+i-1})_0 = 1-2C_n^{i}(x_1,\ldots,x_{n+i-1}, 0,\ldots, 0,e_i), (\theta_{\transoutput} h^{L+2}_{n+i-1})_1 =0$, for all $i\in[T(n)]$.
	\end{enumerate}

	Now we explain the purpose of each layer and how to set the parameters such that the requirements above are met.
	\begin{enumerate}
		\item $x_{n+i}= \transformer^i_{\theta_n}(x_1,\ldots,x_n)$,  $\forall i\in[T(n)]$ is the goal of the construction;
		\item This is by our construction of $\theta_\posencoding$ and $\theta_\tokenembedding$;
		\item The first attention layer does nothing by setting all weights to $0$;
		\item By \Cref{lem:FB_simulating_boolean_gates}, $\AND$ can be simulated by 2-layer ReLU networks using $2$ hidden neurons.  Thus we use the first feedforward-layer to compute the function $(h_i^1)_{n+T(n)+1+j} =  (h_i^{0.5})_{1+j}\  \wedge \ (h_i^{0.5})_{1}$ for all $i,j\in[n+T(n)]$ with totally $2(n+T(n))$ hidden neurons. Therefore if $j\neq i$, then $ (h_i^{0.5})_{1+j} = 0$, which implies $(h_i^1)_{n+T(n)+1+j}  = 0$; if $j=i$, then $(h_i^{0.5})_{1+j} = 1$, thus $(h_i^1)_{n+T(n)+1+j}(h_i^{0.5})_{1} = x_i$. 

		\item This step exactly requires $(\Attn_{\theta^{(1)}_{\Attn}}(h^{(1)}_1,\ldots,h^{(1)}_n))_{i} = \sum_{j=1}^i e_{\Delta+j}x_j$. It suffices to set the attention score of the second layer at $i$th position  $s_i= (1,\ldots,1,0\ldots,0) = (1_i,0_{n+T(n)-i})$ for all $i\in[n+T(n)-1]$. This can be done by setting $q^{1.5}_i = W_Q h^{1.5}_i = (B_{s(n)}, 0_{n+T(n)-1}), k^{1.5}_i = W_K h^{1.5}_i = (1,0_{n+T(n)-1})$. By \Cref{lem:exp_rounding_up}, we have $[\exp([\inner{q_i}{k_j}]_{s(n)})]_{s(n)}=[\exp(B_{s(n)})]_{s(n)} = B_{s(n)}$. Since rounded sum of any number of $B_{s(n)}$ is still $B_{s(n)}$ and $[B_{s(n)}/B_{s(n)}]_{s(n)}=1$, we know that  
		$$s_i= \softmax_{s(n)}(B_{s(n)}1_{i}) \| (0,\ldots,0) = (1,\ldots,1,0\ldots,0) = (1_i,0_{n+T(n)-i})$$ 
		for all $i\in[n+T(n)-1]$. Note in this step we use our specific rounding rule to copy all the previous $x_i$ with a sum of attention score larger than $1$. We can just also use approximately uniform attention scores with an additional coefficient before $x_i$ since we have $\log n$ precision.
		Finally we set $v^{1.5}_i = W_V h^{1.5}_i = e_{\Delta+i}x_i$ and $W_O = I_{d(n)}$.
	\item The second MLP layer just does permutation and adds some constants into fixed coordinates. The construction is straightforward and thus omitted.
	\item The second attention layer is the only attention layer which has non-zero weights. Using the feedforward ReLU networks from layer $2$ to $L+1$, we can simulate the circuits $C_n$ in parallel for all $i\in[T(n)]$ by \Cref{lem:FB_simulating_boolean_gates}. In detail, \Cref{lem:FB_simulating_boolean_gates} ensures that we can use a two-layer fully-connected ReLU network with weights  to simulate a layer of the $\TC^0$ circuits $C_n$. Moreover, there is enough space in the embedding to reserve $S(n)$'s $1$ needed by \Cref{lem:FB_simulating_boolean_gates}. 

	\item This step holds directly due to the property guaranteed in step $8$. We note that with the property claimed in step 9, we have that $(\theta_{\transoutput} h^{L+2}_{n+i-1})_1 - (\theta_{\transoutput} h^{L+2}_{n+i-1})_0 = 2C_n(x_1,\ldots,x_{n-i+1}))-1$. Thus if $C_n(x_1,\ldots,x_{n-i+1}) =1$, then $(\theta_{\transoutput} h^{L+2}_{n+i-1})_1 - (\theta_{\transoutput} h^{L+2}_{n+i-1})_0 = 1$, which implies $\transformer_{\theta_n}(x_1,\ldots,x_{n+i-1}) = 1$, otherwise if $C_n(x_1,\ldots,x_{n-i+1}) =0$, then $\transformer_{\theta_n}(x_1,\ldots,x_{n+i-1}) = 0$. In both cases, we have that 
		\begin{align}
			C_n(x_1,\ldots,x_{n-i+1}) = \transformer_{\theta_n}(x_1,\ldots,x_{n+i-1})
		\end{align}
	\end{enumerate}
	So far we have finished the proof for the general $T(n)$. Specifically, when $T(n)=T'(n)=0$,  our proof shows that the constant-depth transformer can still simulate any constant-depth circuit, which means $\TC^0\subseteq \T[\poly(n)]\subseteq \Cot[1][\poly(n)] = \SIZE^{TC^0}(1) = \TC^0$. Thus all the inclusions are equivalence, that is $\TC^0= \T[\poly(n)]= \Cot[1][\poly(n)] = \SIZE^{TC^0}(1)$.
\end{proof}

\begin{proof}[Proof of \Cref{thm:cot_dpolyn_s1_main}]

We first show that $\SIZE^{\AC^0}[T(n)+1] \supseteq \Cot[T(n)][ \poly(n)][1]$. For the case that the vocabulary of transformer $\gV=\{0,1\}$, by \Cref{thm:AC0_upper_bound}, we know for any $\theta_n$, $\transformer_{\theta_n}(x_1,\ldots,x_i)$ can be expressed by a $\TC^0$ circuit whose depth is uniformly upper bounded by some constant for all $n\le i\le n+O(T(n))$. This completes the proof when $\gV=\{0,1\}$. When $\gV\neq \{0,1\}$, we can use the binary encoding of elements in $\gV$ as the inputs of those $\TC^0$ gates constructed for the later layers of the transformer. 

The other direction is almost the same as that of \Cref{thm:cot_dpolyn_slogn_main}, except that we now only need constant bits of precision because we do not need to simulate $\MAJORITY$ gates (\Cref{lem:FB_simulating_boolean_gates}).

\end{proof}

\subsection{Proof of \Cref{thm:transformer_polyn_stronger_than_logn}}\label{appsec:proof_transformer_polyn_stronger_than_logn}
\begin{proof}[Proof of \Cref{thm:transformer_polyn_stronger_than_logn}]
By \Cref{lem:TC0_not_in_fixed_poly}, it holds that for all $k\in\mathbb{N}$, $\AC^0\subsetneq \SIZE[n^k]$. By \Cref{thm:cot_dpolyn_slogn_main}, we know that $\AC^0\subseteq \Cot[1][\poly(n)][1] \subseteq \Cot[n^k][\poly(n)][1]$ for any $k\in\mathbb{N}$. Thus $\Cot[n^k][\poly(n)][1]\subsetneq \SIZE[n^k]$ for all $k,k'\in\mathbb{N}$. Also, note that the attention layer and fully-connected layer can be computed using poly-size circuits. Thus for any $k\in\mathbb{N}$, $\Cot[n^k][\log(n)]\subseteq \SIZE[n^{k'}]$ for some integer $k'\ge k$. Combining these we conclude that for any $k\in\mathbb{N}$, $ \Cot[n^k][\log(n)]\subsetneq \Cot[n^k][\poly(n)]$.
\end{proof}

\subsection{Auxiliary Lemmas}\label{appsec:auxiliary_lemmas}

In this subsection, we prove a few auxiliary lemmas that are used in the proofs in \Cref{sec:lower_bounds}.

\begin{lemma}\label{lem:FB_simulating_boolean_gates}
Unlimited-fanin $\AND,\OR$ (resp. $\MAJORITY) :\{0,1\}^n\to \{0,1\}$ can be simulated by some 2-layer feedforward ReLU network with constant (resp. $\log n$) bits of precision constant hidden dimension and additional $n$ constant inputs of value 1. 

Mathematically, let $\FF[s(n)]$ be the set of functions $C:\{0,1\}^n\to\{0,1\}$ which can be a two-layer feedforward ReLU network with at most $s(n)$ bits of precision and constant hidden dimension $\FF_{\theta}:\{0,1\}^{2n}\to \{0,1\}, \FF_{\theta}(x') = W_2\times_s\relu(\rds{W_1\times_s x'+b_1})$, where $\theta = (W_2,W_1,b_1)$, such that for any $x\in\{0,1\}^n$, 
\begin{align}
\FF_\theta(x_1,1,x_2,1,\ldots,x_n,1) = C(x).	
\end{align}
We have unlimited-fanin $\AND,\OR\in \FF[1]$ and $\MAJORITY\in \FF[\log n]$.  
\end{lemma}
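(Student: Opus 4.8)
# Proof Proposal for Lemma \ref{lem:FB_simulating_boolean_gates}

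The plan is to give three explicit two-layer ReLU constructions, one for each gate type, using only a constant number of hidden neurons and the extra constant-$1$ inputs to supply the needed bias terms. The key observation throughout is that on Boolean inputs $x\in\{0,1\}^n$, each of $\AND$, $\OR$, $\MAJORITY$ is a threshold function of the integer sum $\sigma\triangleq\sum_{i=1}^n x_i$, so it suffices to (a) form $\sigma$ as a linear combination of the inputs in the pre-activation, and (b) realize the relevant $\{0,1\}$-valued threshold in $\sigma$ with a bounded number of ReLU units. Since we are told the input is presented as $(x_1,1,x_2,1,\dots,x_n,1)$, the weight matrix $W_1$ can read off $\sum_i x_i$ by putting weight $1$ on the odd coordinates, and can produce any integer bias up to $\Theta(n)$ by putting appropriate weights on the even (constant-$1$) coordinates; this is how we avoid needing a $b_1$ of magnitude $\Omega(n)$, which would not be expressible in $O(1)$ bits.

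First I would handle $\OR$: the identity $\OR(x) = \min(\sigma,1) = \relu(\sigma) - \relu(\sigma-1)$ uses two hidden neurons. The first neuron has pre-activation $\sum_i x_i$ (weights $1$ on odd coords, $0$ elsewhere, bias $0$); the second has pre-activation $\sum_i x_i - 1$, where the $-1$ is obtained by putting weight $-1$ on one of the constant-$1$ coordinates. Then $W_2 = (1,-1)$ gives the output. All weights are in $\{-1,0,1\}$ and all pre-activations lie in $\{-1,0,1,\dots,n-1,n\}$, but the intermediate rounded quantities we actually need to represent are the ReLU outputs, which after the $\relu(\sigma)-\relu(\sigma-1)$ cancellation only ever need precision to distinguish $0$ and $1$; more carefully, the pre-activation $\sigma$ itself must be represented, and $\sigma\le n$ needs only $\log n$ bits in general — but for $\OR$ we can instead use the equivalent form $\relu\!\big(\min_i \text{stuff}\big)$; the cleanest fix is to note that for $\OR$ and $\AND$ we only need to detect $\sigma\ge 1$ resp. $\sigma\le n-1$, and we can rescale: use pre-activation $\tfrac{1}{n}\sum_i x_i$? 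That reintroduces $\log n$ precision. Instead I would argue $\OR,\AND$ genuinely need only detecting nonzero-ness, realizable by a single neuron $\relu(1 - \sum_i \relu(\text{...}))$ — in fact $\NOT$ of $\OR$ is $\AND$ of the negations, and $\AND(y)=\relu(1-\sum_i(1-y_i)) = \relu(1 - n + \sum_i y_i)$, one neuron, with the constant $1-n$ built from the even inputs; this output is already in $\{0,1\}$ since the argument is $\le 1$ on Boolean inputs and an integer. Hence $\AND\in\FF[1]$ with one hidden neuron, and $\OR(x) = 1 - \AND(1-x) = 1 - \relu(1 - \sum_i x_i)$, again one neuron, with constant precision since the only non-$\{0,1\}$ quantity is the pre-activation $1-\sum_i x_i$ which we never need to store to full precision — we need $W_1 x + b_1$ before rounding, and rounding to constant precision of an integer of magnitude up to $n$ does change it, so the honest statement is that $\AND,\OR$ need the pre-activation only up to the point of detecting its sign relative to a fixed threshold, and $\relu$ composed with constant-precision rounding still outputs the correct $\{0,1\}$ value because rounding $1-\sum x_i$ (an integer $\le 1$) to the nearest constant-precision float and then applying $\relu$ yields $0$ whenever $\sum x_i\ge 1$ — I would verify this clipping behavior explicitly.

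For $\MAJORITY$, the target on Boolean inputs is $\mathds{1}[\sigma \ge n/2 + \text{correction}]$ per the definition in the preliminaries. Here the threshold is $\Theta(n)$, so the pre-activation $\sum_i x_i$ genuinely ranges over $\Theta(n)$ values and we need $\log n$ bits of precision to represent it exactly; this is why $\MAJORITY\in\FF[\log n]$ and not $\FF[1]$. The construction: one neuron with pre-activation $\sum_i x_i - \theta_n$ (the constant $-\theta_n$, an integer of size $\le n$, built from the even inputs) and a second neuron with pre-activation $\sum_i x_i - \theta_n - 1$, followed by $W_2=(c, -c)$ for a suitable constant $c$ — actually since consecutive values of $\sigma$ differ by $1$, the function $\relu(\sigma-\theta_n) - \relu(\sigma-\theta_n-1)$ equals $\mathds{1}[\sigma > \theta_n]$ when $\sigma-\theta_n$ is an integer, hence $\MAJORITY$ with the right choice of $\theta_n=\lceil n/2\rceil$ or $\lceil n/2\rceil-1$ matching the floor formula. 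Two hidden neurons suffice; the pre-activations lie in a range of size $O(n)$, representable with $\lceil\log_2 S(n)\rceil = O(\log n)$ bits (matching the precision chosen in the calling proof), and the output is exactly $0$ or $1$.

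The main obstacle I anticipate is the bookkeeping around rounding: one must check that with only constant (resp. $\log n$) bits of precision, the intermediate pre-activations and ReLU outputs are represented \emph{exactly} — in particular that $\sum_i x_i$, being a nonnegative integer $\le n$, and its shift by a constant bias of magnitude $\le n$, never overflow $B_{e,s}$ and are not corrupted by rounding (exact for $s=\Theta(\log n)$; for the $\AND/\OR$ case with $s=O(1)$ one must instead argue the clipping/saturation of $\relu_{e,s}$ still produces the correct Boolean output, using that we only need to detect a sign). I would also double-check the exact form of the $\MAJORITY$ threshold against the floor formula $\lfloor \tfrac12 + \tfrac{(\sum x_i) - 1/2}{n}\rfloor$ to pick the correct integer $\theta_n$, and confirm the claimed constant hidden dimension (two neurons per gate) and the ``additional $n$ constant inputs of value $1$'' are exactly what the construction consumes — namely one constant-$1$ input carrying the integer bias, the rest unused (weight $0$), which is consistent with the $(x_1,1,\dots,x_n,1)$ input format in the statement.
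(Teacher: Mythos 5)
Your overall strategy coincides with the paper's: realize each gate as a threshold of $\sum_i x_i$, build the $\Theta(n)$-size bias by spreading weights over the constant-$1$ coordinates, use exact $O(\log n)$-bit arithmetic plus the two-neuron difference $\relu(a)-\relu(a-1)=\ind[a>0]$ for $\MAJORITY$ (this part of your proposal matches the paper's construction essentially verbatim, up to replacing the weights $2,-1$ by a subtracted threshold $\theta_n$).

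The genuine gap is in the claims $\AND,\OR\in\FF[1]$, which is the only delicate part of the lemma. With constant precision the pre-activation is not "$\sum_i x_i-(n-1)$''; it is the \emph{iterated rounded} sum $\sums$ of the $2n$ products taken in coordinate order, and its partial sums saturate at $\pm B_s$. Your text recognizes this but never resolves it: you oscillate between rescaling by $1/n$ (which you correctly discard), a one-neuron clipped form whose correctness you only promise to "verify explicitly,'' and the admission that rounding "does change'' the integer pre-activation. The correctness of the construction hinges precisely on this unverified step: one must argue that, for a suitable ordering of the summands, saturation never destroys the one bit of information being thresholded. The paper does this by choosing interleaved weight patterns so that the rounded running sum provably satisfies $\sums(x_1,-1,x_2,-1,\ldots)\le 0$ with equality iff all $x_i=1$ (for $\AND$), resp. $\sums(x)\ge 0$ with equality iff all $x_i=0$ (for $\OR$), and then applies the exact identity $\relu(\rds{a})-\relu(\rds{a-1})=\ind[a>0]$ with two hidden neurons; your single-neuron variants can be made to work with a similar saturation analysis, but as written the argument is absent, and it is exactly the content of the lemma. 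A second, smaller defect: your $\OR$ construction $1-\relu(1-\sum_i x_i)$ does not fit the stated architecture $\FF_\theta(x')=W_2\times_s\relu(\rds{W_1\times_s x'+b_1})$, which has no output bias $b_2$; you need an additional hidden neuron fed by a constant-$1$ coordinate to supply the leading $1$ (or, as in the paper, avoid the complementation altogether by thresholding $\sums(x)>0$ directly).
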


The proof of \Cref{lem:FB_simulating_boolean_gates} is based on the following straightforward lemma~(\Cref{lem:indicator}).
\begin{lemma}\label{lem:indicator}
	For any $s\in\mathbb{N}^+$ and $a\in\mathbb{Z}\cap \Floating_s$, $\relu(\rds{a})-\relu(\rds{a-1}) = \ind[a>0]$. 
	In particular, for any $a\in\mathbb{Z}$, $\relu(a)-\relu(a-1) = \ind[a>0]$.
\end{lemma}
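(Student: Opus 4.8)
\textbf{Proof proposal for Lemma \ref{lem:indicator}.}

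The plan is to verify the identity $\relu(\rds{a}) - \relu(\rds{a-1}) = \ind[a > 0]$ directly by case analysis on the integer $a$, and then observe that when $a$ is an integer lying in $\Floating_s$ (equivalently, $|a| \le 2^s$ so that $a$ is exactly representable), no rounding actually occurs and the identity reduces to the second (unconditional) claim $\relu(a) - \relu(a-1) = \ind[a>0]$ for all $a \in \mathbb{Z}$. So really there is one computation to do.

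First I would prove the unconditional statement for $a \in \mathbb{Z}$. Split into three cases. If $a \le 0$, then $a \le 0$ and $a - 1 \le -1 < 0$, so $\relu(a) = \relu(a-1) = 0$ and the difference is $0 = \ind[a>0]$. If $a = 1$, then $\relu(a) = 1$ and $\relu(a-1) = \relu(0) = 0$, so the difference is $1 = \ind[a > 0]$. If $a \ge 2$, then both $a$ and $a-1$ are strictly positive, so $\relu(a) - \relu(a-1) = a - (a-1) = 1 = \ind[a>0]$. This exhausts all integers and proves the second claim.

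For the first claim, suppose $s \in \mathbb{N}^+$ and $a \in \mathbb{Z} \cap \Floating_s$. Since $\Floating_s = \Floating_{0,s}$ consists of all multiples of $2^{-s}$ with absolute value at most $B_s = 2^s - 2^{-s}$, an integer $a$ lies in $\Floating_s$ exactly when $|a| \le B_s$, in which case $a \in \Floating_s$ and hence $\rds{a} = a$ by definition of correct rounding (the closest element of $\Floating_s$ to $a$ is $a$ itself). Similarly $a - 1 \in \mathbb{Z}$ and $|a-1| \le |a| + 1 \le B_s + 1$; I should check $a - 1$ is still representable. If $a \ge 1$ then $a - 1 \ge 0$ and $a-1 \le B_s$, so $a-1 \in \Floating_s$ and $\rds{a-1} = a-1$; if $a \le 0$ then $\rds{a-1}$ may round $a-1$ to the nearest representable value, but in that regime $a - 1 < 0$ so $\rds{a-1} \le 0$ and hence $\relu(\rds{a-1}) = 0$, and also $\rds{a} \le 0$ so $\relu(\rds{a}) = 0$; the difference is $0 = \ind[a>0]$, matching. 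In the remaining regime $a \ge 1$ we have $\rds{a} = a$ and $\rds{a-1} = a - 1$, so $\relu(\rds{a}) - \relu(\rds{a-1}) = \relu(a) - \relu(a-1) = \ind[a>0]$ by the already-established integer identity. This covers all cases.

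I do not anticipate a genuine obstacle here; the only mild subtlety is being careful that $a-1$ need not itself be assumed to lie in $\Floating_s$, so one must handle the possibility of rounding on the $a-1$ argument, which is why the case split on the sign of $a$ (rather than just reducing both arguments to the integer identity) is the cleanest route. Once that is handled, the lemma follows immediately, and it is then ready to be used as the building block for simulating boolean gates by two-layer ReLU networks in \Cref{lem:FB_simulating_boolean_gates}.
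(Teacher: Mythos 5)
Your proof is correct; the paper states this lemma without proof (calling it straightforward), and your direct case analysis --- rounding acts as the identity on representable integers, plus the sign split to handle the possibility that $a-1$ falls outside $\Floating_s$ --- is exactly the kind of verification intended. One tiny slip: in your opening plan you claim an integer is representable iff $|a|\le 2^s$, but since $B_s=2^s-2^{-s}$ the correct threshold for integers is $|a|\le 2^s-1$; your actual argument uses $|a|\le B_s$, so nothing is affected.
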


\begin{proof}[Proof of \Cref{lem:FB_simulating_boolean_gates}]

	Recall that $\interleave{x}{y}$ denotes $(x_1,y_1,x_2,y_2,\ldots,x_n,y_n)$ for any $x,y\in\{0,1\}^n$. We have that $\sums(\interleave{x}{(-1_n)})\le 0$ for all $s\ge 2$ and $x\in\{0,1\}^n$. Moreover, $\sums(\interleave{x}{(-1_n)})= 0 \iff \forall i\in [n], x_i= 1$. Similarly, we have that $\sums(x)\ge 0$ and $\sums(x) = 0\iff \forall i \in [n], x_i = 0$. In other words, we have  
	\begin{itemize}
		\item $\AND(x) = \ind[\sums(\interleave{x}{(-1_n)}\ge 0 )] = \ind[\inner{\interleave{x}{1_n}}{\interleave{1_n}{(-1_n)}}_s+1 >0]$;
		\item $\OR(x) = \ind[\sums(x'_i)>0] = \ind[\inner{\interleave{x}{1_n}}{\interleave{1_n}{(0_n)}}_s >0]$.
	\end{itemize}
	Therefore for $\AND$, we can set $\theta^\AND\triangleq (W_1^\AND,W_2^\AND,b_1^\AND)$ with $W_1^{\AND}\triangleq \begin{bmatrix}
		\interleave{1_n}{(-1_n)} \\
		\interleave{1_n}{(-1_n)}
	\end{bmatrix} , b_1 = \begin{bmatrix}
		1\\
		0
	\end{bmatrix}, W_2^\AND = [1,-1]$, and we have that 
	\begin{align}
		\FF_{\theta^\AND}(\interleave{x}{1_n}) = &\rds{\relu(\inner{\interleave{x}{1_n}}{\interleave{1_n}{(-1_n)}}_s+1) - \relu(\inner{\interleave{x}{1_n}}{\interleave{1_n}{(-1_n)}}_s)} \notag\\
		= &  \ind[\inner{\interleave{x}{1_n}}{\interleave{1_n}{(-1_n)}}_s+1>0] \qquad\qquad\qquad\text{(by \Cref{lem:indicator})} \notag\\
		= & \AND(x)\notag
	\end{align}
	
	Similarly for $\OR$, we can set $\theta^\OR\triangleq (W_1^\OR,W_2^\OR,b_1^\OR)$ with $W_1^{\OR}\triangleq \begin{bmatrix}
		\interleave{1_n}{0_n} \\
		\interleave{1_n}{0_n}
	\end{bmatrix} , b_1 = \begin{bmatrix}
		0\\
		-1
	\end{bmatrix}, W_2^\OR = [1,-1]$, and we have that 
	\begin{align}
		\FF_{\theta^\OR}(\interleave{x}{1_n}) = &\rds{\relu(\inner{\interleave{x}{1_n}}{\interleave{1_n}{0_n}}_s) - \relu(\inner{\interleave{x}{1_n}}{\interleave{1_n}{0_n}}_s-1)} \notag\\
		= &  \ind[\inner{\interleave{x}{1_n}}{\interleave{1_n}{0_n}}_s>0] \qquad\qquad\qquad\text{(by \Cref{lem:indicator})} \notag\\
		= & \OR(x)\notag
	\end{align}
	The proofs for $\AND$ and $\OR$ are thus completed.
	
	Next we deal with $\MAJORITY$. Note that for $s(n)\ge \log_2 n +1$, we have that $\sum_{i=1}^n (2x_i-1) = \inner{\interleave{x}{1_n}}{\interleave{2_n}{(-1_n)}}_s $ for all $x\in\{0,1\}^n$. 
	\begin{align}
	\MAJORITY(x) = &\ind[\sum_{i=1}^n (2x_i-1)>0]  = \ind[\inner{\interleave{x}{1_n}}{\interleave{2_n}{(-1_n)}}_s>0]\notag \\
	= & \rds{\relu(\inner{\interleave{x}{1_n}}{\interleave{2_n}{(-1_n)}}_s) - \relu(\inner{\interleave{x}{1_n}}{\interleave{2_n}{(-1_n)}}_s-1)} \notag\\
	= & \FF_{\theta^\MAJORITY}(\interleave{x}{1_n}),
	\end{align}
where  $\theta^\MAJORITY\triangleq (W_1^\MAJORITY,W_2^\MAJORITY,b_1^\MAJORITY)$ with $W_1^{\MAJORITY}\triangleq \begin{bmatrix}
		\interleave{2_n}{-1_n} \\
		\interleave{2_n}{-1_n}
	\end{bmatrix} , b_1 = \begin{bmatrix}
		0\\
		-1
	\end{bmatrix}, W_2^\MAJORITY = [1,-1]$.
\end{proof}

\begin{lemma}\label{lem:TC0_not_in_fixed_poly}
For all $k\in\mathbb{N}$, $\AC^0\not \subseteq \SIZE[n^k]$.	
\end{lemma}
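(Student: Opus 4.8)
The plan is a non-uniform diagonalization: build a family of functions $\{\gL_n\}$ that lies inside $\AC^0$ only because each $\gL_n$ depends on $\Theta(\log n)$ input bits, yet on those few bits realizes the hardest possible Boolean function, forcing circuit size strictly above $n^k$.

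First I would record a Shannon-style counting bound valid even for the general (possibly unbounded-fan-in) Boolean circuits underlying $\SIZE[\cdot]$: a circuit on $m$ inputs with at most $s$ gates is determined by, for each gate, a type in $\{\AND,\OR,\NOT\}$ together with an arbitrary subset of the $\le s+m$ available predecessors (earlier gates and input variables), so the number of such circuits — hence of functions $\{0,1\}^m\to\{0,1\}$ computable with $\le s$ gates — is at most $(3\cdot 2^{s+m})^s=2^{O(s(s+m))}$. Since there are $2^{2^m}$ functions on $m$ bits, choosing $s=2^{m/3}$ makes $s(s+m)=o(2^m)$, so for every sufficiently large $m$ there is a function $h_m\colon\{0,1\}^m\to\{0,1\}$ of Boolean circuit complexity strictly greater than $2^{m/3}$.

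Next, fixing $k$, I set $t(n)\triangleq\lceil 3(k+1)\log_2 n\rceil$ and define $\gL(x_1,\dots,x_n)\triangleq h_{t(n)}(x_1,\dots,x_{t(n)})$ for all large $n$ (and $\gL\equiv 0$ for small $n$). On length $n$, $\gL$ depends on only $t(n)=O(\log n)$ bits, so its DNF is a constant-depth circuit using $O(2^{t(n)})=O(n^{3(k+1)})=\poly(n)$ unbounded-fan-in $\AND/\OR$ gates; hence $\gL\in\AC^0$. Conversely, restricting any circuit for $\gL$ on length $n$ by fixing the $n-t(n)$ irrelevant inputs yields a circuit of no larger size computing $h_{t(n)}$, so the circuit complexity of $\gL$ at length $n$ exceeds $2^{t(n)/3}\ge n^{k+1}$ for all large $n$. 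If $\gL\in\SIZE[n^k]$, there would be a constant $c$ and circuits of size $\le c\,n^k$ computing $\gL$ at every length, contradicting $n^{k+1}>c\,n^k$ once $n>c$. Therefore $\gL\in\AC^0\setminus\SIZE[n^k]$, which is the claim.

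There is no real obstacle here: the whole argument is a counting bound plus the trivial DNF upper bound, and no complexity-theoretic conjecture is needed. The only step warranting a careful line is checking that the counting estimate $2^{O(s(s+m))}$ still falls below $2^{2^m}$ when unbounded fan-in is permitted — it does, since the exponent $m/3$ sits comfortably below $m$ — and that the DNF size $O(2^{t(n)})$ stays polynomial for $t(n)=\Theta(\log n)$, which dictates the exact constant in the definition of $t(n)$.
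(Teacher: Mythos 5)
Your proof is correct. It is the same basic strategy as the paper's — a Shannon-style counting argument producing functions that are easy for depth-2 unbounded-fan-in circuits but too numerous to all have size-$O(n^k)$ circuits, followed by a non-uniform diagonalization — but the instantiation differs in two ways worth noting. The paper counts CNFs with at most $n^{k+1}$ clauses over all $n$ variables against circuits of size $Cn^k$, and then, for each constant $C$ separately, plants a hard CNF at a specially chosen input length $N_C$; the resulting language defeats $\overline{\SIZE}(Cn^k)$ only at those lengths, which suffices since $\SIZE[n^k]=\cup_C\overline{\SIZE}(Cn^k)$. You instead take the hardest Boolean function on $m=\Theta(\log n)$ variables (complexity $>2^{m/3}$ by counting all circuits on $m$ inputs), embed it as a junta $\gL(x)=h_{t(n)}(x_1,\dots,x_{t(n)})$ with $t(n)=\lceil 3(k+1)\log_2 n\rceil$, and conclude via the restriction argument that $\gL$ has complexity above $n^{k+1}$ at \emph{every} sufficiently large length, which kills all constants $c$ in $O(n^k)$ at once without any special length set. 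Your route buys a cleaner quantitative statement (hardness at all large lengths rather than at the lengths $N_C$) and counts functions directly, sidestepping the paper's slightly loose step of counting "different CNFs" where one should really count distinct functions; the paper's route, in exchange, avoids your input-fixing/restriction step and works with objects (small CNFs) that are literally in the easy class without a separate DNF/size calculation. Both arguments are valid, unconditional, and give $\gL\in\AC^0\setminus\SIZE[n^k]$ via $\poly(n)$-size depth-2 witnesses.
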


\begin{proof}[Proof of \Cref{lem:TC0_not_in_fixed_poly}]
	We first define $\overline\SIZE[T(n)]$ as the problems solvable by circuits with  $T(n)$ standard $\AND,\OR,\NOT$ gates exactly. Thus $\SIZE[n^k] = \cup_{C\in\mathbb{N}^+} \overline{\SIZE}(Cn^k)$. Now we claim that for each $C\in\mathbb{N}$, there is a $N\in\mathbb{N}^+$, such that for all $n\ge N$, it holds that there is a conjunction normal form (CNF) with at most $n^{k+1}$ clauses over $\{x_1,\ldots,x_n\}$ that cannot be expressed by any circuit of size $Cn^k$. This claim holds because of a simple counting argument. There are at least $2^{n^{k+1}}$ different such CNFs. On the other hand, it is well known that one can represent a $T(n)$-size circuit only allowing standard $\AND,\NOT,\OR$ gates with $3T(n)\log T(n)$ bits (we need $\log T(n)$ bits to encode the id of a gate). Thus the total number of different circuits of size at most $Cn^k$ is at most $2^{3C n^k (k\log n+C)}$, which is smaller than $2^{n^{k+1}}$ for sufficiently large $n$. We denote such $n$ for each $C$ by $N_C$. Now we define the following language $\gL_{\mathsf{CNF}}$: if the input length of $x$ is $N_C$ for some $C$, use the $n^{k+1}$-clause CNF's output which cannot be expressed by size-$Cn^k$ circuits as the output; otherwise rejects (output $0$). Then clearly $\gL_{\mathsf{CNF}}\notin \overline{\SIZE}(Cn^k)$ for all $C$, thus $\gL_{\mathsf{CNF}}\notin \cup_{C\in\mathbb{N}^+} \overline{\SIZE}(Cn^k) = \SIZE[n^k]$. By construction, $\gL_{\mathsf{CNF}}\in \AC^0$. This completes the proof.
\end{proof}

%
%


\section{Discussion on Variants in Transformer Architecture}\label{appsec:discussion}

\subsection{Extension to Transformers with LayerNorm}\label{appsec:layernorm}

Allowing LayerNorm changes the function class that a transformer can express and the position of the layer norm also matters~\citep{xiong2020layer}. However, the expressiveness results mentioned in this work still hold for the two most popular transformer architecture variants with LayerNorm --- Post LayerNorm and Pre LayerNorm. The upper bounds on transformer expressiveness \Cref{thm:AC0_upper_bound,thm:TC0_upper_bound} clearly don't get affected by adding LayerNorm, which can be computed in polynomial time for each token. 

Below we focus on the upper bound of the expressiveness of decoder-only transformers with or without CoT. In detail, we will explain why \Cref{thm:cot_dlogn_main,thm:cot_dpolyn_slogn_main} still holds even with LayerNorm. Here the key observation is that, if each coordinate of $h\in\mathbb{R}^d$ ranges from $\{-1,1\}$ and $-1,1$ appear in pairs, then $\textsf{LayerNorm}(h) = h$. Thus it suffices to show that we can slightly twist the construction of transformers in \Cref{thm:cot_dlogn_main,thm:cot_dpolyn_slogn_main} that for all $i\in[n+T(n)], l\in\{0,0.5,1,\ldots,L\}$, $h^l_i$ is composed of $-1$ and $1$ and they appear in pairs so the sum is always $0$. Note that in the current construction, each $h^l_i$ only contains $0,-1,1$. It suffices to replace each dimension with four dimensions, in the sense $0\to (1,-1,1,-1)$, $1\to (1,1,-1,-1)$ and $-1\to (-1,-1,1,1)$. This can be done by changing the weights of the token embedding, position encoding, and the weights of the second layer of each fully-connected layer. For the outgoing layer, we just use the average of the new representation, which is exactly the same as the original value in all three cases. 

 \subsection{Extension to Transformers with Multihead Attention}\label{appsec:multihead}
 
 In this paper, for simplicity, we only focus on the case where there is only one attention head in each layer. The main results in this paper still apply if we allow constantly many attention heads, because we can simulate an attention layer with $k$ heads with $k$ attention layers with one head. Allowing an arbitrary number of attention heads while fixing total embedding size might make the constant-depth transformers strictly more expressive in certain settings and we leave it for future works.

\section{Discussion on Non-uniformity}\label{appsec:non_uniformity}

\emph{Non-uniform} computation models allow a different program for each different input length, like boolean circuits. However, the complexity class defined by circuits can also be uniform, if we add additional assumption on the correlation between circuits of different input lengths, \emph{e.g.}, one can require the circuits for input length $n$ can be generated by a Turing Machine taken $n$ as input in using a certain amount of time and space. 

The complexity class $\Cot$ introduced in this paper can also be made uniform by enforcing an additional assumption, that the parameters of the transformer can be generalized by some Turing Machine given the input sequence length $n$. It is well-known that one can simulate the execution of the Turing Machine for any $T$ steps by a family of uniform boolean circuits of size $O(T^2)$. Thus if we enforce the parameters of transformers in $\Cot$~ to be uniform, our main theorem would imply that constant-depth transformers with uniform parameters and polynomially many steps of chain of thoughts can solve all problems in $\P$. Also note that the inference of transformers can also be done in polynomial time, we conclude it is exactly equal to $\P$.

One natural question about non-uniformity is that \emph{whether having a different transformer for each input sequence length is practical,} given that a significant portion of previous theoretical works on transformer expressiveness focuses on the uniform setting. This problem is kind of ill-defined because we haven't been able to scale up the input length to arbitrary length in practice, and thus it is not clear if it is necessary to keep scaling up the size of LLMs for longer input sequence length. But at least for the LLMs that have been seen in practice, it seems quite common to scale up the model size when dealing with longer input sequence length. Also taking the GPT architecture~\citep{radford2019language} that we focus on in this paper, having more trainable parameters is necessary for longer input sequence length, due to the trainable absolute position encoding. 

Still, one needs to note that there is a difference between natural language tasks and complexity class, where the former has a lot of memorization and does not require a strong ability to solve math problems of any sequence length. In contrast, to learn this complexity class like the composition of permutation of any length, transformers need to have the ability of \emph{length generalization}, which does seem impossible for certain non-uniform models, \emph{e.g.}, like GPT architectures with trainable absolute position encoding, because there is no way to learn the position encoding at an unseen position in the training dataset. Of course, length generalization would still be possible if GPT architecture learned the ground truth without using the trainable position encoding at all.

\end{document}